\newtheorem{assumption}[theorem]{Assumption}
\newcommand{\E}{\mathbb{E}}
\definecolor{lightgray}{rgb}{0.83, 0.83, 0.83}
\newcommand{\kl}{D_{\text{KL}}}
\newcommand*{\QEDA}{\null\nobreak\hfill\ensuremath{\blacksquare}}%
\title[Regularized DeepIV with Model Selection]{Regularized DeepIV with Model Selection} 
\thanks{Equal contribution} \Email{zl9045@princeton.edu}\\
\thanks{Equal contribution} \Email{huilan@stanford.edu}\\
\thanks{This work was done at Cornell University} \Email{ueharam1@gene.com}\\
\begin{document}

\maketitle

\begin{abstract}%
In this paper, we study nonparametric estimation of instrumental variable (IV) regressions. While recent advancements in machine learning have introduced flexible methods for IV estimation, they often encounter one or more of the following limitations: (1) restricting the IV regression to be uniquely identified; (2) requiring minimax computation oracle, which is highly unstable in practice; (3) absence of model selection procedure.  In this paper, we present the first method and analysis that can avoid all three limitations, while still enabling general function approximation. Specifically, we propose a minimax-oracle-free method called Regularized DeepIV (RDIV) regression that can converge to the least-norm IV solution. Our method consists of two stages: first, we learn the conditional distribution of covariates, and by utilizing the learned distribution,  we learn the estimator by minimizing a Tikhonov-regularized loss function. We further show that our method allows model selection procedures that can achieve the oracle rates in the misspecified regime. When extended to an iterative estimator, our method matches the current state-of-the-art convergence rate. Our method is a Tikhonov regularized variant of the popular DeepIV method with a non-parametric MLE first-stage estimator, and our results provide the first rigorous guarantees for this empirically used method, showcasing the importance of regularization which was absent from the original work.
\end{abstract}

\begin{keywords}%
  Nonparametric regression, causal inference, instrumental variable, general function approximation%
\end{keywords}

\vspace{-2mm}
\section{Introduction}\label{sec:intro}
\vspace{-2mm}

Instrumental variable (IV) estimation is an important problem in various fields, such as causal inference \citep{angrist1995identification, newey2003instrumental,deaner2018proxy,cui2020semiparametric,kallus2021causal,kallus2022causal}, missing data problems \citep{miao2018confounding,wang2014instrumental}, dynamic discrete choice models \cite{kalouptsidi2021linear}  and reinforcement learning \citep{liao2021instrumental,uehara2022future,uehara2022provably,shi2022minimax,wang2021provably,yu2022strategic}. 

In this paper, we focus on nonparametric IV (NPIV) regression \citep{newey2003instrumental}.  NPIV concerns three random variables $X \in \RR^d$ (covariate), $Y \in \RR$ (outcome variable), and $Z \in \RR^d$ (instrumental variables). 
We are interested in finding a solution $h_0$ of the following conditional moment equation \citep{dikkala2020minimax, chernozhukov2019inference}:   $$
\E[Y - h(X)|Z] = 0.
$$
This is equivalently written as $\Tcal f=r_0$ where $\Tcal: L_2(X)\ni f(X) \mapsto \E[f(X)|Z]\in L_2(Z)$ and $r_0(Z) = \E[Y|Z]$ by denoting $L_2(X), L_2(Z)$ to be the $L_2$ space defined on $X$ and $Z$ with respect to the underlying distribution. Both the operator $\Tcal$ and $\E[Y|Z]$ remain unknown. Hence, we aim to solve $\Tcal f=r_0$ by harnessing an identically independent distributed (i.i.d.) dataset $\{X_i, Y_i, Z_i\}_{i\in[n]}$. 

There has been a surge in interest in NPIV regressions that try to integrate general function approximation such as deep neural networks beyond classical nonparametric models \citep{hartford2017deep,singh2019kernel,xu2021deep,zhang2023instrumental,dikkala2020minimax,bennett2020variational,bennett2023minimax,bennett2023source,kallus2022causal,singh2020kernel}. 
Despite these extensive efforts, existing approaches encounter several challenges. The first challenge is the ill-posedness of the inverse problem. Many existing works \citep{liao2020provably,newey2003instrumental,florens2011identification,kato2021learning} require that the NPIV solution $h_0$ is unique, and further impose quantitative bounds on measures of ill-posedness.
However, it is known that the uniqueness assumption is easily violated in practical scenarios, such as weak IV \citep{andrews2005inference, andrews2019weak} or proximal causal inference \citep{kallus2021causal}.
The second challenge involves the reliance on minimax optimization oracles in many methods \citep{bennett2023minimax,dikkala2020minimax,liao2020provably,bennett2023source,zhang2023instrumental}, which results in minimax non-convex non-concave optimization when invoking deep neural networks. However, currently, such an optimization can be notoriously unstable and may fail to converge \citep{lin2020near,jin2020local,lin2020gradient,diakonikolas2021efficient,razaviyayn2020nonconvex}. Instead, our approach seeks to address this challenge by proposing a computationally efficient estimator that relies on standard supervised learning oracles rather than minimax oracles. The third challenge is the absence of clear procedures for model selection in existing works \citep{xu2021deep,zhang2023instrumental,cui2020semiparametric,hartford2017deep}. This issue is problematic, because model selection, including techniques like cross-validation, has played a pivotal role in the practical success of machine learning algorithms \citep{bartlett2002model,gold2003model,guyon2010model,cawley2010over,raschka2018model,emmert2019evaluation,mcallester2003pac}. Model selection becomes essential particularly in scenarios where the true NPIV solution $h_0$ lies outside the chosen function classes optimized by the algorithm, which has been seldom explored in prior works. 

\begin{table*}[!t]
    \centering
      \caption{Summary of IV regression literature with general function approximation such as neural networks. ``Model Selection'' means allowing model selection methods. ``No Minimax'' means no need of minimax oracle.  ``No Uniquness'' means unique solution is not assumed.}
    \begin{tabular}{cccccc}
    \toprule
         & Model Selection & No Minimax  & No Uniqueness & RMSE rates\\ \midrule
   \citet{hartford2017deep}      &   & \checkmark  &  & \\  
   \citet{DikkalaNishanth2020MEoC} &  & & & \\ 
   \citet{liao2020provably} &  & & & \checkmark  \\ 
   \citet{xu2021deep} &  & \checkmark  & \\ 
\citet{bennett2023minimax} &  & & \checkmark & \checkmark \\ 
   \citet{bennett2023source} &  & & \checkmark & \checkmark \\ 
\rowcolor{lightgray}
 Ours & \checkmark  &  \checkmark  & \checkmark  & \checkmark \\
   \bottomrule
    \end{tabular}

    \label{tab:all_summary}
\end{table*}

To address aforementioned challenges, we propose a two-stage method, which we refer to as the \emph{Regularized DeepIV (RDIV)}. This approach consists of two steps. First, we learn the operator $\Tcal$ by maximum likelihood estimation (MLE). Secondly, we obtain an estimator for $h_0$ by solving a loss incorporating the learned $\Tcal$ and Tikhonov regularization \citep{ito2014inverse} to handle scenarios where solutions of the conditional moment constraint are nonunique. While our method can be viewed as a regularized variant of the DeepIV method of \cite{hartford2017deep} with a non-parametric MLE first-stage, no prior theoretical convergence guarantees exist for the DeepIV method. We show that our estimators can converge to the least norm IV solution (even if solutions are nonunique) and derive its $L_2$ error rate guarantee based on critical radius. Subsequently, we introduce model selection procedures for our estimators. Particularly, we provide theoretical guarantees for model selection via out-of-sample validation approaches, and show an oracle result in our context. Finally, we further illustrate that our method can be easily generalized to an iterative estimator that more effectively leverages the well-posedness of $h_0$. 


Our contribution is to propose the first estimator for NPIV that (a) operates in the absence of the uniqueness assumption, (b) does not rely on the minimax computational oracle, and (c) allows for model selection. Subsequently, we demonstrate that our estimator can be extended to an iterative estimator, which achieves a state-of-the-art convergence rate in terms of $L_2$ error analogous to  \citet{bennett2023source}, while \citet{bennett2023source} requires a minimax computational oracle and does not permit us to perform model selection. Therefore, our estimator can be seen as an estimator with a strong theoretical guarantee due to the property (a) while it is practical due to properties (b) and (c). Notably, none of the existing works can enjoy such a guarantee, as shown in Table~\ref{tab:all_summary}. 




\vspace{-3mm}
\section{Related Works}
\vspace{-3mm}

\paragraph{Nonparametric IV problem.} 
Nonparametric IV estimation has been extensively explored in past decades. Such estimation is tough to solve even when both the linear operator $\Tcal$ and the response $r_0$ are known, known as ill-posedness. The ill-posedness often refers to the presence of one or more
of the following characteristics: (1) the absence of solutions, (2) the existence of multiple solutions, and (3) the discontinuity of the inverse of operator $\Tcal$. Many traditional nonparametric estimators have been proposed to address these challenges, such as series-based estimators \citep{florens2011identification,ai2003efficient,chen2021robust,chen2012estimation,darolles2011nonparametric} and kernel-based estimators \citep{hall2005nonparametric,horowitz2007asymptotic,singh2019kernel}. However, these methods cannot directly accommodate modern machine-learning techniques like neural networks.

Recently, there has been growing interest in the application of general function approximation techniques, such
as deep neural networks and random forests, to IV problems in a unified manner. Among those methods, \citet{bennett2020variational,dikkala2020minimax,lewis2018adversarial, liao2020provably,zhang2023instrumental} reformulate the conditional moment constraint into a minimax optimization and use its solution as the estimator. Notably, \citet{liao2020provably,bennett2023source,bennett2023minimax} establish $L_2$ convergence by linking minimax optimization with Tikhonov regularization under the assumption of the source condition. Moreover, \citet{liao2020batch} assumes uniqueness of solution $h_0$. \cite{dikkala2020minimax,lewis2018adversarial} provide a guarantee for the projected MSE without further assumptions. However, they could not guarantee the convergence rate in strong $L_2$ metric when multiple solutions to conditional moment constraint exist. Furthermore, these methods require a computation oracle for minimax optimization, which further makes model selection challenging. 
In contrast, our method does not require computational oracles and enables model selection with statistical guarantees. 

Several existing works eschew the need for minimax optimization oracles \citep{hartford2017deep, xu2021deep}. However, all these works do not provide finite sample guarantee or model selection. For example, as the most related work, DeepIV \citep{hartford2017deep} introduces a similar loss function to us. However, it 
lacks an explicit regularization term, which results in the lack of theoretical guarantee and the lack of guarantee for model selection. As another work, \cite{xu2021deep} extends the two-stage kernel algorithm in \cite{singh2019kernel} to deep neural networks, but their algorithm is essentially a bilevel optimization problem, which is hard to solve in general \citep{hong2023two,khanduri2021near,guo2021randomized}. 

\paragraph{Model selection.} 

Model selection has been well studied in the regression and supervised machine
learning literature \citep{bartlett2002model,gold2003model,mcallester2003pac}. The objective can be described more concretely as follows: given $M$ candidate models, $\{f_1,\dots,f_M\}$, each having some statistical complexity $\delta_j$ and some approximation error $\epsilon_j$ (with respect to some un-known true model $f_0$) we wish to find an aggregated model $\hat{f}$ whose mean squared error is closed to the optimal trade-off between statistical complexity and approximation error among all models, i.e.:
$\|\hat{f} - f_0\| \lesssim \min_{j=1}^M \delta_j + \epsilon_j.$
The statistical complexity of a function space can be accurately characterized, albeit the approximation error is un-attainable as it relates to the unknown true model. A guarantee of the form above implies that using the observed data we can compete (up to constants) with an oracle that knows the approximation errors and chooses the best model space. We leave the detailed summary of existing works in Appendix \ref{sec:related-works}. Despite the abundance of methodologies for IV regression problems, few studies have investigated model misspecification and provided model selection procedures to select the best model class. As a few exceptional works, while \cite{xu2021deep} and \cite{AI20075} considered the misspecified regime, but they did not discuss model selection approaches. A typical approach to model selection is out-of-sample validation: estimate different models on half the data and select the estimated model that achieves the smallest empirical risk on the second half (or the best convex ensemble of models that achieves the smallest out-of-sample risk). One problem that arises for model selection in this IV regression setup is to transform the excess risk guarantees, which will be in terms of the weak metric, i.e. $\|\Tcal (\cdot)\|_2$, into the desired bound in the $L_2$ error. In this work, we show that by leveraging the Tikhonov regularization, we can achieve an MSE bound that achieves the same order as the oracle function class. 
\vspace{-3mm}
\section{Notations}\label{sec:notations}
\vspace{-3mm}

For a function $f: \Xcal \times \Ycal \times \Zcal \rightarrow \RR$, 
we denote its population expectation by $\E[f(X,Y,Z)]$. We denote the empirical mean of $f$ by $\E_n[f(X,Y,Z)]:= \frac{1}{n}\sum_{i=1}^n f(X_i, Y_i, Z_i)$. We denote the set of all probability distributions defined on set $\Omega$ by $\Delta(\Omega)$. We denote the $L_p$ norm of $f$ by $\|f\|_p := \E[|f|^p]^{1/p}$. Throughout the paper, whenever we use a generic norm of a function $\|f\|$, we will be referring to the $L_2$-norm. For two density function $p(x)$ and $q(x)$, we denote their Hellinger distance by $H(p(\cdot)\mid q(\cdot)) = \int_\Xcal (\sqrt{p(x)} - \sqrt{q(x)})^2 d\mu(x)$. For a functional operator $\Tcal: L_2(X) \rightarrow L_2(Z)$, we denote the range space of $\Tcal$ by $\Rcal(\Tcal )$, i.e., $\Rcal(\Tcal) = \{\Tcal h: h \in L_2(X)\}.$ Moreover, we use $\Tcal^*: L_2(Z) \rightarrow L_2(X)$ to denote the adjoint operator of $\Tcal$, i.e.,$\langle g, \Tcal h\rangle_{L_2(Z)} = \langle \Tcal^* g, h\rangle_{L_2(X)}$ 
for any $h \in L_2(X), g \in L_2(Z)$, where $\langle\cdot,\cdot\rangle_{L_2(X)}$ and
$\langle\cdot,\cdot\rangle_{L_2(Z)}$ are inner products over $L_2(X)$ and $L_2(Z)$, respectively. For $\theta\in\Theta = \{\theta | \sum_j \theta_j = 1, \theta_j\geq 0,\forall j\} $, we denote $h_{\theta} = \sum_j \theta_j h_j$. We use $e_j$ to denote the one-hot vector where that is zero except for the $j^{th}$ component, which equals to $1$. For a function class $\Fcal$, we define the localized Rademacher complexity by $
\bar{R}_n (\delta; \Fcal) := \E\big[ \E_\epsilon\big[\sup_{f\in \Fcal, \|f\|_2 \leq \delta} \big|\frac{1}{n}\sum_{i=1}^n \epsilon_i f(x_i, z_i)\big|\big]\big],$
where $\epsilon_i$ are i.i.d. Rademacher random variables. For a function class $\Fcal$ over $\Xcal$ and $\Zcal$, we define its star hull by $\operatorname{star}(\Fcal) = \{\gamma f, \gamma \in [0,1], f\in \Fcal \}$. For a function class $\Fcal$, we denote $\bar{\Fcal} := \operatorname{star}(\Fcal - \Fcal)$ to define its symmetrized star hull. We define the critical radius $\delta_{n,\Fcal}$ of a function class $\Fcal$ as any solution to the inequality $\delta^2 \geq \bar{R}_n(\operatorname{star}(\Fcal - \Fcal),\delta)$.  We use $\mu$ to denote the Lebesgue measure. 

\vspace{-2mm}
\section{Problem Statement and Preliminaries}
\vspace{-2mm}

As mentioned in \pref{sec:intro}, we aim to solve the following inverse problem with respect to $h$, known as the nonparametric IV regression:
\begin{align}\label{eq:cond-mom-condi} \textstyle 
   \Tcal h = r_0 ,\quad r_0:= \E[Y|Z].  
\end{align}
While $\Tcal$ and $r_0$ are unknown a priori, using i.i.d.  observations $\{X_i, Y_i, Z_i\}_{i\in[n]}$, we aim to solve this equation. We denote its associated distributions by $g_0$, e.g., denote the conditional density of $X\in\Xcal$ given $Z\in\Zcal$ by $g_0(x|z) \in \{\Xcal \times \Zcal \rightarrow \RR\}$. 
Throughout this work, we assume a solution to Equation \eqref{eq:cond-mom-condi} exists.
\begin{assumption}[Existence of Solutions]\label{ass: sol-exist}
    We have $r_0 \in \Rcal(\Tcal)$, i.e.  $\Ncal_{r_0}(\Tcal):=\{h\in \Hcal: \Tcal h = r_0\} \neq \varnothing.$
\end{assumption}
Crucially, even though a solution to \eqref{eq:cond-mom-condi} exists, it might not be unique. Hence, we propose to target a specific solution that achieves the least norm, defined as:
\begin{align}\label{eq:def-ls}\textstyle 
    h_0 := \argmin_{h\in \Ncal_{r_0}(\Tcal)} \|h\|_2.
\end{align}
Note this least norm solution is well-defined, as it is defined by the projection of the origin onto a closed affine space
$\Ncal_{r_0}(\Tcal) \subset L_2(X)$. Indeed, with Assumption \ref{ass: sol-exist}, it is easy to prove that $h_0$ in \eqref{eq:def-ls} always exists \citep[Lemma 1]{bennett2023minimax}. 

As we emphasize the challenges in \pref{sec:intro}, although there have been a lot of method  that use minimax optimization for estimating $h_0$, when using general function approximation such as neural networks, the minimax optimization tends to be computationally hard \citep{lin2020near,jin2020local,lin2020gradient,diakonikolas2021efficient,razaviyayn2020nonconvex}.  Moreover, it remains unclear how to perform model selection for those methods. Hence, in this paper, we aim to propose a new method that can incorporate any function approximation for estimating the least square norm solution $h_0$ in \eqref{eq:def-ls} with a strong convergence guarantee in $L_2(X)$ under mild assumptions (i.e., such as without the uniqueness of $h_0$) while allowing for model selection.

\vspace{-2mm}
\section{Regularized Deep IV}\label{sec:method-intro}
\vspace{-2mm}

In this section, we introduce a two-stage algorithm, Regularized DeepIV (RDIV), aimed at obtaining the least square solution $h_0$ as defined in Equation \eqref{eq:def-ls}. Even though we borrow the DeepIV terminology from the prior work \citep{hartford2017deep}, our method can be used with arbitrary function approximators and not necessarily neural network function spaces.
Being inspired by the original constrained optimization \eqref{eq:def-ls}, we aim to solve a regularized version of the problem: \begin{align}\label{eq:pop-reg-noniter}
h_{*} := \argmin_{h\in \Hcal} \|Y-\Tcal h \|_2^2 + \alpha \|h\|_2^2
\end{align}
where $\Hcal \subset L_2(X)$ represents a hypothesis class that consists of possible candidates for $h_0$, and $\alpha\in \RR^+$ denotes a parameter controlling the strength of regularization. 
While this formulation itself has been known in the literature on general inverse problems \citep{cavalier2011inverse,mendelson2010regularization}, we consider common scenarios in IV   where both the conditional expectation operator $\Tcal$ and the population expectation in Equation \eqref{eq:pop-reg-noniter} are unknown, and need to leverage dataset $\{X_i,Y_i,Z_i\}$.  

\begin{algorithm}[!t] 
\caption{Regularized Deep IV (RDIV) }

\begin{algorithmic}[1]\label{alg:dbiv-noniterative}
        \REQUIRE Validation dataset $\{X_i,Y_i,Z_i\}_{i\in[n']}$ that is independent from the training dataset, function class $\Gcal\subset \{\Zcal \to \Delta(\Xcal)\}$, function class $\Hcal \subset \{\Xcal \to \RR\}$, a regularization hyperparameter $\alpha \in \RR_{>0}$
        \STATE Learn $\hat{g}(x|z)$ with MLE: 
        \begin{align}\label{eq:mle-est}
    \hat{g} = \argmax_{g\in\Gcal} \E_n[\log g(X|Z)],
\end{align} 
        \STATE   Learn $\hat{h}$ by the following estimator:  
        \begin{align}\label{eq:emp-reg-noniter}
    \hat{h}= \argmin_{h\in\Hcal} \E_n[\big(Y - (\hat{\Tcal} h)(Z)\big)^2]+ \alpha \cdot \E_n[h(X)^2]
    \end{align}
    where  $\hat{\Tcal}: L_2(X) \rightarrow L_2(Z)$ is defined by $\hat{\Tcal} f(Z) = \E_{x\sim \hat{g}(X|Z)}[f(X)]$ using $\hat g$ in the first step. 
    \OUTPUT $\hat{h}$.
\end{algorithmic}
\end{algorithm}

To address this challenge, by integrating general function approximation such as neural networks, we propose a two-stage method, the \emph{Regularized Deep Instrumental Variable (RDIV)}, which is summarized in \pref{alg:dbiv-noniterative}. In the first stage, given a function class $\Gcal$ comprising functions of the form $\big\{g: \Xcal\times\Zcal \rightarrow \RR,\int_\Xcal g(x|z) \mu(dx) =1 \text{ for all } z\big\}$, we aim to learn the conditional expectation operator $\Tcal$ by estimating the ground-truth conditional density $g_0(x|z)$ from the dataset $\{X_i, Z_i\}_{i\in[n]}$ with MLE in Equation \eqref{eq:mle-est}. In the second stage, with the learned conditional density $\hat{g}$ in the first step, we learn $h_0$ by replacing expectation and $\Tcal$ in Equation \eqref{eq:pop-reg-noniter} with empirical approximation and $\hat \Tcal$, respectively, as shown in Equation \eqref{eq:emp-reg-noniter}.

Importantly, our method does not necessitate a demanding computational oracle such as non-convex non-concave minimax or bilevel optimization, unlike many existing works for nonparametric IV with general function approximation  \citep{lewis2018adversarial,xu2021deep, bennett2023minimax}. Even when using neural networks for $\Gcal$ and $\Hcal$, we just need standard ERM oracles for density estimation or regression whose optimization is empirically known to be successful and theoretically more supported \citep{du2019gradient,chen2018convergence,zaheer2018adaptive,barakat2021convergence,wu2019global,zhou2018convergence,ward2020adagrad}.  We leave the numerical comparison between our method and existing NPIV methods \citep{hartford2017deep, dikkala2020minimax,xu2021deep,singh2019kernel} in Appendix \ref{sec:experiment}.

\begin{remark}[Comparison with Deep IV]
Our algorithm shares similarities with DeepIV in \citep{hartford2017deep}, and indeed, it draws inspiration from it. However, a key distinction lies in our introduction of an explicit regularization term in Equation~\eqref{eq:emp-reg-noniter}. Such a term endows the loss function with strong convexity, which plays a pivotal role in obtaining guarantees without the requirement for solution uniqueness. Furthermore, the original DeepIV work lacks a rigorous discussion on convergence guarantees or model selection. Hence, despite the algorithmic resemblances, our contributions primarily focus on the theoretical aspect, showcasing rapid convergence guarantees under mild assumptions, linking them to a formal model selection procedure, and exploring the iterative version to achieve a refined rate in \pref{sec: iterative}.
\end{remark}

\begin{remark}[Computaion for $\hat \Tcal$]
Some astute readers might notice it could be hard to evaluate $\hat{\Tcal}h $ exactly in Equation \eqref{eq:emp-reg-noniter}. However, in practical application when $h$ is parametrized as a neural network, we can sample a batch of $\{X_j'\}_{j\in[B]}$ by $\hat{g}(X|Z_i)$ for every $Z_i$ in the dataset, and calculate a stochastic gradient that is an unbiased estimator of the real gradient of the loss function in Equation \eqref{eq:emp-reg-noniter}. Existing theory and empirical results for stochastic first-order methods can then guarantee the performance in many scenarios \citep{,jin2019nonconvex,barakat2021convergence,chen2018convergence,hartford2017deep}.

\end{remark}

\vspace{-2mm}
\section{Finite Sample Guarantees}\label{sec: finite-sample}
\vspace{-2mm}

In this section, we demonstrate a convergence result of our estimator $\hat{h}$ in RDIV to $h_0$ and derive its $L_2$ error rate after introducing several assumptions. 

We commence by introducing the $\beta$-source condition, a concept commonly used in the literature on inverse problems \citep{carrasco2007linear,ito2014inverse,engl1996regularization,bennett2023source,liao2021instrumental}, which mathematically captures the well-posedness of the function $h_0$.
\begin{assumption}[$\beta$-Source Conditon]\label{ass:source-cond}
The least norm solution $h_0$ satisfies $ h_0 = (\Tcal^*\Tcal)^{\beta/2}w_0$ for some $w_0\in {\Hcal}$ and $\beta \in \RR_{\geq 0}$, i.e.,$h_0 \in \Rcal(\Tcal^*\Tcal )^{\beta/2}$. Recall $\Tcal^*$ is an adjoint operator of $\Tcal$ defined in \pref{sec:notations}. 
\end{assumption}

In the following, we present its interpretation. First, as special cases, when $\Xcal,\Zcal$ are finite (e.g., discrete random variables), it holds when $\beta=\infty$. However, in our cases of interests where $\Xcal,\Zcal$ are not finite, this assumption restricts the smoothness of $h_0$. 
Intuitively, when the parameter $\beta$ is large, the function $h_0$ exhibits greater smoothness, and the assumption gets stronger, in the sense that eigenfunctions of $h_0$ relative to an operator $\Tcal$ have smaller eigenvalues as explained in \citet[Section 6.4]{bennett2023minimax}.   

Next, we introduce another standard assumption as follows. This requires that the function classes $\Hcal$ and $\Gcal$ are well-specified. We will later consider misspecified cases as in \pref{sec:misspecified}. 
\vspace{-2mm}
\begin{assumption}[Realizability of function classes]\label{ass: realizability}
    We assume $h_0\in\Hcal$, $g_0\in \Gcal$.
\end{assumption}
\vspace{-2mm}



The final assumption is as follows. This is standard in analyzing the convergence of nonparametric MLE \citep[Chap 14, p.g. 476]{wainwright2019high}. We will later discuss how to relax such an assumption in Remark~\ref{rem:bounded} and Appendix~\ref{sec:chi2}. 

\vspace{-2mm}
\begin{assumption}[Lower-bounded density]\label{ass:lower-bound}
    We assume a constant $C_0>0$ such that $g_0(x|z) > C_0$ holds for all $x \in \Xcal$ and $z\in\Zcal$. 
\end{assumption}
\vspace{-2mm}

Finally, we present our guarantee for Algorithm \ref{alg:dbiv-noniterative}. 

\vspace{-2mm}
\begin{theorem}[$L_2$ convergence rate for RDIV with  MLE]\label{thm: mle-nonmisspec-noniter}
    Suppose Assumption \ref{ass:source-cond},\ref{ass: realizability},\ref{ass:lower-bound} hold. Let $\|Y\|_\infty\leq C_Y$, $\|h\|_\infty \leq C_\Hcal$ holds for all $h\in\Hcal$, $\|g\|_\infty \leq C_\Gcal$ holds for all $g\in\Gcal$. 
    There exists absolute constant $c_1, c_2$, such that with probability at least $1- c_1\exp(c_2n\delta_n^2)$: 
    \begin{align*}\textstyle 
           \| \hat{h} - h_0 \|_2^2 =   O(\underbrace{\delta^2_n/\alpha^2}_{\text{(i)}}  +  \underbrace{\alpha^{\min(\beta,2) }}_{\text{(ii)}}   ) 
    \end{align*}
  In particular, by setting $\alpha = \delta_n^{\frac{2}{2+\min\{\beta, 2\}}}$
    we have 
    \begin{align}\label{eq:final}\textstyle 
    \| \hat{h} - h_0 \|_2^2 =  O\big(\delta_n^{\frac{2\min\{\beta, 2\}}{2 + \min\{\beta, 2\}}}\big).       
    \end{align}
    Here $\delta_{n} = \max\{\delta_{n, \Gcal},\delta_{n, \Hcal} \}$, where $\delta_{n, \Fcal}$ is the critical radius of $star(\Fcal - \Fcal) = \{\lambda (f - f'), f,f'\in\Fcal, \lambda \in [0,1]\}$. $O(\cdot)$ hides constants of polynomial order of $C_Y, C_\Gcal, C_\Hcal, $ and $1/C_0$.
\end{theorem}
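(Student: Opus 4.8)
The plan is to decompose the error $\|\hat h - h_0\|_2$ into a first-stage (density/operator estimation) contribution and a second-stage (regularized regression) contribution, and to treat the latter via a bias–variance split familiar from Tikhonov regularization of linear inverse problems. First I would introduce the population regularized solution $h_*$ of \eqref{eq:pop-reg-noniter} as an intermediate anchor, but in fact it is cleaner to use the "true-operator, empirical-objective" solution, or even to argue directly. The key identity to exploit is the first-order optimality condition of \eqref{eq:emp-reg-noniter}: for all $h\in\Hcal$, $\E_n[(Y-\hat\Tcal\hat h)(\hat\Tcal h)] = \alpha\,\E_n[\hat h\, h]$ (using $h_0\in\Hcal$ as a feasible competitor by Assumption~\ref{ass: realizability}). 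Combining the optimality inequality $\E_n[(Y-\hat\Tcal\hat h)^2] + \alpha\E_n[\hat h^2] \le \E_n[(Y-\hat\Tcal h_0)^2] + \alpha\E_n[h_0^2]$ with strong convexity (the $\alpha\|\cdot\|^2$ term) yields a basin inequality of the form $\alpha\|\hat h - h_0\|_{2,n}^2 + \|\hat\Tcal(\hat h - h_0)\|_{2,n}^2 \lesssim \langle\text{error terms}\rangle$, where the error terms involve (a) the regression noise $Y - \Tcal h_0$ paired against $\hat\Tcal(\hat h - h_0)$, (b) the operator-misestimation terms $(\hat\Tcal - \Tcal)(\cdot)$, and (c) empirical-vs-population deviations.

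The second step is to convert empirical norms to population norms and control the localized complexity. Here I would invoke the critical-radius machinery (as in \citet{dikkala2020minimax} / \citet{bennett2023minimax}): on the event of probability $1 - c_1\exp(c_2 n\delta_n^2)$, uniformly over the relevant symmetrized star hulls, $\E_n$ and $\E$ agree up to $O(\delta_n)$-type slack, and the noise term $\E_n[(Y-\Tcal h_0)\,\hat\Tcal(\hat h-h_0)]$ is bounded by $\delta_n\big(\|\hat\Tcal(\hat h - h_0)\|_2 + \delta_n\big)$ plus, crucially, the bias injected because $\hat\Tcal \neq \Tcal$ and we only know $\E[Y-\Tcal h_0 \mid Z]=0$ rather than $\E[Y - \hat\Tcal h_0\mid Z]=0$. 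The operator-error terms are handled by translating the MLE guarantee: from Assumptions~\ref{ass: realizability} and \ref{ass:lower-bound}, standard nonparametric MLE analysis (\citet[Chap.~14]{wainwright2019high}) gives $H^2(\hat g(\cdot\mid z)\,\|\,g_0(\cdot\mid z)) = O(\delta_{n,\Gcal}^2)$ in an appropriate averaged sense, and since $g_0$ is bounded below by $C_0$ and above by $C_\Gcal$, this upgrades to a total-variation / $L_1$ bound, hence $\|(\hat\Tcal - \Tcal)h\|_2 = O(\delta_n)\cdot\|h\|_\infty$ for bounded $h$ — this is where the $1/C_0$ and $C_\Gcal$ dependence enters. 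Plugging these into the basin inequality and using Cauchy–Schwarz/AM–GM to absorb $\|\hat\Tcal(\hat h - h_0)\|_2$ into the left side gives $\alpha\|\hat h - h_0\|_2^2 \lesssim \delta_n^2 + \delta_n\|\hat h - h_0\|_2$, which, after another AM–GM step splitting $\delta_n\|\hat h - h_0\|_2 \le \frac{\alpha}{2}\cdot(\text{something}) + \ldots$, must be reorganized to produce the $\delta_n^2/\alpha^2$ term in (i). (The extra $1/\alpha$ compared to a naive bound comes from the fact that the cross term pairs $\delta_n$ against $\|\hat\Tcal(\hat h-h_0)\|$, not $\|\hat h - h_0\|$ directly, so bounding $\|\hat h - h_0\|$ costs an inverse power of $\alpha$.)

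The third step is the deterministic approximation-theoretic part: bounding $\|h_* - h_0\|_2$ (or more precisely the bias of Tikhonov regularization) by $O(\alpha^{\min(\beta,2)/2})$, which yields term (ii). Writing $h_* - h_0$ in terms of the spectral decomposition of $\Tcal^*\Tcal$, one has $h_* - h_0 = -\alpha(\Tcal^*\Tcal + \alpha I)^{-1}h_0 = -\alpha(\Tcal^*\Tcal+\alpha I)^{-1}(\Tcal^*\Tcal)^{\beta/2}w_0$ by the source condition (Assumption~\ref{ass:source-cond}), and the scalar bound $\sup_{\lambda\ge 0}\frac{\alpha\lambda^{\beta/2}}{\lambda+\alpha} = O(\alpha^{\min(\beta/2,1)})$ — the saturation at $\beta=2$ being exactly why $\min(\beta,2)$ appears — gives $\|h_* - h_0\|_2 = O(\alpha^{\min(\beta,2)/2}\|w_0\|_2)$. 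Squaring and combining with term (i) gives the two-term bound; optimizing $\delta_n^2/\alpha^2 + \alpha^{\min(\beta,2)}$ over $\alpha$ by setting the two terms equal gives $\alpha = \delta_n^{2/(2+\min\{\beta,2\})}$ and the final rate \eqref{eq:final}.

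The main obstacle I anticipate is the coupling between the two stages in the error terms — specifically, controlling $\E_n[(Y - \hat\Tcal h_0)\,\hat\Tcal(\hat h - h_0)]$ cleanly. Because $\hat\Tcal$ is data-dependent and built from $\hat g$, the naive "noise is mean-zero" argument fails; one must either use the independence of the validation split, or more carefully decompose $Y - \hat\Tcal h_0 = (Y - \Tcal h_0) + (\Tcal - \hat\Tcal)h_0$ and bound the genuine-noise part by localized-complexity/critical-radius arguments (uniformly over $\Hcal$) while bounding the bias part $(\Tcal-\hat\Tcal)h_0$ via the MLE-to-TV conversion above, being careful that the localization radius for the class $\{\hat\Tcal(h - h_0) : h\in\Hcal\}$ is still governed by $\delta_{n,\Hcal}$ and $\delta_{n,\Gcal}$ and not something larger. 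Getting the powers of $\alpha$ exactly right in the final reorganization (so that (i) is $\delta_n^2/\alpha^2$ and not, say, $\delta_n^2/\alpha$) is the delicate bookkeeping step.
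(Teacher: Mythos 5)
Your overall architecture matches the paper's: a Tikhonov bias--variance split, a strong-convexity ``basin'' inequality for the statistical error, localized (critical-radius) concentration for the centered empirical process, a Hellinger-to-$L_1$ conversion of the MLE guarantee using the lower bound $C_0$ to control $(\hat\Tcal-\Tcal)$, and the spectral bound $\sup_{\lambda}\alpha\lambda^{\beta/2}/(\lambda+\alpha)=O(\alpha^{\min(\beta/2,1)})$ for the bias. The $\alpha$-optimization and the diagnosis of where $1/C_0$, $C_\Gcal$ and the $1/\alpha^2$ enter are also right.

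However, there is a genuine gap in how you stitch the two halves together. Your basin inequality $\alpha\|\hat h-h_0\|_{2,n}^2+\|\hat\Tcal(\hat h-h_0)\|_{2,n}^2\lesssim\langle\text{error terms}\rangle$ is centered at $h_0$, but $h_0$ is \emph{not} a stationary point of the regularized objective: the gradient of $h\mapsto\|Y-\Tcal h\|_2^2+\alpha\|h\|_2^2$ at $h_0$ equals $2\alpha h_0\neq 0$. Consequently, expanding the empirical optimality inequality with competitor $h_0$ leaves a first-order term $-2\alpha\,\E_n[h_0(\hat h-h_0)]$ that you do not list among your error terms (you list only noise, operator misestimation, and empirical-vs-population deviations). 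This term is of order $\alpha\|h_0\|\,\|\hat h-h_0\|$; absorbing it by AM--GM into the left-hand side leaves an $O(\alpha\|h_0\|^2)$ remainder, i.e.\ $\|\hat h-h_0\|^2\lesssim 1$, which is vacuous. This surviving term is precisely the regularization bias, and it can only be converted into $\alpha^{\min(\beta,2)}$ via the source condition. The paper avoids the problem by centering the basin inequality at $h_*$, the minimizer of \eqref{eq:pop-reg-noniter} over $\Hcal$ (Lemma~\ref{lem: bound-term2}): there the first-order term is nonpositive by optimality of $h_*$, one gets the clean bound $\|\hat h-h_*\|_2^2=O(\delta_n^2/\alpha^2)$, and the bias $\|h_*-h_0\|_2^2=O(\alpha^{\min(\beta,2)})$ (Lemma~\ref{lem:bias-noniter}) is added separately by the triangle inequality. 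In your write-up, step three computes $\|h_*-h_0\|$ but nothing in your decomposition actually contains $h_*$, so the bias bound has nothing to attach to. The fix is either to restore $h_*$ as the anchor (as in the paper), or to keep $h_0$ as the anchor and bound $\alpha\langle h_0,\hat h-h_0\rangle=\alpha\langle(\Tcal^*\Tcal)^{\beta/2}w_0,\hat h-h_0\rangle$ directly via the source condition and the $\|\Tcal(\hat h-h_0)\|^2$ term on the left---a workable but more delicate route that you would need to carry out explicitly to recover the correct exponent $\min(\beta,2)$.
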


\vspace{-2mm}
\paragraph{Sketch of the proof.}
We now sketch the proof of Theorem \ref{thm: mle-nonmisspec-noniter}. Recall that $h_*$ is the optimizer of the Tikhonov-regularized loss function \eqref{eq:pop-reg-noniter}. We first introduce the following lemma, which characterizes the bias caused by the regularization. 
\begin{lemma}[Regularization Bias]\label{lem:bias-noniter}
    Under Assumption \ref{ass:source-cond}, we have $$
    \|h_* - h_0\|_2^2 =O\big(\alpha^{\min\{\beta,2\}}\|w_0\|_2^2\big).
    $$
\end{lemma}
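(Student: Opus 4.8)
The plan is to identify $h_*$ as a classical Tikhonov-regularized solution and reduce the bound to a one-dimensional estimate over the spectrum of the self-adjoint operator $A:=\Tcal^*\Tcal$ on $L_2(X)$. The exponent $\min\{\beta,2\}$ reflects the well-known \emph{saturation} of Tikhonov regularization: no matter how smooth $h_0$ is, the regularization bias cannot decay faster than $\alpha^2$.

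First I would obtain a closed form for $h_*$. Since $\Tcal h$ depends only on $Z$ and $r_0=\E[Y\mid Z]$, the tower rule gives $\|Y-\Tcal h\|_2^2=\|Y-r_0\|_2^2+\|r_0-\Tcal h\|_2^2$, so \eqref{eq:pop-reg-noniter} is equivalent to minimizing $\|r_0-\Tcal h\|_2^2+\alpha\|h\|_2^2$. This objective is $\alpha$-strongly convex, and its first-order optimality condition (taking the minimum over $L_2(X)$, which under \cref{ass: realizability} is the relevant comparator) yields $h_*=(A+\alpha I)^{-1}\Tcal^* r_0$. Using $\Tcal h_0=r_0$, i.e.\ $\Tcal^* r_0=\Tcal^*\Tcal h_0=A h_0$, this telescopes to
\begin{align*}
h_*-h_0=\big[(A+\alpha I)^{-1}A-I\big]h_0=-\alpha\,(A+\alpha I)^{-1}h_0 .
\end{align*}

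Next I would substitute the source condition $h_0=A^{\beta/2}w_0$ from \cref{ass:source-cond} and invoke the spectral theorem. By Jensen's inequality $\|\Tcal\|\le 1$, so $A$ is bounded, self-adjoint and positive with spectrum contained in $[0,1]$, and functional calculus gives
\begin{align*}
\|h_*-h_0\|_2=\big\|\alpha\,(A+\alpha I)^{-1}A^{\beta/2}w_0\big\|_2\le\Big(\sup_{\lambda\in[0,1]}\tfrac{\alpha\lambda^{\beta/2}}{\lambda+\alpha}\Big)\|w_0\|_2 .
\end{align*}
It then remains to show $\sup_{\lambda\in[0,1]}\tfrac{\alpha\lambda^{\beta/2}}{\lambda+\alpha}\le\alpha^{\min\{\beta,2\}/2}$. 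For $\beta\le 2$, the substitution $\lambda=\alpha t$ gives $\tfrac{\alpha\lambda^{\beta/2}}{\lambda+\alpha}=\alpha^{\beta/2}\tfrac{t^{\beta/2}}{t+1}\le\alpha^{\beta/2}$, using $t^{\beta/2}\le(1+t)^{\beta/2}\le 1+t$. For $\beta>2$, since $\lambda\le 1$ we have $\lambda^{\beta/2}\le\lambda$, so the supremum is at most $\sup_{\lambda}\tfrac{\alpha\lambda}{\lambda+\alpha}\le\alpha$, i.e.\ we fall back to the $\beta=2$ bound. Squaring yields $\|h_*-h_0\|_2^2\le\alpha^{\min\{\beta,2\}}\|w_0\|_2^2=O(\alpha^{\min\{\beta,2\}}\|w_0\|_2^2)$.

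I expect the main conceptual point to be recognizing the saturation in the $\beta>2$ regime: one must see that the surplus smoothness factor $\lambda^{\beta/2-1}\le 1$ simply has to be discarded, and that pushing for a rate faster than $\alpha^2$ is impossible since $\sup_\lambda\tfrac{\alpha\lambda^{\beta/2}}{\lambda+\alpha}=\Theta(\alpha)$ (attained near $\lambda\approx 1$). The rest — the tower-rule identity, the telescoping first-order computation, the functional calculus for the possibly non-compact operator $A$, and the elementary scalar estimates — is routine bookkeeping.
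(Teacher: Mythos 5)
Your proof is correct, and it is the standard spectral argument behind this bound: the paper itself gives no direct proof but instead invokes Lemma~\ref{lem:iter-tikh-bias} (Lemma~5 of \citet{bennett2023source}) with one Tikhonov iterate, which is exactly the statement you derive from scratch. Your route — tower rule to reduce to $\|r_0-\Tcal h\|_2^2+\alpha\|h\|_2^2$, the closed form $h_*=(A+\alpha I)^{-1}\Tcal^*r_0$, the identity $h_*-h_0=-\alpha(A+\alpha I)^{-1}h_0$, and the scalar bound $\sup_{\lambda\in[0,1]}\alpha\lambda^{\beta/2}/(\lambda+\alpha)\le\alpha^{\min\{\beta,2\}/2}$ — is sound, including the saturation step for $\beta>2$, and it buys a self-contained argument where the paper has only a citation. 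One caveat you flag but should make explicit: the paper defines $h_*$ as the argmin over $\Hcal$, whereas your closed form is the unconstrained minimizer over $L_2(X)$; these coincide only if the unconstrained Tikhonov solution $(A+\alpha I)^{-1}\Tcal^*r_0$ itself lies in $\Hcal$, which is a (mild) strengthening of Assumption~\ref{ass: realizability} ($h_0\in\Hcal$ alone does not give it). The paper inherits the same imprecision from its citation, so this is not a gap relative to the paper's own argument, but it is worth stating as an explicit hypothesis.
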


Therefore, recalling $\|\hat h- h_0 \|_2^2 \leq 2(\|h_* - h_0\|_2^2 + \|\hat{h} - h_*\|_2^2)$, we only need to bound $\|\hat{h} - h_*\|_2^2$. Utilizing the strong convexity of \eqref{eq:pop-reg-noniter}, we have the following lemma:
\begin{lemma}[Empirical Deviation \& First-stage Bias]\label{lem: bound-term2}
    With probability at least $1- c_1 \exp(c_2n\delta_{n,\Hcal}^2)$, we have the following inequality: 
    \begin{align*}
        \|\hat h - h_* \|^2_2 
        \leq \frac{1}{\alpha}\bigg\{\underbrace{|(\EE_n-\EE)  [L(\hat \Tcal \hat h) - L(\hat{\Tcal} h_*) ]|}_{\text{(a1)}}+ \underbrace{\|(\hat \Tcal - \Tcal)(\hat h - h_*)\|_1}_{\text{(a2)}}\bigg\},
    \end{align*}
    where $L(f) := (Y - f(Z))^2$.
\end{lemma}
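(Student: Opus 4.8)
The plan is to use the $2\alpha$-strong convexity of the population objective in \eqref{eq:pop-reg-noniter} to reduce the statement to bounding a population suboptimality gap, then transfer that gap to the empirical objective through the optimality of $\hat h$, and finally split the resulting discrepancy into an operator-mismatch piece (which yields term (a2)) and a sampling piece (which yields term (a1)). For Step~1, write $J(h):=\|Y-\Tcal h\|_2^2+\alpha\|h\|_2^2$ for the objective of \eqref{eq:pop-reg-noniter}. Since $h\mapsto\|Y-\Tcal h\|_2^2$ is convex (its ``Hessian'' is the positive operator $\Tcal^*\Tcal$), $J$ is $2\alpha$-strongly convex in $\|\cdot\|_2$; combined with the optimality of $h_*=\argmin_{h\in\Hcal}J$ and $\hat h\in\Hcal$, this gives $\alpha\|\hat h-h_*\|_2^2\le J(\hat h)-J(h_*)$, so it remains to bound the right-hand side.

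For Step~2, let $\hat J_n(h):=\EE_n[L(\hat\Tcal h)]+\alpha\,\EE_n[h(X)^2]$ be the empirical objective of \eqref{eq:emp-reg-noniter}, with $L(f)=(Y-f(Z))^2$; by definition of $\hat h$, $\hat J_n(\hat h)\le\hat J_n(h_*)$, hence
\begin{align*}
J(\hat h)-J(h_*)\;\le\;(J-\hat J_n)(\hat h)-(J-\hat J_n)(h_*).
\end{align*}
Writing $J(h)-\hat J_n(h)=\EE[L(\Tcal h)-L(\hat\Tcal h)]+(\EE-\EE_n)[L(\hat\Tcal h)]+\alpha(\EE-\EE_n)[h(X)^2]$ and evaluating at $\hat h$ minus at $h_*$, the middle group equals $(\EE-\EE_n)[L(\hat\Tcal\hat h)-L(\hat\Tcal h_*)]$, which is bounded in absolute value by term (a1).

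Step~3 is the crux. For the operator-mismatch group we use the square identity $L(\Tcal h)-L(\hat\Tcal h)=\big((\hat\Tcal-\Tcal)h\big)\big(2Y-(\Tcal+\hat\Tcal)h\big)$, subtract the $h_*$ instance, and apply the tower rule $\EE[Y\cdot(\hat\Tcal-\Tcal)f(Z)]=\EE[r_0(Z)(\hat\Tcal-\Tcal)f(Z)]$. The algebra can be reorganized so that $(\hat\Tcal-\Tcal)$ always acts on the \emph{difference} $\hat h-h_*$ and is multiplied by a factor bounded by $O(C_Y+C_\Hcal)$ (using $\|r_0\|_\infty\le C_Y$, $\|h\|_\infty\le C_\Hcal$, and the $\|\cdot\|_\infty$-contractivity of conditional expectations, so $\|(\Tcal+\hat\Tcal)h\|_\infty\le 2C_\Hcal$); this yields $O(C_Y+C_\Hcal)\,\|(\hat\Tcal-\Tcal)(\hat h-h_*)\|_1$, i.e.\ term (a2). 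The leftover pieces are a quadratic term $\|(\hat\Tcal-\Tcal)(\hat h-h_*)\|_2^2\le 2C_\Hcal\|(\hat\Tcal-\Tcal)(\hat h-h_*)\|_1$ (again (a2)) and a bilinear term $\langle(\hat\Tcal-\Tcal)h_*,\,\Tcal(\hat h-h_*)\rangle$, which we would bound by Cauchy--Schwarz and AM--GM as $\le\tfrac{\alpha}{4}\|\hat h-h_*\|_2^2+O\!\big(\tfrac1\alpha\|(\hat\Tcal-\Tcal)h_*\|_2^2\big)$, sending the first part to the left-hand side and noting the second is of the same order as the first-stage MLE error already accounted for.

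Finally, Step~4: $\alpha(\EE-\EE_n)[\hat h(X)^2-h_*(X)^2]=\alpha(\EE-\EE_n)[(\hat h-h_*)(\hat h+h_*)]$, and since $\hat h-h_*\in\operatorname{star}(\Hcal-\Hcal)$, the standard localized Rademacher / critical-radius bound \citep{wainwright2019high} gives, on an event of probability at least $1-c_1\exp(-c_2 n\delta_{n,\Hcal}^2)$, that this term is $\lesssim\alpha\delta_{n,\Hcal}\|\hat h-h_*\|_2+\alpha\delta_{n,\Hcal}^2\le\tfrac{\alpha}{4}\|\hat h-h_*\|_2^2+O(\alpha\delta_{n,\Hcal}^2)$. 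Putting Steps~1--4 together, moving the two $\tfrac{\alpha}{4}\|\hat h-h_*\|_2^2$ slacks to the left and dividing by $\alpha$ gives the claimed inequality (the residual $O(\delta_{n,\Hcal}^2)$ being dominated by $\tfrac1\alpha$(a1) and absorbed). The hard part is Step~3: arranging the operator-mismatch algebra so that $(\hat\Tcal-\Tcal)$ stays attached to $\hat h-h_*$ --- which is what makes (a2) vanish as $\hat h\to h_*$ --- and checking that the residual cross-terms are absorbable; Steps~1, 2 and 4 are routine, the only care in Step~4 being to use the symmetrized star hull so the complexity matches $\delta_{n,\Hcal}$.
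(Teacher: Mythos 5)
Your strategy is the same as the paper's: strong convexity of the regularized population objective at $h_*$, the empirical optimality of $\hat h$, an operator-substitution step trading $\Tcal$ for $\hat\Tcal$, and localized concentration for the centered empirical process. The paper (in its Appendix version, Lemma C.1) merely orders these differently: it first replaces $\Tcal$ by $\hat\Tcal$ inside the population excess risk and then splits the result into a centered empirical process (which it defines to \emph{include} the $\alpha(\hat h^2-h_*^2)$ part) plus an empirical loss difference that is $\leq 0$ by the definition of $\hat h$; you instead pass to the empirical objective first and then decompose. The ingredients are identical, so this is not a genuinely different route.

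There are, however, two places where your argument does not yield the inequality as stated. First, your Step 3 leaves the bilinear term $\langle(\hat\Tcal-\Tcal)h_*,\ \Tcal(\hat h-h_*)\rangle$, which you absorb by AM--GM at the cost of an extra $O\big(\alpha^{-1}\|(\hat\Tcal-\Tcal)h_*\|_2^2\big)$ on the right-hand side. This quantity is not on the lemma's right-hand side, and the remark that it is ``of the same order as the first-stage MLE error already accounted for'' appeals to the downstream theorem rather than proving the stated bound: within the lemma the only first-stage quantity available is $\|(\hat\Tcal-\Tcal)(\hat h-h_*)\|_1$, which does not dominate $\|(\hat\Tcal-\Tcal)h_*\|_2^2$. (To be fair, the paper's own substitution step, which asserts the cost is $C_1\,\EE[|(\hat\Tcal-\Tcal)(\hat h-h_*)|]$, silently drops exactly this cross term, so your accounting is the more honest one; but then the conclusion you actually prove carries the extra term, i.e.\ it is a weaker statement than Lemma 5.2 as written.) Second, in Step 4 you claim the residual $O(\delta_{n,\Hcal}^2)$ from the regularizer's empirical process is ``dominated by $\tfrac1\alpha$(a1) and absorbed.'' That cannot be justified: (a1) is the absolute value of a centered empirical process and can be arbitrarily small on a given sample, so no deterministic quantity can be absorbed into it. The paper sidesteps this by putting $\alpha\{\hat h^2(X)-h_*^2(X)\}$ \emph{inside} its empirical-process term; if you insist on the main-text form of (a1), you must either do the same or state the lemma with the additional $O(\delta_{n,\Hcal}^2/\alpha)$ (and cross-term) slack, which is what your proof actually establishes.
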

Lemma \ref{lem: bound-term2} shows we can bound $\|\hat{h} - h_{*}\|_2^2$ by two terms (a1) and (a2). Here (a1) is a centered empirical process, and  (a2) is the error when estimating $\Tcal$ by $\hat\Tcal$. Utilizing localized concentration inequality and the boundedness of function class $\Hcal$, we can bound (a1) by $O(\delta_{n,\Hcal}^2 + \delta_{n,\Hcal}\|\hat{h} - h_*\|_2)$. To control (a2), we prove the following lemma: \begin{lemma}[MLE error]\label{lem:operator-error}
With probability at least $1-\exp(n\delta^2_{n,\Gcal})$, we have 
$$\|(\hat \Tcal-\Tcal)(h- h')\|_1 \leq \{C_{\Gcal}/C_0+1\}\cdot \delta_{n,\Gcal} \|h- h'\|_2    $$
for every $h- h' \in \Hcal - \Hcal$.
\end{lemma}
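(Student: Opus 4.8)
\emph{Proof proposal.} The plan is to bound the quantity pointwise in $z$ and then integrate over $Z$. Fix $z$ and write $\Delta := h - h'$, so that $(\hat\Tcal - \Tcal)\Delta(z) = \int_{\Xcal} \Delta(x)\big(\hat g(x\mid z) - g_0(x\mid z)\big)\,d\mu(x)$. I would factor the density difference with the elementary identity $\hat g - g_0 = \big(\sqrt{\hat g}-\sqrt{g_0}\big)\big(\sqrt{\hat g}+\sqrt{g_0}\big)$ (both densities being nonnegative), bound the ``sum'' factor by $\sqrt{\hat g(x\mid z)} + \sqrt{g_0(x\mid z)} \le \big(\sqrt{C_\Gcal/C_0}+1\big)\sqrt{g_0(x\mid z)}$ --- which uses $\hat g\in\Gcal$ so that $\hat g\le C_\Gcal$, together with Assumption~\ref{ass:lower-bound} giving $g_0 \ge C_0$ --- and then apply the Cauchy--Schwarz inequality to obtain
\[ |(\hat\Tcal - \Tcal)\Delta(z)| \;\le\; \big(\sqrt{C_\Gcal/C_0}+1\big)\,\big(\E[\Delta(X)^2\mid Z=z]\big)^{1/2}\,\big(H(\hat g(\cdot\mid z)\mid g_0(\cdot\mid z))\big)^{1/2}, \]
since $\int\Delta(x)^2 g_0(x\mid z)\,d\mu(x) = \E[\Delta(X)^2\mid Z=z]$ and $\int\big(\sqrt{\hat g(x\mid z)}-\sqrt{g_0(x\mid z)}\big)^2 d\mu(x) = H(\hat g(\cdot\mid z)\mid g_0(\cdot\mid z))$.

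Taking $\E_Z$ of this bound and applying Cauchy--Schwarz again over $Z$,
\[ \|(\hat\Tcal - \Tcal)\Delta\|_1 \;\le\; \big(\sqrt{C_\Gcal/C_0}+1\big)\,\big(\E_Z\E[\Delta(X)^2\mid Z]\big)^{1/2}\,\big(\E_Z\, H(\hat g(\cdot\mid Z)\mid g_0(\cdot\mid Z))\big)^{1/2} = \big(\sqrt{C_\Gcal/C_0}+1\big)\,\|\Delta\|_2\,\big(\E_Z\, H(\hat g(\cdot\mid Z)\mid g_0(\cdot\mid Z))\big)^{1/2}, \]
where the last equality is the tower rule $\E_Z\E[\Delta(X)^2\mid Z] = \E[\Delta(X)^2] = \|\Delta\|_2^2$. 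The factor multiplying $\|\Delta\|_2$ does not involve $\Delta$, so once the Hellinger factor is controlled the bound holds simultaneously for every $\Delta = h-h'\in\Hcal-\Hcal$; no union bound over $\Hcal-\Hcal$ is needed.

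It thus remains to show that, with probability $1-c_1\exp(-c_2 n\delta_{n,\Gcal}^2)$, one has $\E_Z\,H(\hat g(\cdot\mid Z)\mid g_0(\cdot\mid Z)) = O(\delta_{n,\Gcal}^2)$. This quantity is exactly the squared Hellinger distance between the two joint laws of $(X,Z)$ induced by $\hat g$ and by $g_0$, and bounding it is the classical convergence analysis of the nonparametric MLE: the basic inequality $\E_n[\log(\hat g(X\mid Z)/g_0(X\mid Z))]\ge 0$ (valid since $\hat g$ maximizes the empirical log-likelihood and $g_0\in\Gcal$ by Assumption~\ref{ass: realizability}), combined with a one-sided localized concentration bound, produces a Hellinger error rate governed by the critical inequality for the local complexity of $\Gcal$; see, e.g., \citet[Chap.~14]{wainwright2019high}. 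I expect this to be the main obstacle, and within it the subtle point is identifying the local \emph{Hellinger} complexity natural to MLE with the $L_2$-critical radius $\delta_{n,\Gcal}$ of $\operatorname{star}(\Gcal-\Gcal)$ used in the statement. This is where Assumption~\ref{ass:lower-bound} and $\|g\|_\infty\le C_\Gcal$ are needed: since $g_0\ge C_0$ and $\hat g,g_0\le C_\Gcal$, we have $|\sqrt{\hat g}-\sqrt{g_0}| = |\hat g-g_0|/(\sqrt{\hat g}+\sqrt{g_0})$ with $\sqrt{\hat g}+\sqrt{g_0}\in[\sqrt{C_0},\,2\sqrt{C_\Gcal}]$, so the conditional squared Hellinger distance is comparable, up to constants depending on $C_0$ and $C_\Gcal$, to $\|\hat g(\cdot\mid z)-g_0(\cdot\mid z)\|_{L_2(\mu)}^2$; hence the Hellinger-metric and $L_2$-metric local complexities of $\Gcal$ differ only by such constants. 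Plugging $\E_Z\,H(\hat g(\cdot\mid Z)\mid g_0(\cdot\mid Z)) = O(\delta_{n,\Gcal}^2)$ back in, and using $C_\Gcal\ge C_0$ (because $g_0\in\Gcal$ and $g_0>C_0$) so that $\sqrt{C_\Gcal/C_0}+1\le C_\Gcal/C_0+1$, yields the claimed inequality $\|(\hat\Tcal - \Tcal)\Delta\|_1 \le \{C_\Gcal/C_0+1\}\,\delta_{n,\Gcal}\,\|\Delta\|_2$ after absorbing the residual universal constant into the choice of the critical radius $\delta_{n,\Gcal}$.
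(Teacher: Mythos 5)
Your proposal is correct and follows essentially the same route as the paper: the paper likewise factors the density discrepancy via $\hat g/g_0-1=(\sqrt{\hat g/g_0}-1)(\sqrt{\hat g/g_0}+1)$, applies Cauchy--Schwarz against the conditional Hellinger distance, controls the ratio factor with $\hat g\le C_\Gcal$ and $g_0\ge C_0$, and then invokes a separately proven MLE Hellinger rate $\E_Z[H(\hat g(\cdot|Z)\mid g_0(\cdot|Z))]=O(\delta_{n,\Gcal}^2)$ (Theorem~\ref{thm:mle-conv-rate}, which handles the Hellinger-vs-$L_2$ localization exactly as you anticipate, via the transformed class $\sqrt{(g+g_0)/(2g_0)}$). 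Your pointwise-then-integrate presentation and constant bookkeeping are, if anything, slightly cleaner than the paper's two-term split.
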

Note that such a bound is nontrivial since the standard analysis for MLE only results in a convergence rate in terms of Hellinger distance between the $\hat{g}$ and $g$, and does not directly provide a bound on $L_1$ norm of $\Tcal h - \hat \Tcal h$. With Lemma \ref{lem:operator-error}, we can now bound (a2) from above with the critical radius of $\Gcal$, and the $L_2$ distance between $\hat h$ and $h_*$. Finally, combining current arguments, Lemma \ref{lem: bound-term2} and \ref{lem:operator-error},  we have
\begin{align*}
     \|\hat h - h_* \|^2_2=c'\{\delta_{n,\Hcal}^2 + \delta_{n,\Hcal}\|\hat{h} - h_*\|_2  + \{C_{\Gcal}/C_0+1\}\cdot \delta_{n,\Gcal} \|\hat{h} - h_*\|_2 \} 
\end{align*}
for certain constants $c'$. 
By organizing the above equation, we have \begin{align}\label{eq:bound-term2}
\textstyle  \|\hat{h} - h_0\|^2 = O\bigg(\frac{\max\{\delta_{n,\Gcal}, \delta_{n,\Hcal}\}^2}{\alpha^2}\bigg).
\end{align}
Combine Lemma \ref{lem:bias-noniter} and Equation \eqref{eq:bound-term2}, we further have $$ \textstyle 
\|\hat h - h_0\|_2^2 = O\bigg(\frac{\delta_n^2}{\alpha^2} + \alpha^{\min\{\beta,2\}}\bigg),
$$
select $\alpha = \delta_n^{\frac{2}{2 + \min\{\beta, 2\}}}$ and we conclude the proof. $\QEDA$
    
The critical radius $\delta_n$ measures the statistical complexity of function class $\Hcal$ and $\Gcal$. For example, for parametric class or Gaussian Kernel, $\delta_n =\Tilde{O}(n^{-1/2})$, while for first order Sobolev class, $\delta_n = \tilde{O}(n^{-1/3})$ \citep{wainwright2019high, bartlett2002localized}. In those cases, when $\beta \geq 2$, the final rate in $L_2$ metric will be $\tilde{O}(n^{-1/2})$ in the former case and $\tilde{O}(n^{-1/3})$ in the latter case, respectively. 
We now give the interpretation of our result. The bound of $\|\hat{h} - h_0\|_2^2$ consists of two terms. Term (i) comes from a statistical error to estimate $h_{*}$ from $\Hcal$ and $\Gcal$ (i.e., $\|\hat h- h_{*}\|_2$). Here, we use the strong convexity owing to Tikhonov regularization as it enables us to convert the population risk error to an error in $L_2$ metric as in Lemma~\ref{lem: bound-term2}. Then, we properly bounded the population risk from above  by the empirical process term properly as in Lemma~\ref{lem:operator-error}. While this $\delta^2_n$ rate is known as the standard fast rate in nonparametric regression \citep{wainwright2019high}, our result is still non-trivial because we need to handle a statistical error term properly when approximating $\Tcal$ with $\hat \Tcal$, which comes from the MLE error in the form of Hellinger distance. 

The term (ii) comes from the bias $\|h_0- h_{*}\|_2$ incurred by adding a Tikhonov regularization. This analysis has been used in existing works (e.g., \citep{cavalier2011inverse}). Due to $\min(\beta,2)$, while we cannot leverage a high smoothness $\beta$ especially when $\beta \geq 2$, we will see how to leverage $\beta$ in such a case by introducing an iterative estimator in Section~\ref{sec: iterative}.

We also compare our work to existing state-of-the-art convergence rate $O\big(\delta_n^{2\frac{\min\{\beta,1\}}{1+\min\{\beta,1\}}}\big)$ in \cite{bennett2023source}, in which they employ a minimax-type algorithm. When $\beta \geq 2$, we achieve the same rate. We also remark that although our rate is slightly slower than theirs when $\beta \leq 2$, our method does not require a minimax-optimization oracle and can be incorporated with method selection methods. Besides, we will show that our method can achieve a state-of-the-art rate 
in our extension to iterative estimator in \pref{sec: iterative}.



\vspace{-2mm}
\begin{remark}[Removing the Boundedness Assumption]\label{rem:bounded}
While the lower-boundedness of density function in  Assumption \ref{ass:lower-bound}  is widely used in existing literature, we can easily remove it in several ways. We show that we can relax it by using a $\chi^2$-\textbf{MLE} instead of MLE: 
 \begin{align}\label{eq:chi2-mle-est}
\hat{g} = \argmin_{g\in\Gcal} \frac{1}{2}\cdot \E_n\bigg[\int_{\Xcal} g^2(x|Z)d\mu(x)\bigg] - \E_n[g(X|Z)]. 
\end{align}
We delay detailed results for our methods under $\chi^2$-MLE in Appendix \ref{sec:chi2}.
\end{remark}
\vspace{-2mm}

\vspace{-2mm}
\section{Misspecified Setting}\label{sec:misspecified}
\vspace{-2mm}

Next, we establish the finite sample result when Assumption \ref{ass: realizability} does not hold, i.e., function classes $\Hcal$ and $\Gcal$ are misspecified. This result serves as an important role in formalizing the model selection procedure in \pref{sec:model-selection}. 

\vspace{-2mm}
\begin{theorem}[$L_2$ convergence rate for RDIV with  MLE under misspecification]\label{thm: mle-misspec}
    Suppose Assumption \ref{ass:source-cond} and \ref{ass:lower-bound} hold, and there exists $h^{\dag}\in \Hcal$ and $g^{\dag} \in \Gcal$ such that $\|h_0 - h^{\dag}\|_2 \leq \epsilon_\Hcal$ and $\E_{z\sim g_0}[\kl(g_0(\cdot|z)\mid g^{\dag}(\cdot|z))] \leq \epsilon_\Gcal$. 
    For any $0<\alpha\leq 1$, we have \begin{align*}
\|\hat{h} - h_0\|_2^2= O\bigg(\underbrace{\frac{\delta_n^2}{\alpha^2}}_{(b1)}+ \underbrace{\alpha^{\min\{\beta+1,2\}-1}}_{ (b2)}+\underbrace{\frac{\epsilon_\Hcal^2}{\alpha} + \frac{  \epsilon_\Gcal}{\alpha^2}}_{(b3)}\bigg)
\end{align*}
holds with probability at least $1- c_1\exp(c_2n\delta_n^2)$. Here $\delta_{n}$ has the same definition in Theorem \ref{thm: mle-nonmisspec-noniter}.
\end{theorem}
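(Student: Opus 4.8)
I would follow the same two-piece decomposition as in the proof of Theorem~\ref{thm: mle-nonmisspec-noniter}, namely $\|\hat h - h_0\|_2^2 \le 2\|h_* - h_0\|_2^2 + 2\|\hat h - h_*\|_2^2$ where $h_* := \argmin_{h\in\Hcal}\|Y-\Tcal h\|_2^2 + \alpha\|h\|_2^2$ is still well-defined without realizability, but re-derive \emph{both} pieces without Assumption~\ref{ass: realizability}. The estimation term $\|\hat h - h_*\|_2^2$ will produce $(b1)$ and the $\epsilon_\Gcal/\alpha^2$ part of $(b3)$; the bias term $\|h_* - h_0\|_2^2$ will produce $(b2)$ and the $\epsilon_\Hcal^2/\alpha$ part of $(b3)$. (The boundedness of $\Hcal,\Gcal,Y$ from Theorem~\ref{thm: mle-nonmisspec-noniter} is carried over.)

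\emph{Bias term.} I would introduce the \emph{unconstrained} Tikhonov solution $\bar h := (\alpha I + \Tcal^*\Tcal)^{-1}\Tcal^*\Tcal h_0$ as a bridge and write $\|h_*-h_0\|_2 \le \|h_* - \bar h\|_2 + \|\bar h - h_0\|_2$. The second term is exactly the computation behind Lemma~\ref{lem:bias-noniter}: plugging the source condition $h_0=(\Tcal^*\Tcal)^{\beta/2}w_0$ into the spectral filter gives $\|\bar h - h_0\|_2^2 = O(\alpha^{\min\{\beta,2\}}\|w_0\|_2^2)$, and running the same calculation in the \emph{weak} norm gains one power of $\alpha$, giving the sharper $\|\Tcal(\bar h - h_0)\|_2^2 = O(\alpha^{\min\{\beta+1,2\}}\|w_0\|_2^2)$. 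For the first term I would use that $h_*$ minimizes the $2\alpha$-strongly convex quadratic $R(h):=\|Y-\Tcal h\|_2^2 + \alpha\|h\|_2^2$ over $\Hcal \ni h^{\dag}$, so that $\alpha\|h_* - \bar h\|_2^2 \le R(h_*) - R(\bar h) \le R(h^{\dag}) - R(\bar h) = \|\Tcal(h^{\dag}-\bar h)\|_2^2 + \alpha\|h^{\dag}-\bar h\|_2^2$ (the last equality because $\bar h$ is the global minimizer of the quadratic $R$). Splitting $h^{\dag}-\bar h=(h^{\dag}-h_0)+(h_0-\bar h)$, using $\|h^{\dag}-h_0\|_2 \le \epsilon_\Hcal$ and $\|\Tcal(h^{\dag}-h_0)\|_2 \le \|h^{\dag}-h_0\|_2 \le \epsilon_\Hcal$ ($\Tcal$ is a contraction), and inserting the two bias estimates, I get $\|h_* - \bar h\|_2^2 = O(\epsilon_\Hcal^2/\alpha + \alpha^{\min\{\beta+1,2\}-1}\|w_0\|_2^2)$ for $\alpha\le 1$. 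Hence $\|h_* - h_0\|_2^2 = O(\epsilon_\Hcal^2/\alpha + \alpha^{\min\{\beta+1,2\}-1})$, which is $(b2)$ plus one summand of $(b3)$.

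\emph{Estimation term.} Lemma~\ref{lem: bound-term2} never invokes realizability, so it still applies: $\|\hat h - h_*\|_2^2 \le \tfrac1\alpha\{(\mathrm{a1})+(\mathrm{a2})\}$ with $(\mathrm{a1})$ the centered empirical process and $(\mathrm{a2})=\|(\hat\Tcal-\Tcal)(\hat h-h_*)\|_1$. I would bound $(\mathrm{a1})$ by localized concentration exactly as before, obtaining $O(\delta_{n,\Hcal}^2 + \delta_{n,\Hcal}\|\hat h - h_*\|_2)$, since this step only sees the boundedness of $\Hcal$ and is blind to whether $g_0\in\Gcal$. The one genuinely new ingredient is a misspecified analogue of Lemma~\ref{lem:operator-error}: the standard analysis of the MLE \eqref{eq:mle-est} under $\E_{z\sim g_0}[\kl(g_0(\cdot|z)\mid g^{\dag}(\cdot|z))] \le \epsilon_\Gcal$ yields, with probability $1-c_1\exp(c_2 n\delta_{n,\Gcal}^2)$, the average-Hellinger bound $\E_{z\sim g_0}[H(\hat g(\cdot|z)\mid g_0(\cdot|z))] = O(\delta_{n,\Gcal}^2 + \epsilon_\Gcal)$, using $H(p\mid q)\le \kl(p\mid q)$ to absorb the misspecification of $\Gcal$. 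Feeding this into the same Cauchy--Schwarz argument as in Lemma~\ref{lem:operator-error} --- writing $\hat g - g_0 = (\sqrt{\hat g}-\sqrt{g_0})(\sqrt{\hat g}+\sqrt{g_0})$ and using $g_0>C_0$ and $\|g\|_\infty\le C_\Gcal$ --- gives $\|(\hat\Tcal-\Tcal)(h-h')\|_1 = O\big((\delta_{n,\Gcal}+\sqrt{\epsilon_\Gcal})\|h-h'\|_2\big)$ for all $h-h'\in\Hcal-\Hcal$, hence $(\mathrm{a2})=O((\delta_n+\sqrt{\epsilon_\Gcal})\|\hat h - h_*\|_2)$.

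\emph{Combining.} Plugging the bounds on $(\mathrm{a1})$ and $(\mathrm{a2})$ into Lemma~\ref{lem: bound-term2} produces the self-bounding quadratic $\|\hat h - h_*\|_2^2 = O\big(\tfrac1\alpha\delta_{n,\Hcal}^2 + \tfrac1\alpha(\delta_n+\sqrt{\epsilon_\Gcal})\|\hat h - h_*\|_2\big)$; solving it for $\|\hat h - h_*\|_2$ and using $\alpha\le 1$ to fold $\delta_{n,\Hcal}^2/\alpha$ into $\delta_n^2/\alpha^2$ gives $\|\hat h - h_*\|_2^2 = O((\delta_n^2 + \epsilon_\Gcal)/\alpha^2)$, which is $(b1)$ plus the remaining summand of $(b3)$. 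Adding the bias bound finishes the proof. I expect the main obstacle to be the misspecified regularization-bias step: one must run the source-condition spectral computation once in the strong $L_2(X)$ norm and once in the weak norm, and then route the approximation error $\epsilon_\Hcal$ through $\bar h$ so the powers of $\alpha$ assemble to exactly $\alpha^{\min\{\beta+1,2\}-1}$ (note this is genuinely weaker than the $\alpha^{\min\{\beta,2\}}$ of the realizable case, because comparing $h_*$ against $h^\dag$ rather than $h_0$ forces one to pay the weak-norm bias divided by $\alpha$); re-deriving Lemma~\ref{lem:operator-error} with the extra $\sqrt{\epsilon_\Gcal}$ term via misspecified-MLE concentration is the second, milder, point of care.
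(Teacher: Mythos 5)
Your proposal is correct and follows essentially the same route as the paper's proof: the same $\|\hat h-h_*\|+\|h_*-h_0\|$ decomposition, the same strong-convexity argument routed through $h^{\dagger}$ to convert the risk gap $O(\epsilon_\Hcal^2+\alpha^{\min\{\beta+1,2\}})$ into the shift $O(\epsilon_\Hcal^2/\alpha+\alpha^{\min\{\beta+1,2\}-1})$, and the same misspecified-MLE Hellinger bound giving $\|(\hat\Tcal-\Tcal)(\hat h-h_*)\|_1\lesssim(\delta_{n,\Gcal}^2+\epsilon_\Gcal)^{1/2}\|\hat h-h_*\|_2$ inside the self-bounding quadratic. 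The only (immaterial) difference is your choice of bridge point in the bias step — the unconstrained Tikhonov solution $(\alpha I+\Tcal^*\Tcal)^{-1}\Tcal^*\Tcal h_0$ instead of the paper's minimizer over $\operatorname{Span}(\Hcal\cup\{h_0\})$ — both of which satisfy the same strong- and weak-norm source-condition bias bounds.
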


The bound for $\|\hat{h} - h_0\|^2_2$ consists of three terms: term (b1) measures the statistical deviation of a normalized empirical process, term (b2) measures the regularization error caused by Tikhonov regularization and term (b3) measures the effect of model misspecification. Here term (b3) has a poly$(\frac{1}{\alpha})$ dependency. This is because model misspecification causes a higher population risk in both stage 1 and 2 of Algorithm \ref{alg:dbiv-noniterative}. Hence, the more convex the loss function, the lesser the shift in the optimizer. The readers may notice that term (b2) is slightly slower than the original bias term in Theorem \ref{thm: mle-nonmisspec}. This is because the difference of the optimal value in \eqref{eq:pop-reg-noniter} due to misspecification of $\Hcal$ is of order $O(\alpha^{\min\{\beta+1,2\}}+ \epsilon_{\Hcal}^2) $, as we will show in Lemma \ref{lem:misspec-main} in the Appendix. By the $\alpha$-strong convexity endowed by Tikhonov regularization, this results in a shift of $h_*$ of magnitude 
$O\big(\alpha^{\min\{\beta+1,2\}-1}+ \epsilon_{\Hcal}^2/\alpha\big)$.

Theorem \ref{thm: mle-misspec} is particularly useful when we apply estimators based on sample-dependent function classes $\Hcal$ and $\Gcal$ (a.k.a. sieve estimators) that approximate certain function spaces. For example, $\Hcal$ can be linear models with polynomial basis functions that take the form $\langle \phi(X), \theta\rangle$, which can gradually approach H\"{o}lder or Sobolev balls, and $\Gcal$ can be a set of neural networks with a growing dimension \citep{chen2007large,chen2022nonparametric,schmidt2020nonparametric}. 
More specifically, when $X$ and $Z$ are bounded, and $h_0$ and $g_0$ are $s$-H{\"o}lder smooth, it is well known that a deep ReLU neural network with depth $O(\log(1/\epsilon))$, width $O(d\epsilon^{-d/s})$ and weights bounded by $\tilde{O}(1)$ could satisfy the approximation error in Theorem \ref{thm: mle-misspec} \citep{schmidt2019deep}, recall that $d$ is the dimension of $X$ and $Z$ . In that case, $\delta_n^2 = \tilde{O}(\epsilon^{-d/s}/n)$ \citep{bartlett2002localized, chen2022nonparametric}. Choosing the architecture of the neural network according to $\epsilon = \tilde{O}(n^{-1/(1+d/s)})$, then Theorem \ref{thm: mle-misspec} shows that by setting $\alpha = O(n^{\frac{1}{(1+d/\alpha)(\min\{\beta+1,2\}+1)}})$, we have $\|\hat{h} - h_0\|_2^2 = \tilde{O}(n^{\frac{\min\{\beta+1,2\}-1}{(1+d/s)(\min\{\beta+1,2\}+1)}})$.

\vspace{-2mm}
\section{Model Selection}\label{sec:model-selection}
\vspace{-2mm}

One advantage of employing the proposed two-staged algorithm is that it enables model selection, which is not attainable when a minimax approach is used. In this section, we explain how we perform model selection. We focus on the model selection for the second stage, as the conditional density $\hat{g}$ from the first stage can be selected via existing methods for model selection for maximum likelihood estimators (e.g. \cite{MLEselection,conditiondensitySelection,vijaykumar2021localization}). 

With an MLE-based estimator $\hat{g}$ obtained from the first stage in Algorithm \ref{alg:model_selection}, we consider model selection using the regularized loss in the second stage, with theoretical guarantees in the $\|\cdot\|_{2}$ metric. More concretely, given a choice of $M$ candidate models $\{h_1, \dots, h_M\}$ and a validation dataset $\{X'_i,Y'_i,Z'_i\}_{i=1}^n$ (distinct from the one used for training models $\{h_i\}$ and $\hat{g}$), the goal is for the final output of the model selection algorithm to achieve oracle rates with respect to the minimal misspecification error.

We present our algorithm in Algorithm~\ref{alg:model_selection}. We provide two options for model selection: Best-ERM and Convex-ERM. Best-ERM selects the model that minimizes the regularized loss on a validation set, while Convex-ERM constructs a convex aggregate of the candidate models that minimizes the regularized loss on a validation set.

\begin{algorithm}[!t] \label{alg:model_selection}
\caption{Model Selection for Regularized Deep IV }
\begin{algorithmic}[1]\label{alg:model_selection}
        \REQUIRE Validation dataset $\{X'_i,Y'_i,Z'_i\}_{i\in[n]}$, $M$ candidate models $\{h_i\}_{i=1}^M$, a regularization hyperparameter $\alpha \in \RR_{>0}$, an estimator $\hat g$, which can obtained by MLE with standard model selection procedure in \citet{MLEselection,conditiondensitySelection}. 
        \STATE   Learn $\hat \theta$ with each of the followings:  
        \begin{align}
  \textbf{Best-ERM:}\quad   {\hat \theta} &= \argmin_{\theta= e_1, \dots, e_M} \E_n[\big(Y - (\hat{\Tcal} h_{\theta})(Z)\big)^2]+ \alpha \cdot \E_n[h_{\theta}(X)^2], \\
    \textbf{Convex-ERM:}\quad   \hat{\theta} & = \argmin_{\theta \in \Theta} \E_n[\big(Y - (\hat{\Tcal} h_{\theta})(Z)\big)^2]+ \alpha \cdot \E_n[h_{\theta}(X)^2], 
    \end{align}
      where $h_\theta = \sum_{j=1}^M \theta_i h_i$, $\sum_{j=1}^M \theta_j =1, \theta_j\geq 0$, $\hat{\Tcal} f(Z) = \E_{x\sim \hat{g}(X|Z)}[f(X)]$ and $\EE_n[\cdot]$ is defined for $\{X'_i,Y'_i,Z'_i\}_{i \in [n]}$. 
    \OUTPUT $h_{\hat \theta}$.
\end{algorithmic}
\end{algorithm}

\begin{theorem} [Model Selection Rates] \label{thm:model_selection}
    Consider the model selection problem given $M$ candidate models with any choice of $\alpha$, over $M$ function classes $\{\Hcal_1, \dots, \Hcal_M\}$. Suppose Assumption \ref{ass:source-cond} and \ref{ass:lower-bound} hold, and there exists $g^{\dag} \in \Gcal$ and  $h^{\dag}_j\in \Hcal_j$ for all $j$ such that $\|h_0 - h^{\dag}_j\|_2 \leq \epsilon_{\Hcal_j}$ and  $\E\big[\int_{\Xcal} (g^{\dag}(x|Z) - g_0(x|Z)  )^2 d \mu(x)\big] \leq \epsilon_\Gcal$. Assume that $Y$ is almost surely bounded by $C_Y$, each candidate model $h_j$ is uniformly bounded in $[-C_{\Hcal},C_{\Hcal}]$ almost surely. Let $\delta_{n,j} = \max\{\delta_{n,\Gcal},\delta_{n,\Hcal_j}, \delta_{n,M}\}$, where $\delta_{n,M}$ denotes the critical radius of the convex hull over M variables for Best-ERM (i.e. $\delta_{n,M} = \frac{\log(M)}{n}$), and the critical radius of the set of $M$ candidate functions for Convex-ERM (i.e. $\delta_{n,M} = \frac{M}{n}$). 
    
    With probability $1-c_1 \exp(c_2 n \sum_j^M\delta_{n,j}^2)$, the output of Convex-ERM or Best-ERM $\hat{\theta}$, satisfies:
    \begin{align*}
        \|h_{\hat{\theta}} - h_0\|_2^2 \leq \min_{j \in [M]}O\left(\frac{\delta_{n,j}^2 }{\alpha^2} + \alpha^{\min\{\beta+1,2\}-1} +\frac{\epsilon_{\Hcal_j}^2}{\alpha}  +  \frac{\epsilon_\Gcal}{\alpha^2}. \right)
    \end{align*}
\end{theorem}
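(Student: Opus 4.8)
The plan is to reduce the model-selection guarantee to a ``one-model'' analysis that mimics Theorem~\ref{thm: mle-misspec}, paying only an extra $\delta_{n,M}$ term for the cost of searching over the $M$ candidates. Fix any index $j\in[M]$ and let $h^\dag_j\in\Hcal_j$ be the near-optimal approximation of $h_0$ guaranteed by the hypothesis, with $\|h_0-h^\dag_j\|_2\le\epsilon_{\Hcal_j}$. For Best-ERM the relevant function class on the validation set is $\{h_1,\dots,h_M\}$, whose (empirical-process) complexity is controlled by $\delta_{n,M}=\log(M)/n$ via a finite-class union bound; for Convex-ERM it is the convex hull of the $M$ candidates, whose critical radius is $\delta_{n,M}=M/n$. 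In both cases, write $\bar\delta_{n,j}:=\delta_{n,j}=\max\{\delta_{n,\Gcal},\delta_{n,\Hcal_j},\delta_{n,M}\}$ and carry out the argument with $\bar\delta_{n,j}$ playing the role of $\delta_n$ in Theorem~\ref{thm: mle-misspec}.

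The core steps mirror the sketch of Theorem~\ref{thm: mle-nonmisspec-noniter} and the misspecified analysis. First, by the regularization-bias bound (Lemma~\ref{lem:bias-noniter}) combined with the $\Hcal_j$-misspecification control, the population regularized optimizer $h_*^{(j)}:=\argmin_{h\in\mathrm{conv}(\{h_i\})}\|Y-\Tcal h\|_2^2+\alpha\|h\|_2^2$ (or over $\{h_i\}$ for Best-ERM) satisfies $\|h_*^{(j)}-h_0\|_2^2 = O(\alpha^{\min\{\beta+1,2\}-1}+\epsilon_{\Hcal_j}^2/\alpha)$ — this is exactly the ``term (b2) plus part of (b3)'' phenomenon, and follows from the shift in optimal value being $O(\alpha^{\min\{\beta+1,2\}}+\epsilon_{\Hcal_j}^2)$ together with $\alpha$-strong convexity. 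Second, by $\alpha$-strong convexity of the validation objective at its minimizer $h_{\hat\theta}$, we bound $\|h_{\hat\theta}-h_*^{(j)}\|_2^2$ by $\frac1\alpha$ times the sum of a centered empirical-process term (controlled by localized Rademacher complexity / finite-class concentration at scale $\bar\delta_{n,j}$, giving $O(\bar\delta_{n,j}^2+\bar\delta_{n,j}\|h_{\hat\theta}-h_*^{(j)}\|_2)$) and the operator-approximation term $\|(\hat\Tcal-\Tcal)(h_{\hat\theta}-h_*^{(j)})\|_1$. Third, the operator-approximation term is handled by Lemma~\ref{lem:operator-error} (adapted to the $\chi^2$-MLE hypothesis $\E[\int(g^\dag-g_0)^2d\mu]\le\epsilon_\Gcal$ rather than realizability), which yields a bound of order $(\delta_{n,\Gcal}+\sqrt{\epsilon_\Gcal})\|h_{\hat\theta}-h_*^{(j)}\|_2$ — the $\sqrt{\epsilon_\Gcal}$ summand, after dividing by $\alpha$ and applying AM-GM to absorb the $\|h_{\hat\theta}-h_*^{(j)}\|_2$ factor, produces the $\epsilon_\Gcal/\alpha^2$ term (b3). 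Solving the resulting self-bounding quadratic inequality in $\|h_{\hat\theta}-h_*^{(j)}\|_2$ gives $\|h_{\hat\theta}-h_*^{(j)}\|_2^2 = O(\bar\delta_{n,j}^2/\alpha^2 + \epsilon_\Gcal/\alpha^2)$. Combining with the bias bound via $\|h_{\hat\theta}-h_0\|_2^2\le 2\|h_{\hat\theta}-h_*^{(j)}\|_2^2+2\|h_*^{(j)}-h_0\|_2^2$ and then taking the minimum over $j\in[M]$ yields the claim; the probability bookkeeping gives the stated $1-c_1\exp(c_2 n\sum_j\delta_{n,j}^2)$ after a union bound over the $M$ events.

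The main obstacle is the empirical-process control for the two different selection rules simultaneously and its interaction with the learned operator $\hat\Tcal$. For Best-ERM over a finite class one wants Bernstein-type concentration with variance proportional to $\|h_{\hat\theta}-h_*^{(j)}\|_2^2$ so that the deviation is $O(\sqrt{\log(M)/n}\cdot\|h_{\hat\theta}-h_*^{(j)}\|_2 + \log(M)/n)$ — the subtlety is that the loss is evaluated through $\hat\Tcal$, not $\Tcal$, so the ``variance'' term is really $\|\hat\Tcal(h_{\hat\theta}-h_*^{(j)})\|_2^2$, and one must relate this back to $\|h_{\hat\theta}-h_*^{(j)}\|_2^2$ using boundedness of $\hat g$ (or of $\Gcal$) and Lemma~\ref{lem:operator-error} again; this is where the $C_\Gcal/C_0$-type constants enter. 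For Convex-ERM one instead needs a localized bound over $\mathrm{star}(\mathrm{conv}(\{h_i\})-\mathrm{conv}(\{h_i\}))$ whose critical radius is $O(M/n)$, plus the same $\hat\Tcal$-vs-$\Tcal$ translation. A secondary subtlety is that $h_*^{(j)}$ lives in the convex hull (resp. finite set) rather than in $\Hcal_j$, so Lemma~\ref{lem:bias-noniter} must be applied to the hull's best approximant, which still satisfies the $\epsilon_{\Hcal_j}$ bound since $\Hcal_j$ is among the candidates; verifying that the strong-convexity-plus-misspecification shift argument goes through verbatim on the hull is routine but must be stated. Everything else is assembling inequalities and optimizing over $\alpha$ is not needed here since the theorem keeps $\alpha$ free.
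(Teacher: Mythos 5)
There is a genuine gap in your second step, and it is precisely the pitfall the paper flags in the discussion after Theorem~\ref{thm:model_selection}. You center the analysis at $h_*^{(j)} := \argmin_{h\in\mathrm{conv}(\{h_i\})} \|Y-\Tcal h\|_2^2+\alpha\|h\|_2^2$ and claim $\|h_*^{(j)}-h_0\|_2^2 = O(\alpha^{\min\{\beta+1,2\}-1}+\epsilon_{\Hcal_j}^2/\alpha)$ on the grounds that ``$\Hcal_j$ is among the candidates.'' It is not: the candidates are the $M$ \emph{trained functions} $h_1,\dots,h_M$, not the classes $\Hcal_1,\dots,\Hcal_M$, so the hull (or finite set) does not contain the population approximant $h^\dag_j$ with $\|h^\dag_j-h_0\|\le\epsilon_{\Hcal_j}$. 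The best element of the candidate set is at most as good as the trained $\hat h_j$, whose distance to $h_0$ is controlled only via Theorem~\ref{thm: mle-misspec} and already carries $\epsilon_{\Hcal_j}^2/\alpha$ (plus $\delta_{n,j}^2/\alpha^2$, etc.). Feeding that into your strong-convexity shift argument (Lemma~\ref{lem:misspec-main} applied to the hull) multiplies by another $1/\alpha$, yielding $\epsilon_{\Hcal_j}^2/\alpha^2$ and $\delta_{n,j}^2/\alpha^3$ — strictly worse than the stated theorem. So as written, your bias bound for $h_*^{(j)}$ is unjustified, and the natural repair degrades the rate.

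The paper's proof avoids this by a structurally different decomposition: it centers at the \emph{unconstrained} regularized minimizer $h_\alpha^*$ (whose bias to $h_0$ is the clean Tikhonov $O(\alpha^{\min\{\beta,2\}})$ with no misspecification term), writes $\|h_{\hat\theta}-h_0\|^2 \le \tfrac{2}{\alpha}\bigl(R(h_{\hat\theta},g_0)-R(h_\alpha^*,g_0)\bigr)+O(\alpha^{\min\{\beta,2\}})$, and then telescopes the excess risk as $\bigl(R(h_{\hat\theta})-R(h_{j_\alpha^*})\bigr)+\bigl(R(h_{j_\alpha^*})-R(h_{\alpha,\Hcal_j}^*)\bigr)+\bigl(R(h_{\alpha,\Hcal_j}^*)-R(h_\alpha^*)\bigr)$, using the optimality of $j_\alpha^*$ to replace $R(h_{j_\alpha^*})$ by $R(h_j)$. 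Each piece is bounded \emph{in risk}: the selection cost is $O((\delta_{n,M}^2+\epsilon_\Gcal)/\alpha)$, the trained model's in-class excess risk is $O(\delta_{n,j}^2+(\delta_{n,j}^2+\epsilon_\Gcal)/\alpha)$, and the class approximation piece is $R(h^\dag_j)-R(h_\alpha^*)\le 4\|h^\dag_j-h_0\|^2+O(\alpha^{\min\{\beta+1,2\}})=4\epsilon_{\Hcal_j}^2+O(\alpha^{\min\{\beta+1,2\}})$. Strong convexity is invoked exactly once, at the outermost level, so $\epsilon_{\Hcal_j}^2$ is divided by $\alpha$ only once. Your first and third steps (empirical-process control at scale $\delta_{n,M}$, the $\hat\Tcal$-vs-$\Tcal$ translation via Corollary~\ref{cor: T-mle}) are sound and match the paper's ingredients, but the overall architecture must be reorganized around excess risk rather than around an $L_2$ triangle inequality through the hull's constrained minimizer.
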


We explain its implications. Most importantly, our obtained rate is the best (i.e., oracle rate) among rates when invoking a result of (convergence result for RDIV in \pref{thm: mle-misspec} with misspecified model) for each function class $\Hcal_i$. 
Some astute readers might wonder whether we can just invoke \pref{thm: mle-misspec} by making new function classes $\Hcal_{\text{best}}:=\{h_{\theta}:\theta=e_1,\dots,e_M\}$ or $\Hcal_{\text{conv}}:=\{h_{\theta}:\sum_j \theta_j = 1, \theta_j \geq 0 \}$, and bound the misspecification error $\epsilon_{\Hcal_{\text{conv}}}$ or $\epsilon_{\Hcal_{\text{best}}}$ by $\|h_j - h_0\|$ will lead to a slower rate with an extra factor of $\frac{1}{\alpha}$. The key is only to handle the misspecification error once to avoid the $\frac{1}{\alpha}$ factor by deferring the invocation of strong convexity and working with the excess risk (difference in the expected loss) instead of the $L_2$ difference.

\vspace{-2mm}
\section{Extension to Iterative Version}\label{sec: iterative}
\vspace{-2mm}

One drawback of the result so far is its lack of adaptability to the degree of ill-posedness in the inverse problem, especially for larger values of $\beta$ corresponding to milder problems, when $\beta \geq 2$. To address this issue, in this section, we further generalize our results in Section \ref{sec:method-intro} and \ref{sec: finite-sample}, and propose an iterated Regularized Deep method, which is summarized in Algorithm \ref{alg:dbiv}. In this algorithm, instead of targeting \eqref{eq:pop-reg-noniter}, we target $h_{m, *}$, which is given by the following recursive least square regression with Tikhonov regularization: \begin{align}\label{eq:popu-iter}
    h_{m, *} = &\argmin_{h\in\Hcal} \E[(Y - \Tcal h(Z))^2]+ \alpha\cdot \E[(h - h_{m-1,*})^2(X)].
\end{align}
and we set $h_{-1,*} = 0$. This is the recursive version of the previous regularized objective in Equation \eqref{eq:pop-reg-noniter}, by using Tikhonov regularization around a prior target $h_{m-1,*}$ instead of $0$. Then, with the learned conditional density $\hat{g}$ by MLE in Equation \eqref{eq:mle-est}, we construct an estimator in \eqref{eq: emp-iter} by replacing expectation and an operator $\Tcal$ with empirical approximation and the learned operator $\hat \Tcal$, respectively, in Equation \eqref{eq:popu-iter}. 

\begin{algorithm}[!t]
\caption{Iterative Regularized Deep IV}
\begin{algorithmic}[1]\label{alg:dbiv}
        \REQUIRE Dataset $\{X_i,Y_i,Z_i\}_{i\in[n]}$, function class $\Gcal$, function class $\Hcal$, $\hat{h}_{-1} = 0$
        \STATE Learn $\hat{g}(x|z)$ by MLE \eqref{eq:mle-est}
        \FOR{$m = 1,2,\cdots, M$}
        \STATE   Learn $\hat{h}_m$ by iterative Tikhonov estimator as the following:
        \begin{align}\label{eq: emp-iter}
    \hat{h}_{m}=\argmin_{h\in\Hcal} \E_n[\big(Y - \hat{\Tcal} h(Z)\big)^2]+ \alpha \cdot \E_n[\big(h(X) - \hat{h}_{m-1}(X)\big)^2],
\end{align}
        \ENDFOR
        \OUTPUT $\hat h_M$
\end{algorithmic}
\end{algorithm}

Now, we delve into estimating the finite sample convergence rate of Algorithm \ref{alg:dbiv}. Our findings are summarized in the following theorem.

\vspace{-2mm}
\begin{theorem}[$L_2$ convergence rate for iterative MLE estimator]\label{thm: mle-nonmisspec}
    Suppose Assumption  \ref{ass:source-cond}, \ref{ass: realizability}, \ref{ass:lower-bound} hold. Let $\|Y\|_\infty\leq C_Y$, $\|h\|_\infty \leq C_\Hcal$ holds for all $h\in\Hcal$, $\|g\|_\infty \leq C_\Gcal$ holds for all $g\in\Gcal$. By setting $\alpha = \delta_n^{\frac{2}{2+\min\{\beta, 2m\}}}$, with probability at least $1- c_1 m\exp(c_2n\delta_n^2)$, we have $$
    \| \hat{h}_m - h_0 \|_2^2 =O\big(16^{2m}\cdot\delta_n^{\frac{2\min\{\beta, 2m\}}{2 + \min\{\beta, 2m\}}}\big),
    $$
    here $\delta_{n} $ has the same definition in Theorem \ref{thm: mle-nonmisspec-noniter}.
\end{theorem}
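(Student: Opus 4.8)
The plan is to reuse the two-term decomposition behind \pref{thm: mle-nonmisspec-noniter}, replacing the non-iterative quantities by their $m$-step analogues and then unrolling a one-step recursion for the statistical error. Writing $\|\hat h_m - h_0\|_2^2 \le 2\|\hat h_m - h_{m,*}\|_2^2 + 2\|h_{m,*} - h_0\|_2^2$ with $h_{m,*}$ the population iterate of \eqref{eq:popu-iter}, the second term is the \emph{iterated regularization bias}: expanding the (unconstrained) population recursion in closed form gives $h_{m,*} = (\Tcal^*\Tcal + \alpha I)^{-1}(\Tcal^* r_0 + \alpha h_{m-1,*})$ with $h_{-1,*}=0$, hence, using $\Tcal h_0 = r_0$, $h_0 - h_{m,*} = \big(\alpha(\Tcal^*\Tcal + \alpha I)^{-1}\big)^m h_0$. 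Substituting the source condition $h_0 = (\Tcal^*\Tcal)^{\beta/2} w_0$ and applying the spectral theorem to the self-adjoint $\Tcal^*\Tcal$ reduces the bound to $\sup_{\lambda\ge 0}\lambda^{\beta/2}\big(\tfrac{\alpha}{\lambda+\alpha}\big)^m = O\!\big(\alpha^{\min\{\beta/2,\,m\}}\big)$, so $\|h_{m,*}-h_0\|_2^2 = O\!\big(\alpha^{\min\{\beta,2m\}}\|w_0\|_2^2\big)$; this is the $m$-iteration analogue of \pref{lem:bias-noniter} (the case $\Hcal\subsetneq L_2(X)$ follows from realizability exactly as there).

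It remains to control $e_m := \|\hat h_m - h_{m,*}\|_2$ by a recursion. Since the objective in \eqref{eq: emp-iter} is $2\alpha$-strongly convex in the empirical $L_2(X)$ norm $\|\cdot\|_{2,n}$ and $h_{m,*}\in\Hcal$ by definition, repeating the manipulation behind \pref{lem: bound-term2}, now with the Tikhonov penalty re-centered at $\hat h_{m-1}$ rather than $0$, gives, on an event of probability at least $1-c_1\exp(c_2 n\delta_{n,\Hcal}^2)$,
\begin{align*}
\alpha\,\|\hat h_m - h_{m,*}\|_{2,n}^2
&\le \underbrace{\big|(\E_n-\E)\big[L(\hat\Tcal\hat h_m) - L(\hat\Tcal h_{m,*})\big]\big|}_{(\mathrm{I})}
 + \underbrace{\|(\hat\Tcal-\Tcal)(\hat h_m - h_{m,*})\|_1}_{(\mathrm{II})} \\
&\qquad + \underbrace{2\alpha\,\big\langle \hat h_{m-1}-h_{m-1,*},\ \hat h_m - h_{m,*}\big\rangle}_{(\mathrm{III})},
\end{align*}
with $L(f):=(Y-f(Z))^2$. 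Term $(\mathrm{III})$ is the only genuinely new contribution; it appears as an exact algebraic identity from expanding $\E_n[(h-\hat h_{m-1})^2]-\E_n[(h-h_{m-1,*})^2]$ at $h=h_{m,*}$ and $h=\hat h_m$, and Cauchy--Schwarz bounds it by $2\alpha\,e_{m-1}e_m$ (up to a $\|\cdot\|_{2,n}$-to-$\|\cdot\|_2$ conversion, costing an extra $O(\delta_n^2)$). Term $(\mathrm{I})$ depends on $\hat h_{m-1}$ only through $\hat h_m\in\Hcal$, so the localized bound over $\Hcal-\Hcal$ applies and $(\mathrm{I})=O(\delta_{n,\Hcal}^2+\delta_{n,\Hcal}e_m)$; term $(\mathrm{II})\le (C_\Gcal/C_0+1)\,\delta_{n,\Gcal}e_m$ by \pref{lem:operator-error} applied to $\hat h_m-h_{m,*}\in\Hcal-\Hcal$. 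Converting $\|\cdot\|_{2,n}$ back to $\|\cdot\|_2$, dividing by $\alpha$, solving the resulting quadratic in $e_m$, and absorbing cross terms with AM--GM yields $e_m^2 \le C\big(\delta_n^2/\alpha^2 + e_{m-1}^2\big)$ for an absolute constant $C$ (tracking the norm conversions one may take $C\le 16^2$). The first iterate uses center $0$ and therefore coincides with the non-iterative RDIV estimator of \pref{alg:dbiv-noniterative}, so the base case $e_1^2 = O(\delta_n^2/\alpha^2)$ is precisely the deviation bound inside \pref{thm: mle-nonmisspec-noniter}; unrolling $m$ times gives $e_m^2 = O\!\big(C^m\delta_n^2/\alpha^2\big)= O\!\big(16^{2m}\delta_n^2/\alpha^2\big)$.

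Combining the two pieces, $\|\hat h_m - h_0\|_2^2 = O\!\big(16^{2m}\delta_n^2/\alpha^2 + \alpha^{\min\{\beta,2m\}}\big)$, and choosing $\alpha=\delta_n^{2/(2+\min\{\beta,2m\})}$ equalizes the two terms at $\delta_n^{2\min\{\beta,2m\}/(2+\min\{\beta,2m\})}$, which is the claimed rate; the probability $1-c_1 m\exp(c_2 n\delta_n^2)$ follows from a union bound over the $m$ per-iteration empirical-process events and the single MLE/Hellinger event underlying \pref{lem:operator-error}.

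The main obstacle is the recursion in the second paragraph: the penalty center $\hat h_{m-1}$ at step $m$ is itself fitted from the same sample used for $\hat h_m$, so it cannot be frozen. The resolution is structural --- $\hat h_{m-1}$ enters only through $(\mathrm{III})$, which is an exact identity requiring no concentration, while every concentration step is uniform over $\Hcal$ (resp.\ $\Gcal$) and hence indifferent to which element the random iterate $\hat h_{m-1}$ happens to be; this is exactly why the symmetrized-star-hull critical radius $\delta_{n,\Hcal}$ is the right complexity and why recycling the dataset across iterations is legitimate. The remaining work is bookkeeping: the MLE error is propagated once via the $L_1$ bound of \pref{lem:operator-error}, and the multiplicative constant $C^m$ accumulated through the recursion is the (benign, since $m$ is typically $O(\log n)$ or constant) source of the $16^{2m}$ prefactor.
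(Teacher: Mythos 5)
Your proposal is correct and follows essentially the same route as the paper's proof: the same bias/variance decomposition around the population iterate $h_{m,*}$, the same one-step recursion $e_m^2 \lesssim \delta_n^2/\alpha^2 + e_{m-1}^2$ obtained by isolating the cross term $2\alpha\langle \hat h_{m-1}-h_{m-1,*},\hat h_m-h_{m,*}\rangle$ and bounding it by Cauchy--Schwarz, the same uniform-over-$\Hcal$ concentration to justify reusing the sample across iterations, and the same union bound and choice of $\alpha$. The only cosmetic difference is that you derive the iterated Tikhonov bias bound $\|h_{m,*}-h_0\|_2^2=O(\alpha^{\min\{\beta,2m\}})$ directly from the spectral calculus (where the supremum should be taken over the spectrum of $\Tcal^*\Tcal$, which lies in $[0,1]$, rather than over all $\lambda\ge 0$), whereas the paper cites Lemma 5 of \citet{bennett2023source}.
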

\vspace{-2mm}

Importantly, we can have a rate $O\big(\delta_n^{\frac{2\beta}{2 + \beta}}\big)$ in relatively mild conditions while the previous Theorem~\ref{thm: mle-nonmisspec-noniter} (non-iteratie version) can only allow for $O\big(\delta_n^{\frac{2\min(\beta,2)}{2 + 2\min(\beta,2)}}\big)$, and cannot fully leverage the well-posedness of $h_0$, illustrated by the source condtion $\beta$. Indeed, if we choose the iteration number $m = \lceil \min\{\beta/ 2, \log\log(1/\delta_n)\} \rceil$, then we get a rate of $$\|\hat{h}_m - h_0\|_2^2 = O\bigg(\min\{16^\beta, \log (1/\delta_n)\}\delta_n^{\frac{2\min\{\beta, 2m\}}{2 + \min\{\beta, 2m\}}}\bigg) .$$ Hence for any constant $\beta$, as $n$ grows, eventually $\log\log 1/\delta_n \geq \beta $, and we get the rate of $O\big(\delta_n^{\frac{2\beta}{2 + \beta}}\big)$. This rate can be achieved even if $\beta$ grows with $n$, as long as it grows slower than $O(\log\log 1/\delta_n)$. If $\delta_n = O(n^{-\iota})$ for some $\iota > 0 $, e.g. RKHS or first order Sobolev space \citep[Chapt 14.1.2]{wainwright2019high}, then we note that we can set $m = \lceil \min\{\beta/ 2, \sqrt{\log(1/\delta_n)}\} \rceil$, and $16^{\sqrt{\log(1/\delta_n)}} = O(n^{\epsilon})$ for any $\epsilon > 0$, thus we still obtain a rate of $O(\delta_n^{\frac{2\beta}{2+\beta}})$ when $\sqrt{\log(1/\delta_n)} \geq \beta/2$. In such a case, we can obtain a $O(\delta_n^{\frac{2\beta}{2+\beta}})$ rate even $\beta$ grows with $n$, as long as it grows slower than $\sqrt{\log(1/\delta_n)}$.

Our results for the iterative estimator match the state-of-the-art convergence rate with respect to $L_2$ norm for an iterative estimator in \cite{bennett2023source}. However, their method requires a minimax computation oracle, while our method does not.


\section{Numerical Experiments}\label{sec:experiment}

In this section, we evaluate our proposal by numerical simulation.  In particular, we present the performance of RDIV when we use neural networks as the function approximator and the validity of the proposed model selection procedure. We show that with model selection, our method can achieve state-of-the-art performance in a wide range of data-generating processes.  

\subsection{Experimental Settings}

\begin{figure}
    \centering
\begin{tikzpicture}
\node[draw, circle, text centered, minimum size=0.75cm, line width= 1] (x) {$S'$};
\node[draw, circle, above=1 of x,
text centered, minimum size=0.75cm, dashed,line width= 1] (u) {$U$};
\node[draw, circle,left=2.5 of x, minimum size=0.75cm, text centered,line width= 1] (z) {$Q'$};
\node[draw, circle,right=2.5 of x, minimum size=0.75cm, text centered,line width= 1] (w) {$W'$};
\node[draw, circle, below left = 1 and 1 of x, minimum size=0.75cm, text centered,line width= 1] (a) {$A$};
\node[draw, circle, below right = 1 and 1 of x, minimum size=0.75cm, text centered,line width= 1] (y) {$Y$};
\draw[-latex, line width= 1] (x) -- (a);
\draw[-latex, line width= 1] (x) -- (y);
\draw[-latex, line width= 1, dashed] (x) -- (z);
\draw[-latex, line width= 1, dashed] (x) -- (w);
\draw[-latex, line width= 1, dashed] (u) -- (x);
\draw[-latex, line width= 1] (u) -- (z);
\draw[-latex, line width= 1] (u) -- (w);
\draw[-latex, line width= 1] (u) -- (a);
\draw[-latex, line width= 1] (u) -- (y);
\draw[-latex, line width= 1] (a) -- (y);
\draw[-latex, line width= 1, dashed] (z) -- (a);
\draw[-latex, line width= 1, dashed] (w) -- (y);
\end{tikzpicture}
\caption{A typical causal diagram for negative controls. The dashed edges may be absent, and the dashed circle around $S'$ indicates that $U$ is unobserved.}
    \label{fig:causal_dag}
\end{figure}
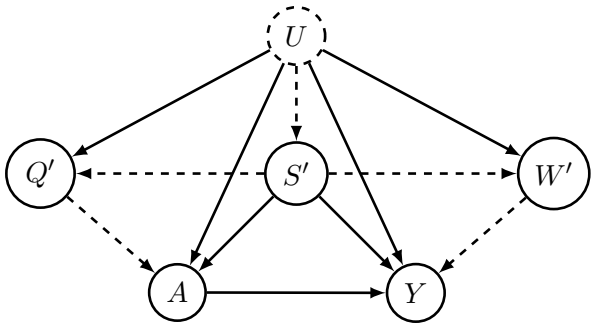

\paragraph{Experiment Design.} In our experiment, we test our method on a synthetic dataset. We adjust the data generating process (DGP) for proximal causal inference used in \cite{cui2020semiparametric,miao2018confounding,deaner2021many}. Concretely, we generate multi-dimensional
variables $U',S', W',Q',A$, where $U$ is an unobserved confounder, $S'\in d_{S}$ is the observed covariate, $W'\in d_{W}$ is the negative control outcomes, $Q'\in d_{Q}$ is the negative control actions, and $A$ is the selected treatment, as described in \pref{fig:causal_dag}. We left the detailed generation process in Appendix \ref{sec:exp-details}.  For a detailed understanding of this setup, we refer the reader to Section 2 of \citet{kallus2021causal}. It is well known that there exists a bridge function $h_0'$ such that the following moment condition holds \citep{cui2020semiparametric,kallus2021causal}: $$
\E[Y - h_0'(W',A,S')|Q',A,S'] = 0,
$$
which allows the concrete form of \eqref{eq:cond-mom-condi}.
To introduce nonlinearity, we transform $(S', W',Q')$ into $(S,W,Q)$ via $S = g(S'), W = g(W'), Q=g(Q')$, where $g(\cdot)$ is a nonlinear invertible function
applied elementwise to $S',W',Q'$ respectively. We consider several forms of $g(\cdot)$, including identity, polynomial, sigmoid design, and exponential function.  In the final data, we only observe $(S,W,Q)$ but not $(S',W',Q')$.  Here we use 6 different $g(\cdot)$: Id$(t) = t$, Poly$(t)$ = $t^3$, LogSigmoid$(t)= \log(1+ |16*x - 8|) \cdot \operatorname{sign}(x)$, Piecewise$(t) = 3(x-2)1_{x\leq1}+ \log(8x-8)1_{x\geq1}$, Sigmoid$(t) = \frac{5}{1+\exp(-0.1*x)}$ and CubicRoot = $x^{1/3}$.

\paragraph{Methods to compare.}

In this experiment, our goal is to estimate the counterfactual mean parameter $\E[Y(1)]$, which is unique as long as \eqref{eq:cond-mom-condi} holds. We learn $h_0$ in \eqref{eq:cond-mom-condi} by RDIV, which corresponds to the procedure in Algorithm \ref{alg:dbiv-noniterative} with MLE for conditional density estimation.   We show results for different values for $\alpha \in \{0.01,0.1\}$, and compare the performance of our approach to that of several different methods, including KernelIV \citep{singh2019kernel}, DeepIV \citep{hartford2017deep}, DeepFeatureIV \citep{xu2021deep}, and AGMM \citep{DikkalaNishanth2020MEoC}. Note that DeepIV can be viewed as a special case of our methods, with $\alpha$ fixed to be 0. In the first stage of our algorithm, we use a three-layer mixture density network \citep{hartford2017deep,rothfuss2019conditional} as the approximator of the conditional density. In the second stage, we use a three-layer fully-connected neural network as the approximators for RDIV, DeepIV, AGMM, and DFIV. We present the results of our method and its comparison with previous benchmarks in terms of MSE normalized by the true estimand value in Table \ref{table:15-1-500}-\ref{table:20-15-1000}. Every estimate is calculated by 100 random replications.  The confidence interval is calculated by 2 times the standard deviation. 

\paragraph{Hyperparameter settings.} For RDIV, we use Adam as the optimizer for both density estimation and Tikhonov regression, with a default learning rate of $10^{-4}$, a batch size of $50$, and a training epoch of $300$. We will show how to choose these hyperparameters with our model selection procedure (Algorithm \ref{alg:model_selection}) in Section~\ref{subsec:model_seleciton}. For all baselines except for AGMM, we adapt the hyperparameters in their original codebase. For AGMM, we tune the learning rate for the learner and adversary for every $g(\cdot)$ independently. We follow \cite{singh2019kernel} to use Gaussian RKHS for function approximation and their method for tuning the regularization parameter. When $n=500$, the learning rate of the learner and adversary in AGMM are manually set to $10^{-4}$ for LogSigmoid, Piecewise, and Sigmoid, and $10^{-3}$ for Id, Poly, and CubicRoot. When $n=1000$, the learning rate of the learner and adversary in AGMM are manually set to $10^{-4}$ for Piecewise and Sigmoid, and $10^{-3}$ for LogSigmoid, Piecewise, and CubicRoot.  The training parameter of DFIV is adopted from \citet{xu2021deep}. Note that tuning DFIV is highly intractable in practice, as their method is essentially a bilevel optimization, which is known to be hard to solve \citep{hong2023two}.

\subsection{Results}

First, we can observe that although our estimator resembles DeepIV, the later fix $\alpha =0$ in \eqref{eq: emp-iter}, RDIV outperforms DeepIV for all $g(\cdot)$.  This is due to the nonzero regularization term, which improves the performance of our estimator by a better tradeoff between bias and variance. Second, in most cases, AGMM and DFIV are outperformed by algorithms that only need single-level optimization (RDIV, KernelIV, DeepIV). This would be because, in these methods, optimization of the loss function is much harder, which results in the inaccuracy of estimators. Thirdly, while it is seen that Kernel IV is comparable to RDIV in some scenarios such as in Table \ref{table:20-15-500} and \ref{table:20-15-1000}, in the next section, we will show that our RDIV equipped with a model selection procedure can generally outperform KernelIV. 



\begin{table}[t!]
\centering
\caption{ $\mathbb{E}[Y(1)]$: $d_S = d_Q = 15$, $d_W = 1$, $n_1 = 500$.  }
\label{table:15-1-500}
\resizebox{\textwidth}{!}{%
\begin{tabular}{@{}lcccccr@{}}
\toprule
$g(t)$ & RDIV ($\alpha = 0.01$) & RDIV ($\alpha =0.1$) & KernelIV & DeepIV & DFIV & AGMM \\ \midrule
Id$(t)$  & 0.0077 $\pm$ 0.0012  & \bf{0.0021 $\pm$ 0.0007}  & 0.0193 $\pm$ 0.0018  & 0.0089 $\pm$ 0.0015   & 0.1069 $\pm$ 0.0218  &  0.0198 $\pm$ 0.0011 
 \\
Poly$(t)$ & \bf{0.0150 $\pm$ 0.0057}  & 0.0904 $\pm$ 0.0202  & 0.0439 $\pm$ 0.0062  & 0.0887 $\pm$ 0.0276   & 0.0920 $\pm$ 0.0046   & 0.0453 $\pm$ 0.0023 \\
LogSigmoid$(t)$ & 0.0094 $\pm$ 0.0013  & \bf{ 0.0022 $\pm$ 0.0009 } & 0.0031 $\pm$ 0.0008  & 0.0152 $\pm$ 0.0026   & 0.1444 $\pm$ 0.0080  & 0.0042 $\pm$ 0.0010     \\
Piecewise$(t)$   & 0.0070 $\pm$ 0.0017  & \bf{0.0024 $\pm$ 0.0009}  & 0.0041 $\pm$ 0.0012  & 0.0076 $\pm$ 0.0012  & 0.0150 $\pm$ 0.0026   & 0.0128 $\pm$ 0.0024 
\\ 
Sigmoid$(t) $  & 0.0206 $\pm$ 0.0026  & \bf{0.0021 $\pm$ 0.0006}  & 0.0380 $\pm$ 0.0025  & 0.0278 $\pm$ 0.0025   & 0.1846 $\pm$ 0.0092 & 0.0070 $\pm$ 0.0014   \\
CubicRoot$(t)$ & 0.0095 $\pm$ 0.0014  & \bf{0.0024 $\pm$ 0.0007}  & 0.0511 $\pm$ 0.0039  & 0.0161 $\pm$ 0.0018  & 0.1357 $\pm$ 0.0200  & 0.0536 $\pm$ 0.0021   \\

\bottomrule
\end{tabular}%
}
\end{table}
\begin{table}[t!]
\centering
\caption{ $d_S = d_Q = 15$, $d_W = 1$, $n_1 = 1000$.}
\label{table:15-1-1000}
\resizebox{\textwidth}{!}{%
\begin{tabular}{@{}lcccccr@{}}
\toprule
$g(t)$ & RDIV ($\alpha = 0.01$) & RDIV ($\alpha =0.1$) & KernelIV & DeepIV & DFIV & AGMM \\ \midrule
Id$(t)$ & 0.0106 $\pm$ 0.0013  & \bf{0.0014 $\pm$ 0.0003}  & 0.0145 $\pm$ 0.0013  & 0.0128 $\pm$ 0.0015  & 0.1162 $\pm$ 0.0052  & 0.0217 $\pm$ 0.0135 \\
Poly$(t)$ & 0.0164 $\pm$ 0.0020  & \bf{0.0037 $\pm$ 0.0027}  & 0.0396 $\pm$ 0.0038  & 0.0182 $\pm$ 0.0023  & 0.1256 $\pm$ 0.0044 & 0.0054 $\pm$ 0.0031 
 \\
LogSigmoid$(t)$ & 0.0078 $\pm$ 0.0009  & \bf{0.0009 $\pm$ 0.0003}  & 0.0259 $\pm$ 0.0023  & 0.0262 $\pm$ 0.0023  & 0.1618 $\pm$ 0.0482  & 0.0053 $\pm$ 0.0010  \\
Piecewise $(t)$ & \bf{0.0017 $\pm$ 0.0004}  & 0.0059 $\pm$ 0.0008  & 0.0080 $\pm$ 0.0008  & {0.0019 $\pm$ 0.0005}  & 0.1623 $\pm$ 0.0674 & 0.0014 $\pm$ 0.0011 \\ 
Sigmoid$(t)$ & \bf{0.0077 $\pm$ 0.0016}  & {0.0082 $\pm$ 0.0023}  & 0.0311 $\pm$ 0.0014  & 0.0110 $\pm$ 0.0019  & 0.2085 $\pm$ 0.0443& 0.0296 $\pm$ 0.0023  \\
CubicRoot$(t)$ & 0.0254 $\pm$ 0.0021  & \bf{0.0048 $\pm$ 0.0008}  & 0.0459 $\pm$ 0.0024  & 0.0248 $\pm$ 0.0022  & 0.1401 $\pm$ 0.0047  & 0.0650 $\pm$ 0.0035  \\

\bottomrule
\end{tabular}%
}
\end{table}
\begin{table}[t!]
\centering
\caption{ $d_S = d_Q = 20$, $d_W = 10$, $n_1 = 500$.}
\label{table:20-15-500}
\resizebox{\textwidth}{!}{%
\begin{tabular}{@{}lcccccr@{}}
\toprule
$g(t)$ & RDIV ($\alpha = 0.01$) & RDIV ($\alpha =0.1$) & KernelIV & DeepIV & DFIV & AGMM \\ \midrule
Id$(t)$  & 0.0272 $\pm$ 0.0022  & \bf{0.0055 $\pm$ 0.0009}  & 0.0088 $\pm$ 0.0016  & 0.0364 $\pm$ 0.0025   & 0.0291 $\pm$ 0.0060  & 0.3291 $\pm$ 0.0115 
 \\
Poly$(t)$ & \bf{0.0067 $\pm$ 0.0016}  & 0.0230 $\pm$ 0.0051  & 0.0697 $\pm$ 0.0041  & 0.0263 $\pm$ 0.0050  & 0.0997 $\pm$ 0.0046  & 0.0409 $\pm$ 0.0225    \\
LogSigmoid$(t)$ & 0.0905 $\pm$ 0.0058  & 0.0525 $\pm$ 0.0054  & 0.0335 $\pm$ 0.0014  & 0.0960 $\pm$ 0.0066  & 0.2059 $\pm$ 0.0826  & \bf{0.0218 $\pm$ 0.0027 }
     \\
Piecewise$(t)$   & 0.0305 $\pm$ 0.0043  & \bf{0.0104 $\pm$ 0.0021}  & 0.0359 $\pm$ 0.0010  & 0.0225 $\pm$ 0.0031  & 0.7626 $\pm$ 0.9996  & 0.0136 $\pm$ 0.0010 
\\ 
Sigmoid$(t) $  & 0.1481 $\pm$ 0.0083  & {0.0106 $\pm$ 0.0028}  & \bf{0.0018 $\pm$ 0.0004}  & 0.1983 $\pm$ 0.0117   & 0.3545 $\pm$ 0.0494  &0.0307 $\pm$ 0.0195    \\
CubicRoot$(t)$ & 0.0810 $\pm$ 0.0039  & 0.0288 $\pm$ 0.0025  &\bf{ 0.0021 $\pm$ 0.0004 } & 0.0949 $\pm$ 0.0050  & 0.0956 $\pm$ 0.0453  & 0.3461 $\pm$ 0.0121    \\

\bottomrule
\end{tabular}%
}
\end{table}
\begin{table}[t!]
\centering
\caption{ $d_S = d_Q = 20$, $d_W = 10$, $n_1 = 1000$.}
\label{table:20-15-1000}
\resizebox{\textwidth}{!}{%
\begin{tabular}{@{}lcccccc@{}}
\toprule
$g(t)$ & RDIV ($\alpha = 0.01$) & RDIV ($\alpha =0.1$) & KernelIV & DeepIV & DFIV & AGMM \\ \midrule
Id$(t)$  & 0.0652 $\pm$ 0.0035  & 0.0269 $\pm$ 0.0020  & \bf{0.0009 $\pm$ 0.0002}  & 0.0639 $\pm$ 0.0033   & 0.1442 $\pm$ 0.2461  & 0.1321 $\pm$ 0.0029 
 \\
Poly$(t)$ & 0.0861 $\pm$ 0.0076  & \bf{0.0224 $\pm$ 0.0034}  & 0.0465 $\pm$ 0.0021  & 0.1148 $\pm$ 0.0082  & 0.0951 $\pm$ 0.0031  & 0.1796 $\pm$ 0.0023    \\
LogSigmoid$(t)$ & 0.0649 $\pm$ 0.0046  & 0.0280 $\pm$ 0.0025  & \bf{0.0197 $\pm$ 0.0014}  & 0.0759 $\pm$ 0.0045  & 0.2949 $\pm$ 0.2917  & 0.0247 $\pm$ 0.0013 \\
Piecewise$(t)$   & {0.0039 $\pm$ 0.0008}  & \bf{0.0037 $\pm$ 0.0006}  & 0.0215 $\pm$ 0.0006  & 0.0065 $\pm$ 0.0012   & 0.5442 $\pm$ 0.4784  & 0.0133 $\pm$ 0.0009 
\\ 
Sigmoid$(t) $   & 0.1112 $\pm$ 0.0053  & 0.0091 $\pm$ 0.0028  & \bf{0.0037 $\pm$ 0.0005}  & 0.1493 $\pm$ 0.0058   & 0.3332 $\pm$ 0.0652  & 0.0650 $\pm$ 0.0029   \\
CubicRoot$(t)$ & 0.0990 $\pm$ 0.0042  & 0.0802 $\pm$ 0.0046  &\bf{ 0.0021 $\pm$ 0.0004 } & 0.1070 $\pm$ 0.0043   & 0.0956 $\pm$ 0.0453  & 0.3461 $\pm$ 0.0121    \\

\bottomrule
\end{tabular}%
}
\end{table}

\begin{table}[t!]
\centering
\caption{ Model selection results based on Best ERM. The left tabular is generated from a data size of $n_1 = 500$, while the right tabular is generated from a dataset with $n_1 = 1000$. Both datasets satisfies $d_S = d_Q = 20, d_W = 10$.}
\resizebox{\textwidth}{!}{
\label{table:model-sel-erm}
\begin{tabular}{lccc|ccc}
\toprule
$g(t)$ & RDIV ($\alpha = 0.01$) & RDIV ($\alpha =0.1$) & KernelIV  & RDIV ($\alpha = 0.01$) & RDIV ($\alpha =0.1$) & KernelIV  \\ \midrule
Id$(t)$  & \bf{0.0017 $\pm$ 0.0017}  & 0.0047 $\pm$ 0.0021  &0.0088 $\pm$ 0.0016  &0.0102 $\pm$ 0.0028 & 0.0014 $\pm$ 0.0009 &\bf{0.0009 $\pm$ 0.0002}
 \\
Poly$(t)$ & \bf{0.0032 $\pm$ 0.0024}   & 0.0272 $\pm$ 0.0097 &0.0697 $\pm$ 0.0041  &0.0313 $\pm$ 0.0137 &\bf{0.0049 $\pm$ 0.0026}    & 0.0465 $\pm$ 0.0021\\
LogSigmoid$(t)$ & 0.0121 $\pm$ 0.0055   & \bf{0.0019 $\pm$ 0.0007} & 0.0335 $\pm$ 0.0014  & 0.0078 $\pm$ 0.0020 &\bf{0.0008 $\pm$ 0.0004} & 0.0197 $\pm$ 0.0014\\
Piecewise$(t)$   & 0.0159 $\pm$ 0.0121   & \bf{0.0020 $\pm$ 0.0019} & 0.0359 $\pm$ 0.0010  &\bf{0.0024 $\pm$ 0.0013}  & 0.0034 $\pm$ 0.0027&0.0215 $\pm$ 0.0006
\\ 
Sigmoid$(t) $  & 0.1655 $\pm$ 0.0144   & 0.0937 $\pm$ 0.0174& \bf{0.0018 $\pm$ 0.0004}  &0.1538 $\pm$ 0.0078 &  0.0863 $\pm$ 0.0187&  \bf{0.0037 $\pm$ 0.0005} \\
CubicRoot$(t)$ & 0.0034 $\pm$ 0.0017   & \bf{0.0019 $\pm$ 0.0021}& 0.0021 $\pm$ 0.0004  &0.0148 $\pm$ 0.0048 &  0.0036 $\pm$ 0.0035  & \bf{0.0021 $\pm$ 0.0004} \\

\bottomrule
\end{tabular}%
}
\end{table}
\subsection{Model selection}
\label{subsec:model_seleciton}

We also report our results in model selection for the second stage by implementing Best-ERM in Algorithm \ref{alg:model_selection} and demonstrate how it improves our results. 
Specifically, our models $h_1, \dots, h_M$ are trained by different hyperparameters. First, we employ model selection for the density function by Best ERM. Then with the trained density function in the first stage, we further apply Best ERM to the models in the second stage.  In the model selection experiments, we fix the dimension of our dataset to be $d_S = d_Q = 20$, $d_W = 10$. We compute the mean and confidence interval with 10 independent trials. We set the candidate training parameters as follows: the number of epochs $\in\{300,400\}$, the batch size for the 1st stage $\in\{30,50\}$ and the batch size for the 2nd stage $\in\{50,60,100\}$, the learning rate $\in\{10^{-4},10^{-3}\}$, the number of mixture components $\in\{40,50,60\}$. As shown in Table \ref{table:model-sel-erm}, when RDIV is equipped with model selection techniques, our method outperforms KernelIV in all but one case when the dataset size is 500, and outperforms KernelIV in 3 out of 6 settings when the dataset size is 1000. Our approach demonstrates its effectiveness by outperforming previous benchmarks across a diverse set of Data Generating Processes (DGP). This achievement is attributed to both the ease of optimization of RDIV and its theoretically sound integration with model selection procedures.

\section{Conclusion}
In this paper, we study NPIV regression with general function approximation. We propose a new estimator defined by the loss that organically combines Tikhonov regularization and an MLE estimator, namely the Regularized DeepIV (RDIV).  We show that our estimator converges to the least norm solution, and derive its convergence rate. Notably, our method does not rely on uniqueness or minimax computation oracle. We further illustrate that our method can be incorporated into model selection and show that our procedure can achieve the oracle rate with respect to the minimal model misspecification error. When extended to an iterative estimator, our method achieves the state-of-the-art convergence rate. Moreover, we justify our method through numerical simulations. Our experiments show that RDIV outperforms existing benchmarks in a wide range of circumstances.
\newpage
\bibliography{Note/rl}
\newpage
\appendix

\section{Additional Related Works for Model Selection}\label{sec:related-works}
\paragraph{Model Selection.}Under the classical supervised learning setting, a common approach is to perform empirical risk minimization (ERM) on a separate validation set, and choose the candidate model that achieves the smallest risk \citep{modelSelection}, or similarly, through M-fold cross-validation which splits the data into M folds, and evaluates the risk on the different held out set for each model \citep{crossVal}. As an alternative to selecting a single model, convex aggregation or linear aggregation is employed to find the best convex/linear combination of models \citep{convexAgg, linearAgg}. However, it can be shown that the aforementioned approaches are sub-optimal in the sense that they cannot achieve the optimal $\frac{\log(M)}{n}$ rate for the model selection residual. To tackle this challenge, \cite{lecue2009aggregation} proposed a different approach for convex aggregation by first finding a subset of "almost minimizers" - a subset of the candidate functions that is sufficiently close to the minimizer within the candidates on the validation set, and then finding a best aggregate in the convex hull of this subset. This approach achieves the optimal model selection rates as it performs ERM on a subset that is much smaller than the convex hull of all candidate models, thereby reducing the statistical error. Furthermore, other optimal model selection approaches include the Q-aggregation approach which performs ERM with a modified loss that adds an additional penalty based on individual model performance \citep{lecue2014optimal}. 

\section{Results when Using $\chi^2$-MLE}\label{sec:chi2}
In this section, we consider another density estimation for the density estimation, the $\chi^2$-MLE: 
 \begin{align}\label{eq:chi2-mle-est}
\hat{g} = \argmin_{g\in\Gcal} 0.5\cdot \E_n\bigg[\int_{\Xcal} g^2(x|Z)d\mu(x)\bigg] - \E_n[g(X|Z)]. 
\end{align}

\subsection{Finite Sample Results}

Although Assumption \ref{ass:lower-bound} is widely accepted in previous works, in practice, it often fails to hold when $g_0$ does not have full support on $\Xcal$. To address this drawback of MLE, in this subsection, we further discuss the finite sample convergence rate of Algorithm \ref{alg:dbiv} when the conditional density estimation is performed by $\chi^2$-MLE. In this case, the first step estimation procedure is given by Equation \eqref{eq:chi2-mle-est}. Notably, our guarantee does not relate to the lower bound of $g_0(x|z)$. Our results rely on the following assumption, which characterizes the smoothness of function class $\Hcal$.
\begin{assumption}[$\gamma$-Smoothness]\label{ass:gamma-smooth}
    For all $h - h' \in \Hcal - \Hcal$, we assume that $\|h - h'\|_\infty \leq \|h - h'\|_2^\gamma$.
\end{assumption}
Such a relationship is known for instance to hold for Sobolev spaces and more generally
for reproducing kernel Hilbert spaces (RKHS) with a polynomial eigendecay. A notable instance is RKHS with eigendevay at a rate of $O(1/j^{1/p})$ for some $p\in(0,1)$. In that case, Lemma 5.1 of \cite{mendelson2010regularization} shows that $\gamma = 1- p$.   For the Gaussian
kernel, which has an exponential eigendecay, we can take $ p $ arbitrarily close to $0$.
We now summarize our result for $\chi^2$-MLE in the following theorem.
\begin{theorem}[$L_2$ convergence rate for RMIV with $\chi^2$-MLE]\label{thm:chi2-mle-result-noniter}
    Suppose Assumption \ref{ass:source-cond},\ref{ass: realizability},\ref{ass:gamma-smooth} hold. By setting $\alpha = \delta_n^{\frac{2}{2+ (2-\gamma)\min\{\beta, 2\}}}$, with probability at least $1- c_1\exp(c_2n\delta_n^2)$, we have $$
    \|\hat{h} - h_0\|_2^2 \leq O\big(\delta_n^{\frac{2\min\{\beta, 2\}}{2+(2-\gamma)\min\{\beta, 2\}}}\big).
    $$
 Here $\delta_{n}$ has the same definition in Theorem \ref{thm: mle-nonmisspec-noniter}.
\end{theorem}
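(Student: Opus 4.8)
The plan is to follow the structure of the proof of Theorem~\ref{thm: mle-nonmisspec-noniter}, but to replace the Hellinger-based control of the first-stage density estimate (Lemma~\ref{lem:operator-error}), which relies on the density lower bound of Assumption~\ref{ass:lower-bound}, with an $L_2(\mu)$-based control that is intrinsic to the $\chi^2$-MLE objective \eqref{eq:chi2-mle-est}, and then to transfer this first-stage error into the operator-approximation term using the $\gamma$-smoothness of $\Hcal-\Hcal$ (Assumption~\ref{ass:gamma-smooth}). As before, decompose $\|\hat h - h_0\|_2^2 \le 2\|h_*-h_0\|_2^2 + 2\|\hat h - h_*\|_2^2$ where $h_*$ is the population Tikhonov minimizer \eqref{eq:pop-reg-noniter}. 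The first term is term (ii) and is handled verbatim by Lemma~\ref{lem:bias-noniter}: $\|h_*-h_0\|_2^2 = O(\alpha^{\min\{\beta,2\}}\|w_0\|_2^2)$. Since Lemma~\ref{lem: bound-term2} is stated for a generic learned operator $\hat\Tcal$, it still applies and reduces the control of $\|\hat h - h_*\|_2^2$ to bounding $\tfrac1\alpha$ times the centered empirical process (a1) plus the operator error (a2) $= \|(\hat\Tcal-\Tcal)(\hat h - h_*)\|_1$; exactly as in Theorem~\ref{thm: mle-nonmisspec-noniter}, (a1) is $O(\delta_{n,\Hcal}^2 + \delta_{n,\Hcal}\|\hat h - h_*\|_2)$ by localized concentration over $\Hcal$ together with the boundedness constants, and this step is unchanged.

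The substantive new step is a replacement for Lemma~\ref{lem:operator-error}. First I would record the first-stage guarantee: completing the square and using realizability $g_0\in\Gcal$, the population version of \eqref{eq:chi2-mle-est} equals $\tfrac12\,\E_{z}\bigl[\|g(\cdot|z)-g_0(\cdot|z)\|_{L_2(\mu)}^2\bigr]$ up to a constant independent of $g$, so the $\chi^2$-MLE is a bona fide least-squares estimator whose excess risk is the conditional squared $L_2(\mu)$ distance to $g_0$. A standard localized-Rademacher fast-rate argument for least-squares $M$-estimation with bounded $g\in\Gcal$ (as in \citet{wainwright2019high,mendelson2010regularization}) then gives, with probability at least $1-c_1\exp(c_2 n\delta_{n,\Gcal}^2)$,
\[
\E_{z}\bigl[\|\hat g(\cdot|z)-g_0(\cdot|z)\|_{L_2(\mu)}^2\bigr] = O(\delta_{n,\Gcal}^2),
\]
with no dependence on any lower bound of $g_0$. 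Next, for $h-h'\in\Hcal-\Hcal$,
\begin{align*}
\|(\hat\Tcal-\Tcal)(h-h')\|_1
&= \E_{z}\Bigl[\,\bigl|\textstyle\int_\Xcal(\hat g(x|z)-g_0(x|z))(h-h')(x)\,d\mu(x)\bigr|\,\Bigr]\\
&\le \|h-h'\|_\infty\,\sqrt{\mu(\Xcal)}\,\sqrt{\E_{z}\bigl[\|\hat g(\cdot|z)-g_0(\cdot|z)\|_{L_2(\mu)}^2\bigr]}
\;=\; O\bigl(\delta_{n,\Gcal}\,\|h-h'\|_2^{\gamma}\bigr),
\end{align*}
using Cauchy--Schwarz in $x$, Jensen in $z$, and Assumption~\ref{ass:gamma-smooth}. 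Taking $h-h'=\hat h-h_*$ bounds (a2) by $O(\delta_{n,\Gcal}\|\hat h-h_*\|_2^{\gamma})$.

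Plugging both estimates into Lemma~\ref{lem: bound-term2}, with $u:=\|\hat h - h_*\|_2$ and $\delta_n=\max\{\delta_{n,\Gcal},\delta_{n,\Hcal}\}$,
\[
u^2 \;\le\; \frac{c}{\alpha}\bigl(\delta_n^2 + \delta_n u + \delta_n u^{\gamma}\bigr).
\]
Absorbing $\tfrac{c}{\alpha}\delta_n u\le\tfrac14 u^2+O(\delta_n^2/\alpha^2)$ by AM--GM, and $\tfrac{c}{\alpha}\delta_n u^{\gamma}\le\tfrac14 u^2+O\bigl((\delta_n/\alpha)^{2/(2-\gamma)}\bigr)$ by Young's inequality with conjugate exponents $2/\gamma$ and $2/(2-\gamma)$, gives $u^2=O\bigl(\delta_n^2/\alpha^2+(\delta_n/\alpha)^{2/(2-\gamma)}\bigr)$, and hence
\[
\|\hat h - h_0\|_2^2 = O\Bigl(\alpha^{\min\{\beta,2\}} + \frac{\delta_n^2}{\alpha^2} + \bigl(\tfrac{\delta_n}{\alpha}\bigr)^{\!\frac{2}{2-\gamma}}\Bigr).
\]
Choosing $\alpha=\delta_n^{2/(2+(2-\gamma)\min\{\beta,2\})}$ equalizes the first and third terms at $\delta_n^{2\min\{\beta,2\}/(2+(2-\gamma)\min\{\beta,2\})}$; because $\gamma\le1$ (so $2-\gamma\ge1$) the second term equals $\delta_n^{2(2-\gamma)\min\{\beta,2\}/(2+(2-\gamma)\min\{\beta,2\})}$, which has a larger exponent and is therefore dominated, yielding the claimed rate.

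The main obstacle I expect is the new first-stage step. Unlike the MLE case, where Lemma~\ref{lem:operator-error} converts a Hellinger rate into an $L_1$ operator bound at the cost of the factor $1/C_0$, here one must (i) verify that the $\chi^2$-MLE excess risk is exactly the conditional $L_2(\mu)$ distance and run the corresponding fast-rate localization for a least-squares class that is not density-valued, and (ii) pay for the absence of a density lower bound by passing through $\|h-h'\|_\infty$, which is precisely where Assumption~\ref{ass:gamma-smooth} and the exponent $\gamma$ enter and ultimately degrade the exponent of $\delta_n$ from $\tfrac{2\min\{\beta,2\}}{2+\min\{\beta,2\}}$ to $\tfrac{2\min\{\beta,2\}}{2+(2-\gamma)\min\{\beta,2\}}$. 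The remaining steps are essentially identical to the proof of Theorem~\ref{thm: mle-nonmisspec-noniter}.
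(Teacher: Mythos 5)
Your proposal is correct and follows essentially the same route as the paper: the same bias--variance decomposition via the population Tikhonov minimizer, the same basic inequality from strong convexity, and the same key substitution of the Hellinger/density-lower-bound argument by a first-stage $L_2(\mu)$ (the paper uses the equivalent squared-$L_1(\mu)$) rate for the $\chi^2$-MLE, paid for through $\|\hat h - h_*\|_\infty$ and converted with Assumption~\ref{ass:gamma-smooth}, then resolved by Young's inequality (the paper's Lemma~\ref{lem:aux-1}) and the same choice of $\alpha$. The only cosmetic difference is that you derive the first-stage guarantee from the least-squares structure of \eqref{eq:chi2-mle-est} directly, whereas the paper cites the corresponding result from \citet{wainwright2019high}.
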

The convergence rate of RMIV with $\chi^2$-MLE depends on the smoothness parameter $\gamma$. As $\gamma \rightarrow 1$ ,  we have $\|\hat{h} - h_0\|_2^2 \leq O\big(\delta_n^{\frac{2\min\{\beta,2\}}{2+\min\{\beta,2\}}}\big)$, which recovers the rate in Theorem \ref{thm: mle-nonmisspec}.  We further discuss the results for $\chi^2$-MLE based IV regression under misspecification. 
\begin{theorem}[$L_2$ convergence rate for RMIV with $\chi^2$-MLE under misspecification]\label{thm: chi2mle-misspec}
    Suppose Assumption \ref{ass:source-cond},\ref{ass:gamma-smooth} hold, and there exists $h^{\dag}\in \Hcal$ and $g^{\dag} \in \Gcal$ such that $\|h_0 - h^{\dag}\|_2 \leq \epsilon_\Hcal$ and $\E\big[\int_{\Xcal} (g^{\dag}(x|Z) - g_0(x|Z)  )^2 d \mu(x)\big] \leq \epsilon_\Gcal$. For any $0<\alpha\leq 1$,with probability at least $1- c_1\exp(c_2n\delta_n^2)$, we have \begin{align*}
    &\|\hat{h} - h_0\|_2^2\leq O\bigg( \bigg(\frac{\delta_n^2+\epsilon_\Gcal}{\alpha^2} \bigg)^{1/(2-\gamma)} + \alpha^{\min\{\beta+1,2\}-1}+\frac{\epsilon_\Hcal^2}{\alpha}\bigg),
    \end{align*}
    Here $\delta_n$ has the same definition in Theorem \ref{thm: mle-nonmisspec-noniter}. 
\end{theorem}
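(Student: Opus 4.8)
The plan is to mirror the proof structure of Theorem~\ref{thm: mle-misspec} (the MLE misspecified case), replacing the Hellinger-based first-stage analysis by the $L_2$-based analysis that $\chi^2$-MLE naturally affords, and then pay the price of the $\gamma$-smoothness assumption when converting an $L_1$ operator error into an $L_2$ control. First I would recall the two-term decomposition $\|\hat h - h_0\|_2^2 \leq 2\|h_* - h_0\|_2^2 + 2\|\hat h - h_*\|_2^2$, where $h_*$ is now the population minimizer of the Tikhonov objective over the \emph{misspecified} class $\Hcal$. For the bias term, I would invoke the misspecified analogue of Lemma~\ref{lem:bias-noniter} (the ``Lemma~\ref{lem:misspec-main}'' referenced in the body): the optimal value gap in \eqref{eq:pop-reg-noniter} due to misspecification of $\Hcal$ is $O(\alpha^{\min\{\beta+1,2\}} + \epsilon_\Hcal^2)$, so by $\alpha$-strong convexity $\|h_* - h_0\|_2^2 = O(\alpha^{\min\{\beta+1,2\}-1} + \epsilon_\Hcal^2/\alpha)$, which accounts for the second and third terms in the stated bound (the $\epsilon_\Hcal^2/\alpha$ piece).

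Next I would bound $\|\hat h - h_*\|_2^2$. By the $\alpha$-strong convexity of the population objective, as in Lemma~\ref{lem: bound-term2}, this is at most $\frac{1}{\alpha}$ times the sum of (a1) a centered empirical process term $|(\E_n - \E)[L(\hat\Tcal \hat h) - L(\hat \Tcal h_*)]|$, (a2) an operator-estimation term $\|(\hat\Tcal - \Tcal)(\hat h - h_*)\|_1$, and (a3) a new term coming from the fact that $h_*$ no longer makes the population first-order condition vanish over $\Hcal$ — but this last contribution is exactly the misspecification bias already handled above. Term (a1) is controlled by a localized concentration bound by $O(\delta_{n,\Hcal}^2 + \delta_{n,\Hcal}\|\hat h - h_*\|_2)$ using boundedness of $\Hcal$ and $Y$, just as before. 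The genuinely new ingredient is (a2): I would prove a $\chi^2$-MLE analogue of Lemma~\ref{lem:operator-error}. The $\chi^2$-MLE minimizes an empirical surrogate of $\|g - g_0\|_{L_2(\mu)\times L_2(Z)}^2$ and standard localized ERM analysis (no lower bound on $g_0$ needed) gives $\E[\int_\Xcal (\hat g(x|Z) - g_0(x|Z))^2 d\mu(x)] = O(\delta_{n,\Gcal}^2 + \epsilon_\Gcal)$ with high probability. Then for any $f \in \Hcal - \Hcal$, by Cauchy--Schwarz in $x$, $|(\hat\Tcal - \Tcal)f(z)| = |\int (\hat g - g_0)(x|z) f(x)\,d\mu(x)| \leq \|f\|_\infty \cdot (\int |\hat g - g_0|\, d\mu)$, and a further Cauchy--Schwarz (using that $\mu$ restricted to $\Xcal$ has finite mass) bounds $\int|\hat g - g_0|d\mu$ by $(\int(\hat g - g_0)^2 d\mu)^{1/2}$ up to a constant; taking expectation over $z$ yields $\|(\hat\Tcal - \Tcal)f\|_1 \lesssim \|f\|_\infty \cdot \sqrt{\delta_{n,\Gcal}^2 + \epsilon_\Gcal}$. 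Here is where Assumption~\ref{ass:gamma-smooth} enters: $\|f\|_\infty \leq \|f\|_2^\gamma$, so (a2) $\lesssim \|\hat h - h_*\|_2^\gamma \cdot \sqrt{\delta_{n,\Gcal}^2 + \epsilon_\Gcal}$, a \emph{sublinear} rather than linear power of $\|\hat h - h_*\|_2$.

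Assembling, I would obtain an inequality of the form $u^2 \lesssim \frac{1}{\alpha}(\delta_{n,\Hcal}^2 + \delta_{n,\Hcal} u + u^\gamma \sqrt{\delta_{n,\Gcal}^2 + \epsilon_\Gcal}) + (\text{bias terms})$ with $u = \|\hat h - h_*\|_2$. Solving this self-bounding inequality — using Young's inequality $u^\gamma v \leq \tfrac12 u^2 + C v^{2/(2-\gamma)}$ on the term $u^\gamma \cdot \frac{1}{\alpha}\sqrt{\delta_{n,\Gcal}^2+\epsilon_\Gcal}$, and the usual $u \cdot \frac{\delta_{n,\Hcal}}{\alpha} \leq \tfrac12 u^2 + C\delta_{n,\Hcal}^2/\alpha^2$ — absorbs the $u$-dependent terms on the left and leaves $u^2 = O\big((\tfrac{\delta_n^2 + \epsilon_\Gcal}{\alpha^2})^{1/(2-\gamma)} + \tfrac{\delta_n^2}{\alpha^2}\big)$; since $1/(2-\gamma) \leq 1$ and $\delta_n \le 1$ the first term dominates the crude $\delta_n^2/\alpha^2$ piece when $\epsilon_\Gcal, \delta_n$ are small, which is why only $(\tfrac{\delta_n^2+\epsilon_\Gcal}{\alpha^2})^{1/(2-\gamma)}$ appears. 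Combining with the bias bound $O(\alpha^{\min\{\beta+1,2\}-1} + \epsilon_\Hcal^2/\alpha)$ and dividing through by $\alpha$ as dictated by the strong-convexity step gives precisely the claimed rate. The high-probability statement follows by intersecting the good events for the first-stage $\chi^2$-MLE concentration (probability $1 - c_1\exp(c_2 n\delta_{n,\Gcal}^2)$) and the second-stage localized empirical process (probability $1 - c_1\exp(c_2 n\delta_{n,\Hcal}^2)$). The main obstacle I anticipate is the $\chi^2$-MLE operator-error lemma: one must carefully justify the localized ERM rate $\|\hat g - g_0\|^2 = O(\delta_{n,\Gcal}^2 + \epsilon_\Gcal)$ for the $\chi^2$ surrogate loss under misspecification (the surrogate is $\tfrac12\E_n[\int g^2] - \E_n[g]$, whose excess risk relative to $g_0$ equals $\tfrac12\|g-g_0\|^2$ only in the well-specified case; in the misspecified case one needs the cross term controlled by $\epsilon_\Gcal$), and then to thread the $\gamma$-smoothness conversion through the self-bounding argument without losing the exponent $1/(2-\gamma)$.
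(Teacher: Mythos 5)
Your proposal is correct and follows essentially the same route as the paper: the bias is handled via Lemma~\ref{lem:misspec-main}, the variance via Lemma~\ref{lem: quadra-lemma} together with the $\chi^2$-MLE operator bound $\|(\hat\Tcal-\Tcal)f\|\lesssim(\delta_{n,\Gcal}^2+\epsilon_\Gcal)^{1/2}\|f\|_\infty$ (Corollary~\ref{cor: T-chi2_mle}) and Assumption~\ref{ass:gamma-smooth}, and the resulting self-bounding inequality in $\|\hat h-h_*\|_2$ is solved exactly as in Lemma~\ref{lem:aux-1}. You also correctly flag the one delicate point (the misspecified oracle inequality for the $\chi^2$ surrogate loss), which the paper resolves by citing the localized ERM result of Wainwright (Corollary 14.24).
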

\begin{remark}
    We define $\epsilon:= \{\epsilon_\Gcal,\epsilon_\Hcal^2 \}$. If $\epsilon < 1$, then by setting $\alpha = (\delta_n^2 + \epsilon)^{\frac{2}{2+(2-\gamma)\min\{\beta,1\}}}$, we have $$
    \|\hat{h} - h_0\|_2^2 \leq O\big((\delta_n^2 + \epsilon)^{\frac{2\min\{\beta,1\}}{2+(2-\gamma)\min\{\beta,1\}}}\big).
    $$
    If $\epsilon \geq 1$, then by setting $\alpha = 1$, we have $\|\hat{h} - h_0\|_2^2 \leq O(\epsilon^{1/(2-\gamma)})$.
    
\end{remark}
\subsection{Results for Model Selection}

\pref{thm:model_selection} is extended when using $\chi^2$-MLE. Indeed, if Assumption \ref{ass:gamma-smooth} holds and the candidate function are trained with $\hat{g}$ estimated using the $\chi^2$-MLE approach, the output of Convex-ERM or Best-ERM $\hat{\theta}$, satisfies
    \begin{align*}
    \|h_{\hat{\theta}} - h_0\|_2^2 \leq \min_{j}O\left(\alpha^{\min\{\beta+1,2\}-1} + \left(\frac{\delta_{n,j}^2+\epsilon_\Gcal}{\alpha^2} \right)^{1/(2-\gamma)} + \frac{1}{\alpha} \epsilon_{\Hcal_j}^2\right). 
\end{align*}

\subsection{Convergence Results for Iterative Version}

We further discuss the finite sample convergence rate of Algorithm \ref{alg:dbiv} when the conditional density estimation is performed by $\chi^2$-MLE. In this case, the first step estimation procedure is given by Equation \eqref{eq:chi2-mle-est}. Notably, in this case, we do not require the ground truth density $g_0$ to be uniformly lower bounded, which is assumed in Assumption \ref{ass:lower-bound} and serves as a prerequisite for MLE convergence. Our results are summarized by the following theorem. 
\begin{theorem}[$L_2$ convergence rate for iterative $\chi^2$-MLE estimator]\label{thm:chi2-mle-result}
    Under Assumption \ref{ass: sol-exist},\ref{ass:source-cond},\ref{ass: realizability},\ref{ass:gamma-smooth}, by setting $\alpha = \delta_n^{\frac{2}{2+ (2-\gamma)\min\{\beta, 2m\}}}$, with probability at least $1- c_1 m\exp(c_2n\delta_n^2)$, we have $$
    \|\hat{h}_m - h_0\|_2 \leq O\big(16^{2m}\cdot \delta_n^{\frac{2\min\{\beta, 2m\}}{2+(2-\gamma)\min\{\beta, 2m\}}}\big).
    $$
    Here $\delta_n$ has the same definition in Theorem \ref{thm: mle-nonmisspec-noniter}.
\end{theorem}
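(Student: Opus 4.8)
The plan is to merge the iterated-Tikhonov argument behind the iterative MLE bound (Theorem~\ref{thm: mle-nonmisspec}) with the $\chi^2$-MLE first-stage analysis used for the non-iterative $\chi^2$ estimator (Theorem~\ref{thm:chi2-mle-result-noniter}). Let $h_{m,*}$ be the population iterated-Tikhonov target in \eqref{eq:popu-iter} with $h_{-1,*}=0$, and split $\|\hat h_m - h_0\|_2 \le \|\hat h_m - h_{m,*}\|_2 + \|h_{m,*} - h_0\|_2$. For the bias term I would run the standard iterated-Tikhonov spectral calculation: applying the source condition $h_0=(\Tcal^*\Tcal)^{\beta/2}w_0$ of Assumption~\ref{ass:source-cond} to the resolvent $\alpha(\alpha I+\Tcal^*\Tcal)^{-1}$, whose $m$-th power has qualification $2m$, yields $\|h_{m,*}-h_0\|_2^2 \le \|w_0\|_2^2\,\alpha^{\min\{\beta,2m\}}$ --- the same bias bound already used in Theorem~\ref{thm: mle-nonmisspec}, now improving the plain-Tikhonov exponent $\min\{\beta,2\}$ of Lemma~\ref{lem:bias-noniter} to $\min\{\beta,2m\}$.

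For the estimation term I would argue by induction on $m$ with $e_m := \|\hat h_m - h_{m,*}\|_2$ and $e_{-1}=0$. Fix $m$ and introduce the intermediate estimator $\tilde h_m$ solving \eqref{eq: emp-iter} with the data-dependent center $\hat h_{m-1}$ replaced by the population center $h_{m-1,*}$. Since both empirical objectives are $\alpha$-strongly convex in the empirical $L_2(X)$ norm, comparing the minimizers of two strongly convex objectives differing only in their quadratic centers gives $\|\hat h_m - \tilde h_m\|_2 \lesssim e_{m-1}$ (up to an absolute constant coming from the empirical-to-population norm comparison on the good concentration event). For $\|\tilde h_m - h_{m,*}\|_2$ I would invoke the iterative analogue of Lemma~\ref{lem: bound-term2}: strong convexity bounds it by $\tfrac{1}{\alpha}$ times a population-risk gap which splits into (a1) a centered empirical process $|(\E_n-\E)[(Y-\hat\Tcal\tilde h_m)^2-(Y-\hat\Tcal h_{m,*})^2]|$ plus regularization cross-terms, and (a2) the first-stage error $\|(\hat\Tcal-\Tcal)(\tilde h_m-h_{m,*})\|_1$; term (a1) is handled by the paper's localized-concentration machinery exactly as in the non-iterative proof, contributing $O(\delta_{n,\Hcal}^2 + \delta_{n,\Hcal}\|\tilde h_m-h_{m,*}\|_2)$.

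The point specific to $\chi^2$-MLE is term (a2). The $\chi^2$ loss \eqref{eq:chi2-mle-est} is quadratic with population minimizer $g_0$ and excess risk $\tfrac{1}{2}\E_Z[\int(g-g_0)^2\,d\mu]$, so least-squares localization gives $\E_Z[\int(\hat g-g_0)^2\,d\mu(x)] = O(\delta_{n,\Gcal}^2)$ with high probability, with no density lower bound needed (unlike Lemma~\ref{lem:operator-error}). Cauchy--Schwarz in $\mu$ followed by the $\gamma$-smoothness of Assumption~\ref{ass:gamma-smooth} then yields $\|(\hat\Tcal-\Tcal)(h-h')\|_1 \le \sqrt{\mu(\Xcal)}\,\|h-h'\|_\infty\,(\E_Z[\int(\hat g-g_0)^2 d\mu])^{1/2} \lesssim \delta_{n,\Gcal}\,\|h-h'\|_2^{\gamma}$ for all $h-h'\in\Hcal-\Hcal$. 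Feeding (a1) and (a2) into the strong-convexity inequality gives the self-bounding relation $\alpha\|\tilde h_m-h_{m,*}\|_2^2 \lesssim \delta_n^2 + \delta_n\|\tilde h_m-h_{m,*}\|_2 + \delta_n\|\tilde h_m-h_{m,*}\|_2^{\gamma}$, and since $\gamma\le 1$, Young's inequality on the last term gives the one-step rate $\|\tilde h_m-h_{m,*}\|_2^2 = O((\delta_n^2/\alpha^2)^{1/(2-\gamma)})$ for $\alpha\ge\delta_n$, matching Theorem~\ref{thm:chi2-mle-result-noniter}. Combining with $\|\hat h_m-\tilde h_m\|_2\lesssim e_{m-1}$ closes the induction: $e_m \le \rho\, e_{m-1} + \eta$ with $\rho$ absolute and $\eta$ the one-step rate, so unrolling this geometric recursion contributes the explicit $\rho^m$ (hence $16^{2m}$-type) prefactor, while intersecting the $O(m)$ per-iteration concentration events over $\Gcal$ and $\Hcal$ gives the failure probability $c_1 m\exp(c_2 n\delta_n^2)$. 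Finally, balancing $\alpha^{\min\{\beta,2m\}}$ against $(\delta_n^2/\alpha^2)^{1/(2-\gamma)}$ via the stated $\alpha=\delta_n^{2/(2+(2-\gamma)\min\{\beta,2m\})}$ yields the claimed rate.

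I expect the main obstacle to be the cross-iteration error propagation on a single shared dataset: because $\hat h_{m-1}$ is correlated with the sample reused at iteration $m$, both the centered empirical process and the center-shift comparison must be controlled on one good event valid for all iterations simultaneously (or uniformly over $\Hcal$), and one must verify that the per-step amplification $\rho$ is genuinely independent of $n$ and of the ill-posedness. The resolvent $\alpha(\alpha I+\hat\Tcal^*\hat\Tcal)^{-1}$ is not a strict contraction in $L_2$, so the propagated error has to be routed through the $\alpha$-strongly-convex empirical objective rather than through operator norms, and one must track carefully how the (necessarily lossy) empirical-to-population norm conversion compounds across the $m$ iterations so that only a constant base is raised to the power $m$.
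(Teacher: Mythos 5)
Your proposal is correct and follows the same overall architecture as the paper's proof: bias--variance split around the population iterate $h_{m,*}$, the iterated-Tikhonov bias bound $\|h_{m,*}-h_0\|_2^2\le\|w_0\|_2^2\alpha^{\min\{\beta,2m\}}$ (the paper's Lemma~\ref{lem:iter-tikh-bias}), a per-iteration strong-convexity inequality whose first-stage term is controlled via Corollary~\ref{cor: T-chi2_mle} together with the $\gamma$-smoothness conversion $\|\cdot\|_\infty\le\|\cdot\|_2^\gamma$, the self-bounding inequality (Lemma~\ref{lem:aux-1}) yielding the one-step rate $(\delta_n^2/\alpha^2)^{1/(2-\gamma)}$, a geometric recursion with constant base giving the $16^{2m}$ prefactor, a union bound over the $m$ iterations for the failure probability, and the stated tuning of $\alpha$. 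The one place you genuinely deviate is in how the data-dependent center $\hat h_{m-1}$ is handled: you introduce an intermediate estimator $\tilde h_m$ with the population center $h_{m-1,*}$ and bound $\|\hat h_m-\tilde h_m\|$ by a perturbation argument for strongly convex minimizers, whereas the paper (Lemma~\ref{lem: iter-quadra}) works directly with $\hat h_m$ versus $h_{m,*}$ and expands the difference of the two regularizer terms into a population cross-term $I_1=2\langle\hat h_{m-1}-h_{m-1,*},\hat h_m-h_{m,*}\rangle$ (killed by Cauchy--Schwarz and AM--GM, which is exactly where the factor $16$ enters) plus two centered empirical processes $I_3,I_4$ controlled by localization. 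The two decompositions are interchangeable and lead to the same recursion $e_m^2\lesssim 16\,e_{m-1}^2+\eta^2$; your version makes the ``shift of center'' interpretation cleaner, while the paper's version avoids having to relate the empirical and population $L_2$ norms when comparing the two minimizers (it only needs localization for centered processes). Both share the same implicit reliance on first-order optimality of the population iterate along the segment toward the empirical one, and your closing caveats about correlation of $\hat h_{m-1}$ with the reused sample and about the lossiness of the empirical-to-population conversion are exactly the points the paper's $I_2=I_3+I_4$ bookkeeping is designed to absorb.
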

\begin{remark}
Similar to Section \ref{sec:misspecified}, by setting the iteration number $m = \lceil \min\{\beta/ 2, \log\log(1/\delta_n)\} \rceil$, we have $$
   \|\hat{h}_m - h_0\|_2 \leq O\bigg(16^{2m}\cdot \delta_n^{\frac{2\min\{\beta, 2m\}}{2+(2-\gamma)\min\{\beta, 2m\}}}\bigg).
$$
Therefore, for $\log\log \delta_n \geq \beta$, eventually we have the rate of $O\big(\delta_n^{\frac{2\beta}{2+ (2-\gamma) \beta}}\big)$. If $\delta_n = O(n^{-\iota})$, then we can set $m = \lceil \min\{\beta/ 2, \sqrt{\log(1/\delta_n)}\} \rceil$ to obtain the same rate.
Moreover, if $\gamma \rightarrow 1$, e.g.  RKHS with exponential eigenvalue decay \citep[Lemma 5.1]{mendelson2010regularization}, then we recover the rate of $O\big(\delta_n^{\frac{2\beta}{2 + \beta}}\big)$ even without Assumption \ref{ass:lower-bound}.
\end{remark}

\section{Proof of Theorem \ref{thm: mle-nonmisspec-noniter} and \ref{thm:chi2-mle-result-noniter}}\label{sec:proof-mle-nonmisspec-noniter}
In this section, we prove the convergence rate of non-iterative RMIV. We prove the results of Theorem \ref{thm: mle-nonmisspec-noniter} and \ref{thm:chi2-mle-result-noniter}\label{sec:proof-mle-nonmisspec-noniter} respectively. Recall that we define \begin{align}
h_{*} := \argmin_{h\in \Hcal} \|Y-\Tcal h \|_2^2 + \alpha \|h\|_2^2,
\end{align}
by Lemma \ref{lem:iter-tikh-bias}, we have $$
\|h_{*} - h_0\|_2^2 \leq \|w_0\|^2 \alpha^{\min\{\beta,2\}}.
$$
Therefore, we only need to provide an upper bound for $\|\hat{h} - h_{*}\|_2^2$. We start by proving the following lemma, and with the convergence rate of MLE and $\chi^2$-MLE, we conclude the proof of Theorem \ref{thm: mle-nonmisspec-noniter} and Theorem \ref{thm:chi2-mle-result-noniter} respectively. 
\begin{lemma}\label{lem: quadra-lemma}
With probability at least $1 - c_1\exp(c_2 n\delta_{n,\Hcal}^2)$, we have the following inequality:
    \begin{align*}
    &\alpha \|\hat h - h_* \|^2_2 + \|\Tcal(\hat h-h_*)\|^2_2 \leq \EE[\ell_{\hat{h},g_0} -  \ell_{h_*,g_0}] \\
    &\qquad\qquad= O\bigg( \delta_{n,\Hcal}\{ (\alpha+1) \|\hat h-h_*\|_2  +   \delta_{n,\Hcal}   \}+  \|(\hat \Tcal-\Tcal)(\hat h-h_*)\|_1\bigg).
    \end{align*}
\end{lemma}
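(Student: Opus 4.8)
The plan is to establish the two assertions in Lemma~\ref{lem: quadra-lemma} in sequence: first the deterministic inequality $\alpha\|\hat h - h_*\|_2^2 + \|\Tcal(\hat h - h_*)\|_2^2 \le \E[\ell_{\hat h, g_0} - \ell_{h_*, g_0}]$, and then the high-probability upper bound on the excess population risk $\E[\ell_{\hat h, g_0} - \ell_{h_*, g_0}]$. Here $\ell_{h, g_0}(Y, Z, X) = (Y - \Tcal h(Z))^2 + \alpha h(X)^2$, so that $\E[\ell_{h, g_0}] = \|Y - \Tcal h\|_2^2 + \alpha\|h\|_2^2$ is exactly the population Tikhonov objective from \eqref{eq:pop-reg-noniter}, whose minimizer is $h_*$. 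For the first inequality, I would use the $2\alpha$-strong convexity of the population objective in the $L_2(X)$ geometry together with the fact that $h_*$ is its minimizer over $\Hcal$: expanding $\E[\ell_{\hat h, g_0}] - \E[\ell_{h_*, g_0}]$ directly and using $\langle \nabla \text{obj}(h_*), \hat h - h_*\rangle \ge 0$ (first-order optimality on the convex set $\Hcal$, assuming $\Hcal$ convex — or more simply expanding the quadratic and discarding the cross term which is nonnegative by optimality) yields $\E[\ell_{\hat h, g_0}] - \E[\ell_{h_*, g_0}] \ge \alpha\|\hat h - h_*\|_2^2 + \|\Tcal\hat h - \Tcal h_*\|_2^2$. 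This is a routine Hilbert-space computation.

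The substance is the second part: bounding the excess risk. The key decomposition is to insert and subtract the empirical objective with the estimated operator $\hat\Tcal$, and to exploit that $\hat h$ minimizes the empirical objective in \eqref{eq:emp-reg-noniter}. Write $\E[\ell_{\hat h, g_0} - \ell_{h_*, g_0}] = (\E - \E_n)[\ell_{\hat h, g_0} - \ell_{h_*, g_0}] + \E_n[\ell_{\hat h, g_0} - \ell_{h_*, g_0}]$, then replace $\ell_{\cdot, g_0}$ (which uses $\Tcal$) by $\ell_{\cdot, \hat g}$ (which uses $\hat\Tcal$) inside the empirical term, paying the cost $\E_n[\ell_{\hat h, g_0} - \ell_{\hat h, \hat g}] + \E_n[\ell_{h_*, \hat g} - \ell_{h_*, g_0}]$; since $\hat h$ minimizes $\E_n[\ell_{\cdot, \hat g}]$ we have $\E_n[\ell_{\hat h, \hat g} - \ell_{h_*, \hat g}] \le 0$ and can drop it. The operator-swap terms expand, using $a^2 - b^2 = (a-b)(a+b)$ with $a = Y - \hat\Tcal h(Z)$, $b = Y - \Tcal h(Z)$, into something controlled by $\|(\hat\Tcal - \Tcal)h\|_1$ times a bounded factor (using $\|Y\|_\infty, \|h\|_\infty \le C_Y, C_\Hcal$), and after combining the $\hat h$ and $h_*$ pieces and a triangle inequality this produces the $\|(\hat\Tcal - \Tcal)(\hat h - h_*)\|_1$ term plus lower-order pieces; I would be careful that the "+ $h_*$" cross terms telescope or are absorbed, possibly requiring an extra additive $\|(\hat\Tcal-\Tcal)h_*\|_1$ that is itself $O(\delta_{n,\Gcal}\|\text{something bounded}\|)$ — this bookkeeping is the fiddly part. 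The remaining term $(\E - \E_n)[\ell_{\hat h, g_0} - \ell_{h_*, g_0}]$ is a centered empirical process over the class $\{(Y - \Tcal h(Z))^2 - (Y - \Tcal h_*(Z))^2 + \alpha(h(X)^2 - h_*(X)^2) : h \in \Hcal\}$; since $\hat h$ is data-dependent I would apply a localized uniform concentration bound (localized Rademacher complexity / critical-radius machinery à la \citet{wainwright2019high}, exactly as invoked in the sketch of Theorem~\ref{thm: mle-nonmisspec-noniter}) to get, on an event of probability $1 - c_1\exp(c_2 n \delta_{n,\Hcal}^2)$, a bound of order $\delta_{n,\Hcal}(\|\Tcal(\hat h - h_*)\|_2 + \sqrt{\alpha}\|\hat h - h_*\|_2 + \alpha\|\hat h - h_*\|_2) + \delta_{n,\Hcal}^2$, which I would simplify (bounding $\|\Tcal(\hat h - h_*)\|_2 \le \|\hat h - h_*\|_2$ via Jensen, and $\sqrt\alpha \le 1$ WLOG or folding constants) into the stated $O(\delta_{n,\Hcal}\{(\alpha+1)\|\hat h - h_*\|_2 + \delta_{n,\Hcal}\})$ form.

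The main obstacle I anticipate is the localized concentration step: one must verify the relevant loss-difference class has the right boundedness and Lipschitz-in-$h$ structure so that its critical radius is governed by $\delta_{n,\Hcal}$ (using the contraction/Talagrand-type reduction from the squared loss to $h$ itself, which needs $\|\Tcal h\|_\infty$ and $\|Y\|_\infty$ bounded — available from the theorem hypotheses), and that the localization radius can be taken in terms of $\|\hat h - h_*\|_2$ rather than $\|\Tcal(\hat h - h_*)\|_2$ or $\|h - h_*\|$ in a mismatched norm. A secondary subtlety is ensuring the probability $1 - c_1\exp(c_2 n\delta_{n,\Hcal}^2)$ is uniform over the localization shells (standard peeling argument). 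Once Lemma~\ref{lem: quadra-lemma} is in hand, combining it with Lemma~\ref{lem:operator-error} to control $\|(\hat\Tcal - \Tcal)(\hat h - h_*)\|_1 \le (C_\Gcal/C_0 + 1)\delta_{n,\Gcal}\|\hat h - h_*\|_2$, then solving the resulting quadratic inequality in $\|\hat h - h_*\|_2$ and adding the bias bound from Lemma~\ref{lem:bias-noniter}, yields Theorem~\ref{thm: mle-nonmisspec-noniter} as sketched.
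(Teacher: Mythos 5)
Your proposal is correct and follows essentially the same route as the paper: strong convexity plus first-order optimality of $h_*$ for the lower bound, then a three-way decomposition into a centered empirical process (controlled by localized concentration at radius $\delta_{n,\Hcal}$), the empirical objective difference (nonpositive by optimality of $\hat h$ in \eqref{eq:emp-reg-noniter}), and an operator-swap cost of order $\|(\hat\Tcal-\Tcal)(\hat h-h_*)\|_1$. The only difference is that the paper performs the $\Tcal\to\hat\Tcal$ swap at the population level (so the centered process is over the $\hat\Tcal$-based losses and the swap cost appears directly as a population $L_1$ norm), whereas you swap inside the empirical term, which leaves $\E_n$-averages of the operator discrepancy that need one extra step to be converted to the population $\|\cdot\|_1$ appearing in the statement; the bookkeeping subtlety you flag about the $\hat h+h_*$ cross term is real and is glossed over in the paper's own proof as well.
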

\begin{proof}
By the optimality of $h_*$ in Eq.~\eqref{eq:pop-reg-noniter}, we have 
\begin{align*}
    & \alpha \|\hat{h} - h_* \|^2_2 + \|\Tcal(\hat{h}-h_*)\|^2_2  \leq \EE[L(\Tcal \hat{h}) ] - \EE[L (\Tcal h_*)] + \alpha\{\EE[\hat{h}^2(X)] - \EE[h_*(X)^2]\},
\end{align*}
where define $L(\Tcal h):= (Y- \Tcal h)^2$. 
Recall that \begin{align*}
    &\EE[L(\Tcal \hat{h}) ] - \EE[L (\Tcal h_*)] + \alpha\{\EE[\hat{h}^2(X)] - \EE[h_*(X)^2]\} =\\
    &\qquad\EE[ - 2 Y  \Tcal (\hat{h}-h_*)(Z)  + (\Tcal \hat{h})^2(Z) -  (\Tcal h_*)^2(Z) ] +  \alpha\{\EE[\hat{h}^2(X)] - \EE[h_*(X)^2]\},
\end{align*}

we have 
\begin{align}\label{eq:single-quadratic}
    & \alpha \|\hat{h}- h_* \|^2_2 + \|\Tcal(\hat{h}-h_*)\|^2_2  \nonumber\\ 
    &\qquad= \EE[ - 2 Y  \Tcal (\hat{h}-h_*)(Z)  + (\Tcal \hat{h})^2(Z) -  (\Tcal h_*)^2(Z) ] +  \alpha\{\EE[\hat{h}^2(Z)] - \EE[h_*(Z)^2]\}\nonumber\\
    &\qquad= \EE[ - 2 Y  \hat \Tcal (\hat{h}-h_*)(Z) + (\hat \Tcal \hat{h})^2(Z) - (\hat \Tcal h_*)^2(Z) ]   +  C_1 \times \EE[|(\hat \Tcal - \Tcal) (\hat{h}-h_*)(X)|] \nonumber\\ 
    &\qquad\qquad+  \alpha\{\EE[\hat{h}^2(X)] - \EE[h_*(X)^2]\} \nonumber\\
    &\qquad\leq \mathrm{Emp} + \mathrm{Loss} + C_1 \times \EE[|(\hat \Tcal - \Tcal) (\hat{h}-h_*)(Z)|]  \nonumber\\ 
    &\qquad= \mathrm{Emp} + \mathrm{Loss} + \|(\hat \Tcal-\Tcal) (  \hat{h}- h_*)\|_1,
\end{align}
here the inequality comes from the uniform boundedness of $\hat{h}, h_*, \Tcal{h}, \Tcal{\hat{h}},\hat{\Tcal}h,\hat{\Tcal}\hat{h} $,  and the $O(1)$-Lipschitz of $L(\cdot)$.
\begin{align*}
    &\mathrm{Emp} = |(\EE_n-\EE)  [L(\hat \Tcal \hat{h})- L(\hat \Tcal h_*)+\alpha(\hat{h}^2(X)-h_*(X)^2)]|,  \\ 
    & \mathrm{Loss} =  \EE_n[ -2Y  \hat \Tcal (\hat{h}-h_*)(Z)  + (\hat \Tcal \hat{h})^2(Z) - (\hat \Tcal h_*)^2(Z) + \alpha\{\hat{h}^2(X)- h_*(X)^2 \}]. 
\end{align*} 
Here, using Lemma \ref{lem:local-concen}, the term $\mathrm{Emp} $ is upper-bounded as follows with probability at least $1 - c_1\exp(c_2 n\delta_{n,\Hcal}^2)$: 
\begin{align}\label{eq:single-emp}    
  \mathrm{Emp}  &\leq \delta_{n,\Hcal}\{ \alpha \|\hat{h}-h_*\|_2+ \|\hat \Tcal(\hat{h}-h_*)\|_2+   \delta_{n,\Hcal}   \} \nonumber\\ 
&\leq \delta_{n,\Hcal}\{ \alpha \|\hat{h}-h_*\|_2 +  \|\hat{h}-h_*\|_2 +   \delta_{n,\Hcal}   \}. 
\end{align} 

Furthermore, recall that by our iteration in \eqref{eq:emp-reg-noniter}, we have
\begin{align*}
     \EE_n[  - 2 Y  \Tcal (\hat{h}-h_*)(Z)  + (\Tcal \hat h)^2(Z) - (\Tcal h_*)^2(Z)+ \alpha\{\hat h^2(X)- h_*(X)^2 \} ]\leq 0. 
\end{align*}
Hence, we have 
\begin{align}\label{eq:single-loss}
    \mathrm{Loss}\leq 0. 
\end{align}

Combining everything, we have 
\begin{align}\label{eq: useful}
    \alpha \|\hat h - h_* \|^2_2 + \|\Tcal(\hat h-h_*)\|^2_2 \leq \delta_{n,\Hcal}\{ (\alpha+1) \|\hat h-h_*\|_2 +   \delta_{n,\Hcal}   \}+  \|(\hat \Tcal-\Tcal)(\hat h-h_*)\|_1, 
\end{align}
Here the constant $c_1$ and $c_2$ hide constants related to $C, C_0$. The first inequality comes from \eqref{eq:single-emp}. We implicitly use $ \alpha \leq 1$ in the last inequality. 
\end{proof}
\paragraph{Proof of Theorem \ref{thm: mle-nonmisspec-noniter}.} By Assumption \ref{ass: realizability}, we have $\epsilon_\Gcal = 0$. By Corollary \ref{cor: T-mle} and Lemma \ref{lem: quadra-lemma},  since $\alpha\leq 1$ we have \begin{align*}
      \alpha \|\hat h - h_* \|^2_2 + \|\Tcal(\hat h-h_*)\|^2_2&= O\bigg(\delta_{n,\Hcal}\{ (\alpha+1) \|\hat h-h_*\|_{2} +   \delta_{n,\Hcal}   \}+  \delta_{n,\Gcal} \|\hat{h} - h_*\|_2\bigg) \nonumber\\
     &\leq c_1 \delta^2_n  + c_2 \delta_n\|\hat{h} - h_*\|_2 \tag{$\delta_n := \max\{\delta_{n,\Gcal}, \delta_{n,\Hcal}\}$}\\
     &\leq c_1\delta_n^2 +2c_2''\delta_n^2/\alpha + 2c_2'\alpha\|\hat{h} - h^*\|_2^2 \tag{$2ab \leq ca^2 + \frac{b^2}{c}$}
\end{align*}
holds with probability at least $1 - c\exp(n\delta_n^2)$, where $c_2'\leq 1$.
By Lemma \ref{lem:aux-1}, we have 
\begin{align*}
\|\hat h - h_* \|^2_2 \leq  O( (\delta_n^2 / \alpha^2)  + \delta_n^2 / \alpha \big) = O(\delta_{n}^2 / \alpha^2), \tag{$\alpha \leq 1$}
\end{align*}
therefore by Lemma \ref{lem:iter-tikh-bias}, we have $$
    \|\hat h-h_0 \|^2_2\leq  \delta_{n}^2 / \alpha^2+ \alpha^{\min(\beta,2)} ,
$$
set $\alpha = \delta_n^{\frac{2}{2+\min\{\beta, 2\}}}$, and we conclude the proof of Theorem \ref{thm: mle-nonmisspec-noniter}.
\paragraph{Proof of Theorem \ref{thm:chi2-mle-result-noniter}.}
By Assumption \ref{ass: realizability}, we have $\epsilon_\Gcal = 0$. By Corollary \ref{cor: T-mle} and Lemma \ref{lem: quadra-lemma}, we have \begin{align*}
      \alpha \|\hat h - h_* \|^2_2 + \|\Tcal(\hat h-h_*)\|^2_2&\leq \delta_{n,\Hcal}\{ \alpha \|\hat h-h_*\|_{2} + \|\Tcal(h-h_*)\|_2 +   \delta_{n,\Gcal} \|\hat{h} - h_*\|_\infty+   \delta_{n,\Hcal}  \}+  \delta_{n,\Gcal} \|\hat{h} - h_*\|_\infty ,
\end{align*}
By Assumption \ref{ass:gamma-smooth}, we have \begin{align*}
\alpha \|\hat h - h_* \|^2_2 + \|\Tcal(\hat h-h_*)\|^2_2 &\leq \delta_{n,\Hcal}\{ (\alpha +1)\|\hat h-h_*\|_{2} +   \delta_{n,\Hcal}   \}+  \delta_{n,\Gcal} \|\hat{h} - h_*\|_2^\gamma\\
&\leq c_1 \delta_n\|\hat{h} - h_*\|_2 +c_2 \delta_n \|\hat{h} - h_*\|_2^\gamma \tag{$\delta_n := \{\delta_{n,\Gcal}, \delta_{n,\Hcal}\}$},
\end{align*}
By Lemma \ref{lem:aux-1}, we have $$
\|\hat{h} - h_{*}\|_2^2 \leq O((\delta_n/\alpha)^{\frac{2}{2-\gamma}} + (\delta_n/\alpha)^2)  \leq O(\delta_n/\alpha)^{\frac{2}{2-\gamma}}
$$
since $\gamma \in (0,1)$. Therefore, by Lemma \ref{lem:iter-tikh-bias}, we have \begin{align*}
    \|\hat h-h_0 \|^2_2\leq  (\delta_n/\alpha)^{\frac{2}{2-\gamma}}+ \alpha^{\min(\beta,2)}.
\end{align*}
By selecting $\alpha = O(\delta_n^{\frac{2}{2+ (2 - \gamma)\min\{\beta,2\}}})$, we have $$
\|\hat h-h_0 \|^2_2\leq \delta_n^{\frac{2\min\{\beta,2\}}{2+ (2 - \gamma)\min\{\beta,2\}}},
$$
and we conclude the proof of Theorem \ref{thm:chi2-mle-result-noniter}.
\section{Proof of Theorem \ref{thm: mle-misspec} and \ref{thm: chi2mle-misspec}}\label{sec: proof-misspec}
In this section, we consider the case when $\epsilon_\Gcal$ and $\epsilon_{\Hcal}$ doht equal zero, i.e. Assumption \ref{ass: realizability} does not hold. We aim to establish a convergence rate for $\|\hat{h} - h_0\|_2$ for both MLE-based RDIV and $\chi^2$-MLE based RDIV in terms of $\delta_n$, $\epsilon_\Hcal$ and $\epsilon_\Gcal$. 
\begin{lemma}\label{lem:misspec-main}
Under Assumption \ref{ass:source-cond}, for $\alpha \in (0,1)$ we have 
    \begin{align*}
    \|\hat{h} - h_0\|^2 \leq 3\|\hat{h} - h_{*}\|^2+O\bigg(\frac{1}{\alpha}\bigg\{\epsilon_\Hcal^2 + \alpha^{\min\{\beta+1,2\}}\bigg\}\bigg),
\end{align*}
\end{lemma}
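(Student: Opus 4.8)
\textbf{Setup and strategy.} The goal is to bound $\|\hat h - h_0\|_2^2$ when the classes $\Hcal,\Gcal$ are misspecified, i.e.\ only approximate realizability holds through $h^\dag$ and $g^\dag$. The plan is to introduce the misspecified population Tikhonov minimizer $h_* = \argmin_{h\in\Hcal}\|Y-\Tcal h\|_2^2 + \alpha\|h\|_2^2$ (the same object used throughout Section~\ref{sec: finite-sample}), split via triangle inequality $\|\hat h - h_0\|_2^2 \le 3(\|\hat h - h_*\|_2^2 + \|h_* - h^\dag\|_2^2 + \|h^\dag - h_0\|_2^2)$, and control the last two terms. The term $\|h^\dag - h_0\|_2^2 \le \epsilon_\Hcal^2$ is immediate by assumption, so the crux is bounding $\|h_* - h^\dag\|_2^2$, for which the $\alpha$-strong convexity of the regularized population objective is the key lever.

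\textbf{Main step: bounding $\|h_* - h^\dag\|_2^2$.} Write $J(h) := \|Y-\Tcal h\|_2^2 + \alpha\|h\|_2^2 = \|r_0 - \Tcal h\|_2^2 + \alpha\|h\|_2^2 + \text{const}$, which is $2\alpha$-strongly convex in the $L_2(X)$ norm. Since $h_*$ minimizes $J$ over $\Hcal$ and $h^\dag\in\Hcal$, strong convexity gives $\alpha\|h_* - h^\dag\|_2^2 \le J(h^\dag) - J(h_*) \le J(h^\dag) - J(h_0^{\text{unc}})$ is not quite right because $h_0$ need not be in $\Hcal$; instead I would argue $\alpha\|h_*-h^\dag\|_2^2 \le J(h^\dag) - J(h_*)$ and then bound $J(h^\dag) - J(h_*) \le J(h^\dag) - \min_{h\in L_2(X)} J(h)$. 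The unconstrained minimizer over all of $L_2(X)$ is exactly $h_\alpha := (\Tcal^*\Tcal + \alpha I)^{-1}\Tcal^* r_0$, and the source condition (Assumption~\ref{ass:source-cond}, $h_0 = (\Tcal^*\Tcal)^{\beta/2}w_0$) yields the standard bias estimate $J(h^\dag) - J(h_\alpha) = \|r_0-\Tcal h^\dag\|_2^2 - \|r_0 - \Tcal h_\alpha\|_2^2 + \alpha(\|h^\dag\|_2^2 - \|h_\alpha\|_2^2)$. I would expand $\|r_0 - \Tcal h^\dag\|_2^2 \le 2\|\Tcal(h_0 - h^\dag)\|_2^2 + 2\|r_0 - \Tcal h_0\|_2^2 = 2\|\Tcal(h_0-h^\dag)\|_2^2 \le 2\|h_0-h^\dag\|_2^2 \le 2\epsilon_\Hcal^2$ (using $\|\Tcal\|\le 1$), and the $\alpha$-scaled term similarly costs $O(\epsilon_\Hcal^2)$ plus the classical Tikhonov bias $O(\alpha^{\min\{\beta+1,2\}})$ obtained from the spectral calculus exactly as in the proof of Lemma~\ref{lem:bias-noniter} / Lemma~\ref{lem:iter-tikh-bias} (the shift from $\alpha^{\min\{\beta,2\}}$ to $\alpha^{\min\{\beta+1,2\}}$ comes from measuring the objective gap rather than the $L_2$ distance, which carries an extra factor of $\alpha$ before strong convexity divides it back out — this is the phenomenon discussed after Theorem~\ref{thm: mle-misspec}). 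Dividing by $\alpha$ gives $\|h_*-h^\dag\|_2^2 = O\big(\tfrac1\alpha\{\epsilon_\Hcal^2 + \alpha^{\min\{\beta+1,2\}}\}\big)$.

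\textbf{Assembling.} Combining, $\|\hat h - h_0\|_2^2 \le 3\|\hat h - h_*\|_2^2 + 3\|h_*-h^\dag\|_2^2 + 3\epsilon_\Hcal^2$, and since $\epsilon_\Hcal^2 \le \tfrac1\alpha\epsilon_\Hcal^2$ for $\alpha\le 1$, the last two terms fold into $O\big(\tfrac1\alpha\{\epsilon_\Hcal^2 + \alpha^{\min\{\beta+1,2\}}\}\big)$, yielding the claim.

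\textbf{Main obstacle.} The delicate point is getting the \emph{objective-gap} bias estimate $J(h^\dag) - \min_h J(h) = O(\epsilon_\Hcal^2 + \alpha^{\min\{\beta+1,2\}})$ with the correct exponent $\min\{\beta+1,2\}$ rather than $\min\{\beta,2\}$; this requires carefully bounding $\|r_0 - \Tcal h_\alpha\|_2^2 + \alpha\|h_\alpha\|_2^2$ from below (equivalently the value of the Tikhonov functional at its minimizer) via the spectral representation $\int \frac{\alpha\,\lambda^{\beta}}{(\lambda+\alpha)}\,dE_\lambda(w_0)$-type integrals, and noticing that the extra power of $\lambda$ (versus the pure distance bias) is what upgrades $\beta$ to $\beta+1$ before the cap at $2$. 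I would handle the misspecification of $\Gcal$ separately — it does not enter this lemma at all, since $\hat h - h_*$ (which carries the $\epsilon_\Gcal$ dependence through $\hat\Tcal$) is left untouched here and bounded elsewhere.
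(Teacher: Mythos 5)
Your proposal is correct and follows essentially the same strategy as the paper: pass through an intermediate comparison point via the triangle inequality, use the $\alpha$-strong convexity of the regularized population objective to convert an objective gap into an $L_2$ distance, and invoke the Tikhonov bias bounds in both the strong norm ($\alpha^{\min\{\beta,2\}}$) and the weak norm ($\alpha^{\min\{\beta+1,2\}}$, via Lemma~\ref{lem:iter-tikh-bias}) to control that gap — you also correctly identify that the weak-norm bias is what produces the $\min\{\beta+1,2\}$ exponent, and that $\epsilon_\Gcal$ plays no role here. The one structural difference is the choice of pivot: the paper introduces the augmented class $\Hcal'=\operatorname{Span}(\Hcal\cup\{h_0\})$ and its regularized minimizer $h_*'$, centering strong convexity there (where the first-order condition holds exactly because $\Hcal'$ is a linear space), whereas you center strong convexity at $h_*$ over $\Hcal$ and compare against the fully unconstrained Tikhonov minimizer $h_\alpha=(\Tcal^*\Tcal+\alpha I)^{-1}\Tcal^* r_0$. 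Your variant needs the first-order optimality condition at $h_*$ over $\Hcal$ (i.e., convexity of $\Hcal$), which the paper already uses implicitly in Lemma~\ref{lem: quadra-lemma}, so this is not a gap; if anything, applying the bias lemma to the genuinely unconstrained $h_\alpha$ is cleaner than the paper's application of it to $h_*'$. The only soft spot in your write-up is that the bound $J(h^\dag)-J(h_\alpha)=O(\epsilon_\Hcal^2+\alpha^{\min\{\beta+1,2\}})$ is sketched rather than carried out; the clean route is the exact identity $J(h^\dag)-J(h_\alpha)=\|\Tcal(h^\dag-h_\alpha)\|_2^2+\alpha\|h^\dag-h_\alpha\|_2^2$ followed by the $(a+b)^2\le 2a^2+2b^2$ expansion around $h_0$ and the two bias bounds, exactly as the paper does for $h_*'$.
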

\begin{proof}
Note that in the misspecified case, we no longer have $h_0\in\Hcal$. We further a augmented function class $\Hcal' = \operatorname{Span}(\Hcal \cup \{h_0\})$, and the corresponding optimizer of $\Lcal_0$ on $\Hcal$ and $\Hcal'$: 
\begin{align*}
&h_*' = \argmin_{h\in\Hcal'}\|\Tcal(h - h_0)\|_2^2 + \alpha \|h\|^2,\\
&h_* = \argmin_{h\in\Hcal} \|\Tcal(h - h_0)\|_2^2 + \alpha \|h\|^2.
\end{align*}
We define a function $$
\Lcal_0(t):= \|\Tcal(h_*' + t(h_* - h_*') - h_0)\|_2^2 + \alpha \|h_*' + t(h_* - h_*')\|^2,
$$
then $\Lcal_0$ is $\alpha$-strongly convex, and attains its minimum at $\Lcal_0(0)$. Note that we have the following inequality holds for all $h\in\Hcal$, \begin{align*}
\frac{1}{\alpha}(L_0(1) - L_0(0))&=\frac{1}{\alpha}\bigg\{ \|\Tcal(h_* - h_0)\|^2 + \alpha \|h_*\|^2 - (\|\Tcal(h_*' - h_0)\|^2 + \alpha \|h_*'\|^2) \bigg\} \\
&\leq \frac{1}{\alpha}\bigg\{ \|\Tcal(h - h_0)\|^2 + \alpha \|h\|^2 - (\|\Tcal(h_*' - h_0)\|^2 + \alpha \|h_*'\|^2) \bigg\} \tag{Optimality of $h_*'$}\\
& = \frac{1}{\alpha}\{\|\Tcal(h - h_*')\|^2 + \alpha \|h_*'\|^2\} \tag{First order condition of $h_*'$}\\
&\leq \frac{2}{\alpha}\bigg\{    2\|\Tcal(h-h_0)\|^2 + 2\|\Tcal(h_*'-h_0)\|^2 + 2\alpha\|h-h_0\|^2 + 2\alpha \|h_*' - h_0\|^2\bigg\}\\
&\leq \frac{2}{\alpha}\big\{4\|h-h_0\|^2 + O\big(\|w_0\|^2\alpha^{\min\{\beta+1,2\}}\big)\big\},
\end{align*}
set $h = h^{\dagger}$,  by strong convexity and $\partial\Lcal_0(0) = 0$, we have \begin{align*}
   \|h_* - h_*'\|^2 \leq \frac{1}{\alpha}|\Lcal_0(1) - \Lcal(0)|\leq O(\frac{1}{\alpha}\{\epsilon_\Hcal^2 + \alpha^{\min\{\beta+1,2\}}\}).
\end{align*}
Therefore we have \begin{align*}
    \|\hat{h} - h_0\|^2 &\leq 3\bigg\{\|\hat{h} - h_{*}\|^2 + \|h_{*} - h'_{*}\|^2 + \|h_{*}' - h_0\|^2  \bigg\}\\
    &= 3\|\hat{h} - h_{*}\|^2+O\bigg(\frac{1}{\alpha}\bigg\{\epsilon_\Hcal^2 + \alpha^{\min\{\beta+1,2\}}\bigg\}\bigg) + 3\alpha^{\min\{\beta,2\}},
\end{align*}
and we conclude our proof for the lemma.
\end{proof}
\paragraph{Proof for Theorem \ref{thm: mle-misspec}.}
By Lemma \ref{lem: quadra-lemma}, we have \begin{align*}
    \alpha \|\hat{h} - h_{*}\|^2 =  O\bigg(\delta_{n,\Hcal}\bigg\{ (\alpha+1)  \|\hat h-h_*\|_2   +  \delta_{n,\Hcal}   \bigg\}+  \|(\hat \Tcal-\Tcal)(\hat h-h_*)\|_1\bigg),
\end{align*}

By Corollary \ref{cor: T-mle}, we have $\|(\Tcal - \hat{\Tcal})(\hat{h} - h_*)\|_1  \leq (\delta_{n,\Gcal}^2 +\epsilon_\Gcal)^{1/2}\|\hat{h} - h_*\|$, 
and we have $$
\|\hat{h} - h_*\|^2 \leq \frac{1}{\alpha}\cdot O\bigg(\delta_{n,\Hcal} \|\hat{h} - h_*\| +  (\delta_{n,\Gcal}^2 +\epsilon_\Gcal )^{1/2}\|\hat{h} - h_*\| + \delta_{n,\Gcal}^2 \bigg), 
$$
therefore by Lemma \ref{lem:aux-1}, we have
\begin{align}
    \|\hat{h} - h_*\|^2 = O\bigg(\frac{\delta_{n,\Gcal}^2 + \epsilon_\Gcal + \delta_{n,\Hcal}^2}{\alpha^2}   \bigg). \label{eqn: misspec_G_only}
\end{align}
By Lemma \ref{lem:misspec-main}, combine everything together: $$
\|\hat{h} - h_0\|^2 = O\bigg(\frac{\delta_{n,\Gcal}^2+ \delta_{n,\Hcal}^2 + \epsilon_\Gcal}{\alpha^2} + \alpha^{\min\{\beta+1,2\}-1}+\frac{\epsilon_\Hcal^2}{\alpha}\bigg).
$$
note that $\delta_n :=\{\delta_{n,\Gcal}, \delta_{n,\Hcal}\}$, we conclude the proof of Theorem \ref{thm: mle-misspec}.
\paragraph{Proof of Theorem \ref{thm: chi2mle-misspec}.}
By Lemma \ref{lem: quadra-lemma}, we have \begin{align*}
    \alpha \|\hat{h} - h_{*}\|^2 &\leq  O\bigg(\delta_n\bigg\{ (\alpha+1) \|\hat h-h_*\|_2  +     \delta_n   \bigg\}+  \|(\hat \Tcal-\Tcal)(\hat h-h_*)\|_1\bigg)\\
    &\leq O\bigg(\delta_n\bigg\{ (\alpha+1) \|\hat h-h_*\|_2  +     \delta_n   \bigg\}+  \|(\hat \Tcal-\Tcal)(\hat h-h_*)\|_2\bigg),
\end{align*}
by Lemma \ref{cor: T-chi2_mle}, we have $\|(\hat \Tcal-\Tcal)(\hat h-h_*)\|_2\leq (\delta_n^2+\epsilon_\Gcal)^{1/2}\|\hat h-h_*\|_\infty$, therefore we have \begin{align*}
\|\hat{h} - h_*\|^2 &\leq \frac{1}{\alpha}\cdot O\bigg(\delta_n \|\hat{h} - h_*\| +  (\delta_n^2 +\epsilon_\Gcal )^{1/2}\|\hat{h} - h_*\|_\infty\bigg)\tag{$\alpha\leq 1$}\\
&\leq \frac{1}{\alpha}\cdot O\bigg(\delta_n\|\hat{h} - h_*\|+ (\delta_n^2 +\epsilon_\Gcal )^{1/2}\|\hat{h} - h_*\|_2^\gamma\bigg),
\end{align*}
where the second inequality comes from Assumption \ref{ass:gamma-smooth}. By Lemma \ref{lem:aux-1}, we have 
\begin{align}
    \|\hat{h} - h_*\|^2 \leq O\bigg(\bigg(\frac{\delta_n^2+\epsilon_\Gcal}{\alpha^2} \bigg)^{1/(2-\gamma)}\bigg)
    \label{eqn: misspec_only_G_chi2}
\end{align}

by Lemma \ref{lem:misspec-main}, combine everything together,  we have $$
\|\hat{h} - h_0\|^2 \leq O\bigg(\bigg(\frac{\delta_n^2+\epsilon_\Gcal}{\alpha^2} \bigg)^{1/(2-\gamma)}+ \alpha^{\min\{\beta+1,2\}-1}+\frac{\epsilon_\Hcal^2}{\alpha}\bigg),
$$ 
and thus we conclude the proof of Theorem \ref{thm: chi2mle-misspec}.

\section{Proof of Theorem \ref{thm:model_selection}}\label{sec: proof-model-selection}
In this section, we will provide the details for the model selection results in the paper. 
Let $\ell_{h,g}(Y,Z,X)$ denote the loss evaluated for a function $h$ using the likelihood function $\hat{g}$: 
\begin{align*}
    \ell_{h,\hat{g}}(Y,Z,X) ={\left(Y - \int h(x)\hat{g}(x|Z)\mu(dx)\right)^2} + {\alpha h(X)^2}
\end{align*}
Also, to simplify the notation, we use $\{X_i,Y_i,Z_i\}$ instead of  $\{X'_i,Y'_i,Z'_i\}$. 

For $\theta\in\Theta = \{\theta | \sum_j \theta_j = 1, \theta_j\geq 0 \forall j\} $, denote $h_{\theta} = \sum_j \theta_j f_j$. For any convex combination $\theta$ over a set of candidate functions $\{h_1, \ldots, h_M\}$, we define the notation:
\begin{align*}
    \ell_{\theta,g}(Y,Z,X) :=~& \ell_{h_{\theta}, g}(Y,Z,X) &
    R(\theta,g) :=~& P\ell_{\theta,g}(Y,Z,X)
\end{align*}
Here we define some optimal aggregates in the following sense:
\begin{align*}
    j^*_{\alpha} :=~& \argmin_{j = 1, \dots, M}  R(h_{j},g_0) &
    j^* :=~& \argmin_{j = 1, \dots, M} \|h_0 - h_j\|^2\\
    \theta^*_{\alpha} :=~& \argmin_{\theta \in \Theta} R(h_{\theta},g_0) &
    \theta^* :=~& \argmin_{\theta \in \Theta} \|h_0 - h_{\theta}\|^2\\
    h_{\alpha}^* :=~& \argmin R(h,g_0) &
    h_{\alpha, \Hcal}^* :=~& \argmin_{h \in \Hcal} R(h,g_0)
\end{align*}

\begin{proof}[Proof of Theorem \ref{thm:model_selection}]
\begin{align*}
    \|h_{\hat{\theta}} - h_0\|^2 \leq ~& 2\|h_{\hat{\theta}} - h_{\alpha}^*\|^2 + 2\|h_{\alpha}^*-h_0\|^2 \tag{By Strong Convexity}\\
    \leq~& \frac{2}{\alpha}\left(R(h_{\hat{\theta}},g_0) - R(h_{\alpha}^*,g_0)\right) + O\left(\alpha^{\min\{2,\beta\}}\right)\\
    =~& \frac{2}{\alpha}\left(R(h_{\hat{\theta}},g_0) -R(h_{\alpha,\Hcal_j}^*,g_0) + R(h_{\alpha, \Hcal_j}^*,g_0)-R(h_{\alpha}^*,g_0)\right) + O\left(\alpha^{\min\{2,\beta\}}\right)\tag{for any $j$}\\
    =~& \frac{2}{\alpha} \left(R(h_{\hat{\theta}},g_0) - R(h_{j_{\alpha}^*},g_0) + R(h_{j_{\alpha}^*},g_0) -R(h_{\alpha,\Hcal_{j}}^*,g_0) + R(h_{\alpha, \Hcal_{j}}^*,g_0)-R(h_{\alpha}^*,g_0)\right) \\
    ~&+O\left(\alpha^{\min\{2,\beta\}}\right)\\
    \leq ~&\frac{2}{\alpha} \left(R(h_{\hat{\theta}},g_0) - R(h_{j_{\alpha}^*},g_0) + R(h_{j},g_0) -R(h_{\alpha,\Hcal_{j}}^*,g_0) + R(h_{\alpha, \Hcal_{j}}^*,g_0)-R(h_{\alpha}^*,g_0)\right)\\
    ~&+O\left(\alpha^{\min\{2,\beta\}}\right)\\
\end{align*}
When $\hat{g}$ is estimated using the standard MLE appraoch, we have that by Corollary \ref{cor: T-mle} and Lemma \ref{lem: quadra-lemma}, we have that:
\begin{align*}
    R(h_{j},g_0) -R(h_{\alpha,\Hcal_{j}}^*,g_0) \leq~& c_1 \delta_{n,j}^2 + c_2 (\delta_{n,j}^2 + \epsilon_{\Gcal})^{\frac{1}{2}}\|h_j-h_{\alpha,\Hcal_{j}}^*\|\\
    \leq~& c_1 \delta_{n,j}^2 + \frac{c_2^2 (\delta_{n,j}^2 + \epsilon_{\Gcal})}{\alpha} + \frac{1}{2}\alpha \|h_j-h_{\alpha,\Hcal_{j}}^*\|^2\\
    \leq ~& O\bigg( \delta_{n,j}^2 + \frac{(\delta_{n,j}^2 + \epsilon_{\Gcal})}{\alpha}\bigg) \tag{By Eqn \ref{eqn: misspec_G_only}}
\end{align*}
Thus, we have $R(h_{j},g_0) -R(h_{\alpha,\Hcal_{j}}^*,g_0) \leq O\left(\frac{\delta_{n,j}^2 + \epsilon_{\Gcal}}{\alpha}\right)$. Instantiating this result for the function class $\Hcal_M$, which denotes the convex hull when convex-ERM is used, or the set of candidate functions when best-ERM is used, we get that:
\begin{align*}
    R(h_{\hat{\theta}},g_0) - R(h_{j_{\alpha}^*},g_0) \leq ~& R(h_{\hat{\theta}},g_0) - R(h_{\theta_{\alpha}^*},g_0)\\
    \leq ~& \frac{\delta_{n,M}^2 + \epsilon_\Gcal}{\alpha}
\end{align*} where $\delta_{n,M} = \max\{\delta_{n,\Gcal}, \delta_{n,\Hcal_M}\}$. Since the function classes used to train the candidate functions are typically more complex than the convex hull over $M$ variables, it is safe to assume that $\delta_{n,\Hcal_M}\leq\delta_{n,\Hcal}$.
Combining, we get:
\begin{align*}
    \|h_{\hat{\theta}} - h_0\|^2
    \leq ~& O\big(\alpha^{\min\{2,\beta\}} + \frac{\delta_{n,j}^2 + \epsilon_{\Gcal}}{\alpha^2}\big) 
    +\frac{2}{\alpha}\left(R(h_{\alpha, \Hcal_{j}}^*,g_0)-R(h_{\alpha}^*,g_0)\right)\\
    \leq ~& O\big(\alpha^{\min\{2,\beta\}} + \frac{\delta_{n,j}^2 + \epsilon_{\Gcal}}{\alpha^2}\big) + \frac{2}{\alpha}\left(R(h,g_0)-R(h_{\alpha}^*,g_0)\right) \tag{for any $h\in \Hcal_{j}$}\\ 
\end{align*} 

For any function class $\Hcal$, we have:
\begin{align*}
    R(h,g_0)-R(h_{\alpha}^*,g_0) = ~& \|\Tcal(h-h_{\alpha}^*)\|^2 + \alpha \|h-h_{\alpha}^*\|^2\\
    \leq ~& 2\|\Tcal(h-h_0)\|^2 + 2\|\Tcal(h_{\alpha}^*-h_0)\|^2 + 2\alpha\|h-h_0\|^2 + 2\alpha \|h_{\alpha}^* - h_0\|^2\\
    \leq~& 4\|h-h_0\|^2 + O\big(\|w_0\|^2\alpha^{\min\{\beta+1,2\}}\big) \tag{By Lemma 3 in \cite{bennett2023source}}\\
\end{align*}
Hence, for any function class $\Hcal_j$, we can choose $h$ that attains $\min_{\Hcal_j}\|h-h_0\| = \epsilon_{\Hcal_j}$. Combining, we get that: 
\begin{align*}
    \|h_{\hat{\theta}} - h_0\|^2 \leq \min_{j}O\big(\alpha^{\min\{\beta+1,2\}-1} + \frac{\delta_{n,j}^2 + \epsilon_\Gcal}{\alpha^2} + \frac{1}{\alpha} \epsilon_{\Hcal_j}^2\big). \tag{$\alpha \leq 1$}
\end{align*}

Analogously, if $\hat{g}$ is estimated using $\chi^2$-MLE, we have that by Corollary \ref{cor: T-chi2_mle}, Lemma \ref{lem: quadra-lemma} and Assumption \ref{ass:gamma-smooth}:
\begin{align*}
    R(h_{j},g_0) -R(h_{\alpha,\Hcal_{j}}^*,g_0) \leq~& O\bigg(\delta_{n}^2+ (\delta_{n}^2 +\epsilon_\Gcal )^{1/2}\|\hat{h} - h_*\|_2^\gamma\bigg)\\
    \leq~& O\bigg(\delta_{n}^2+ \left(\frac{\delta_{n}^2 +\epsilon_\Gcal}{\alpha^{\gamma}}\right)^{\frac{1}{1-2\gamma}} + \alpha\|\hat{h} - h_*\|_2^2\bigg) \tag{By Young's Inequality}\\
    \leq~& O\bigg(\alpha \left(\frac{\delta_{n}^2 +\epsilon_\Gcal}{\alpha^{1-\gamma}}\right)^{\frac{1}{1-2\gamma}} + \alpha\bigg(\frac{\delta_{n}^2+\epsilon_\Gcal}{\alpha^2} \bigg)^{1/(2-\gamma)}\bigg) \tag{By Eqn \ref{eqn: misspec_only_G_chi2}}\\
    \leq~& O\bigg(\alpha\bigg(\frac{\delta_{n}^2+\epsilon_\Gcal}{\alpha^2} \bigg)^{1/(2-\gamma)}\bigg)
\end{align*}
By the same argument for the standard MLE case, we get:
\begin{align*}
    \|h_{\hat{\theta}} - h_0\|^2 \leq \min_{j}O\left(\alpha^{\min\{\beta+1,2\}-1} + \left(\frac{\delta_{n,M}^2+\epsilon_\Gcal}{\alpha^2} \right)^{1/(2-\gamma)} + \frac{1}{\alpha} \epsilon_{\Hcal_j}^2\right)
\end{align*}

\end{proof}

\section{Proof of Theorem \ref{thm: mle-nonmisspec} and \ref{thm:chi2-mle-result}}\label{sec: proof-iterative}
In this section, we prove the convergence rate of iterative RMIV in Section \ref{sec: iterative} under a unified framework. We prove the results of Theorem \ref{thm:chi2-mle-result} and \ref{thm:chi2-mle-result} respectively. Recall that we define \begin{align*}
    h_{m, *} = &\argmin_{h\in\Hcal} \E[{Y - \Tcal h(Z)}^2]+ \alpha\cdot \E[(h - h_{m-1,*})^2(X)],
\end{align*}
by Lemma \ref{lem:iter-tikh-bias} and Assumption \ref{ass:source-cond}, we have $$
\|h_{m,*} - h_0\|_2^2 \leq \|w_0\|_2^2\alpha^{\min\{\beta,2m\}}.
$$
Therefore, we only need to provide a upper bound for $\|\hat{h}_m - h_{m,*}\|_2^2$, and then choose the proper $\alpha$ deliberately. We start by proving the following lemma, and with the different convergence rate of MLE and $\chi^2$-MLE, we conclude the proof of Theorem \ref{thm: mle-nonmisspec} and Theorem \ref{thm:chi2-mle-result} respectively.
\begin{lemma}\label{lem: iter-quadra}
   We have the following inequality holds with probability at least $1 - m\exp(n\delta_{n,\Hcal}^2)$: $$\|\hat{h}_{m} - h_{m,*}\|^2 \leq O\big(\delta_{n,\Hcal}^2/\alpha^2 
\big) + O\bigg(\frac{ \mathbb{E}[|(\Tcal - \hat{\Tcal})(\hat{h}_{m} - h_{m,*})|]}{\alpha}\bigg)+ 16\|\hat{h}_{m-1} - h_{m-1,*}\|^2.
$$
\end{lemma}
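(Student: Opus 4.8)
The plan is to mimic the argument of Lemma~\ref{lem: quadra-lemma}, but now with the Tikhonov penalty centered at $\hat h_{m-1}$ (empirically) and at $h_{m-1,*}$ (in the population objective), paying attention to the extra error term coming from the mismatch between these two centers. First I would set up the population objective $\Lcal_m(h) := \EE[(Y - \Tcal h(Z))^2] + \alpha \EE[(h - h_{m-1,*})^2(X)]$, which is $\alpha$-strongly convex in $h$ with minimizer $h_{m,*}$. Hence $\alpha\|\hat h_m - h_{m,*}\|_2^2 + \|\Tcal(\hat h_m - h_{m,*})\|_2^2 \le \Lcal_m(\hat h_m) - \Lcal_m(h_{m,*})$, so it suffices to bound the excess population risk of $\hat h_m$. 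Expanding the square in the penalty, $\EE[(h - h_{m-1,*})^2] = \EE[h^2] - 2\EE[h\cdot h_{m-1,*}] + \text{const}$, I would rewrite this excess risk as the ``empirical'' version (with $\EE_n$, $\hat\Tcal$, and the empirical center $\hat h_{m-1}$) plus three correction terms: (a1) a centered empirical-process term $(\EE_n - \EE)[L(\hat\Tcal \hat h_m) - L(\hat\Tcal h_{m,*}) + \alpha(\cdots)]$; (a2) the operator-approximation term $\|(\hat\Tcal - \Tcal)(\hat h_m - h_{m,*})\|_1$; and (a3) the center-mismatch term, of the form $\alpha\,\big|\EE[(\hat h_m - h_{m,*})(X)\,(\hat h_{m-1} - h_{m-1,*})(X)]\big|$ together with its empirical counterpart. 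By the first-order optimality of $\hat h_m$ for the empirical iterative objective~\eqref{eq: emp-iter}, the empirical part is $\le 0$, exactly as in the non-iterative proof.

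Next I would bound each term. Term (a1) is controlled by a localized concentration inequality (Lemma~\ref{lem:local-concen}) and boundedness of $\Hcal$, giving $O(\delta_{n,\Hcal}^2 + \delta_{n,\Hcal}\|\hat h_m - h_{m,*}\|_2)$ on the event of probability $\ge 1 - c_1\exp(c_2 n\delta_{n,\Hcal}^2)$; a union bound over the $m$ iterations yields the $m\exp(n\delta_{n,\Hcal}^2)$ failure probability in the statement. Term (a2) is left in the form $\EE[|(\Tcal - \hat\Tcal)(\hat h_m - h_{m,*})|]$ as in the lemma statement (it will later be bounded via Lemma~\ref{lem:operator-error}/Corollary~\ref{cor: T-mle}). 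For the mismatch term (a3), I use Cauchy--Schwarz: $\alpha\,|\EE[(\hat h_m - h_{m,*})(\hat h_{m-1} - h_{m-1,*})]| \le \alpha\|\hat h_m - h_{m,*}\|_2 \cdot \|\hat h_{m-1} - h_{m-1,*}\|_2$, and similarly for the empirical inner product (using boundedness, or absorbing the empirical-process fluctuation into (a1)). Then apply Young's inequality $2ab \le \tfrac{\alpha}{2}a^2 + \tfrac{2}{\alpha}b^2$ type splitting carefully so that the $\alpha\|\hat h_m - h_{m,*}\|_2^2$ pieces can be absorbed into the left-hand side, leaving a residual of the form (constant)$\cdot\|\hat h_{m-1} - h_{m-1,*}\|_2^2$; tracking the constants is where the factor $16$ (rather than a smaller constant) will appear.

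Collecting everything: after absorbing the $a\|\hat h_m - h_{m,*}\|_2^2$ terms, dividing through by $\alpha$, and applying the elementary Lemma~\ref{lem:aux-1} to solve the resulting quadratic inequality in $\|\hat h_m - h_{m,*}\|_2$, I obtain
\[
\|\hat h_m - h_{m,*}\|_2^2 \le O\!\big(\delta_{n,\Hcal}^2/\alpha^2\big) + O\!\left(\frac{\EE[|(\Tcal - \hat\Tcal)(\hat h_m - h_{m,*})|]}{\alpha}\right) + 16\|\hat h_{m-1} - h_{m-1,*}\|_2^2,
\]
which is exactly the claimed bound. The main obstacle I anticipate is the bookkeeping around the center-mismatch term (a3): one must handle both its population and empirical versions, decide whether to route the empirical fluctuation through the localized concentration bound or through crude boundedness, and then choose the Young's-inequality weights so that (i) enough of $\alpha\|\hat h_m - h_{m,*}\|_2^2$ survives on the left to absorb all cross terms and (ii) the coefficient on $\|\hat h_{m-1} - h_{m-1,*}\|_2^2$ comes out as the stated $16$; everything else is a direct adaptation of the non-iterative argument.
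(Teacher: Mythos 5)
Your proposal is correct and follows essentially the same route as the paper: strong convexity of the population iterative objective, a decomposition into a centered empirical process (killed/controlled via localized concentration and the empirical optimality of $\hat h_m$), the operator-approximation term left as $\EE[|(\Tcal-\hat\Tcal)(\hat h_m - h_{m,*})|]$, and a center-mismatch term handled by Cauchy--Schwarz plus AM--GM, with the factor $16$ emerging from absorbing $\tfrac{\alpha}{8}\|\hat h_m - h_{m,*}\|^2$ on the left. The paper formalizes your step (a3) by splitting the mismatch into a population inner product $I_1 = 2\langle \hat h_{m-1}-h_{m-1,*},\, \hat h_m - h_{m,*}\rangle$ and empirical-fluctuation pieces $I_3, I_4$ bounded by localization, exactly as you anticipated.
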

\begin{proof}
    Recall that our solution $\hat{h}_{m}$ satisfies $$
\hat{h}_{m} = \argmin_{h\in\Hcal} L(\hat{\Tcal}h) + \alpha \mathbb{E}_n [\{h - \hat{h}_{m-1}\}^2].
$$
We define \begin{align*}
L_m(\tau) &= \mathbb{E}[\mathbb{E}[h_0 - h_{m,*} - \tau (\hat{h}_{m} - h_{m,*})\mid Z]^2]+\alpha \|h_{m,*} + \tau (\hat{h}_{m} - h_{m,*}) - h_{m-1,*}\|^2,
\end{align*}
By definition, $L_m(\tau)$ is minimized by $\tau = 0$. Note that by strong convexity and property of quadratic function, we have $$
L_m(1) - L_m(0) = L'(0) + L''(0) \geq L''(0),
$$
Therefore \begin{align*}
    &\alpha \|\hat{h}_{m} - h_{m,*}\|^2 + \|\Tcal(\hat{h}_{m} - h_{m,*})\|^2\\
    & \leq\|\Tcal(h_0 - \hat{h}_{m})\|^2 - \|\Tcal(h_0 - h_{m,*})\|^2 + \alpha\big(\|\hat{h}_{m} - h_{m-1,*}\|^2 - \|h_{m,*} - h_{m-1,*}\|^2\big)\\
    & =\mathbb{E}[L(\Tcal\hat{h}_{m})] - \mathbb{E}[L(\Tcal h_{m,*})]+ \alpha\big(\|\hat{h}_{m} - h_{m-1,*}\|^2 - \|h_{m,*} - h_{m-1,*}\|^2\big),\\
\end{align*}
and thus we have \begin{align*}
&\alpha \|\hat{h}_{m} - h_{m,*}\|^2 + \|\Tcal(\hat{h}_{m} - h_{m,*})\|^2\\
    & \leq \mathbb{E}[L(\hat{\Tcal}\hat{h}_{m})] - \mathbb{E}[L(\hat{\Tcal} h_{m,*})] + c\cdot\mathbb{E}[|(\Tcal - \hat{\Tcal})(\hat{h}_{m} - h_{m,*})|]\\
    &\qquad+ \alpha\big(\|\hat{h}_{m} - h_{m-1,*}\|^2 - \|h_{m,*} - h_{m-1,*}\|^2\big)\\
    & \leq |(\mathbb{E} - \mathbb{E}_n)(L(\hat{\Tcal}\hat{h}_{m}) - L(\hat{\Tcal} h_{m,*}) )| + \EE_n[L(\hat{\Tcal}\hat{h}_{m}) - L(\hat{\Tcal} h_{m,*})] \\
    &\qquad+ c\cdot\mathbb{E}[|(\Tcal - \hat{\Tcal})(\hat{h}_{m} - h_{m,*})|] +  \alpha\big(\|\hat{h}_{m} - h_{m-1,*}\|^2 - \|h_{m,*} - h_{m-1,*}\|^2\big)\\
    &\leq c_1 (\delta_n\|\hat{\Tcal}(\hat{h}_{m} - h_{m,*})\| + \delta_n^2)+ \EE_n[L(\hat{\Tcal}\hat{h}_{m}) - L(\hat{\Tcal} h_{m,*})] + c \cdot \mathbb{E}[|(\Tcal - \hat{\Tcal})(\hat{h}_{m} - h_{m,*})|] \\
    &\qquad +  \alpha\big(\|\hat{h}_{m} - h_{m-1,*}\|^2 - \|h_{m,*} - h_{m-1,*}\|^2\big),\\
\end{align*}
holds for all $m$ simultaneously with probability at least $1- m \exp(n\delta_{n,\Hcal}^2)$, recall that $\delta_{n,\Hcal}^2$ is the critical radius.  Here the second inequality comes from triangular inequality and $L(\cdot)$ being $O(1)$-Lipschitz, the third inequality comes from  Lemma \ref{lem:local-concen}.
By Eq.~\eqref{eq: emp-iter},  $$
 \EE_n[L(\hat{\Tcal}\hat{h}_{m}) - L(\hat{\Tcal} h_{m,*})] \leq  \alpha(\|h_{m,*} - \hat{h}_{m-1}\|^2_n - \|\hat{h}_{m} - \hat{h}_{m-1}\|^2_n),
$$
therefore we have \begin{align*}
&\alpha \|\hat{h}_{m} - h_{m,*}\|^2 + \|\Tcal(\hat{h}_{m} - h_{m,*})\|^2\\
&\leq c_1 (\delta_n\|(\hat{h}_{m} - h_{m,*})\| + \delta_n^2) + \alpha(\|h_{m,*} - \hat{h}_{m-1}\|^2_n - \|\hat{h}_{m} - \hat{h}_{m-1}\|^2_n) + c \cdot \mathbb{E}[|(\Tcal - \hat{\Tcal})(\hat{h}_{m} - h_{m,*})|] \\
&\qquad +  \alpha\big(\|\hat{h}_{m} - h_{m-1,*}\|^2 - \|h_{m,*} - h_{m-1,*}\|^2\big).
\end{align*}
We are now interested in bounding 
$$
(\|h_{m,*} - \hat{h}_{m-1}\|^2_n - \|\hat{h}_{m} - \hat{h}_{m-1}\|^2_n) +\big(\|\hat{h}_{m} - h_{m-1,*}\|^2 - \|h_{m,*} - h_{m-1,*}\|^2\big).
$$
We divide it into two terms:\begin{align*}
    &I_1 := \big(\|h_{m,*} - \hat{h}_{m-1}\|^2 - \|\hat{h}_{m} - \hat{h}_{m-1}\|^2\big) + \big(\|\hat{h}_{m} - h_{m-1,*}\|^2 - \|h_{m,*} - h_{m-1,*}\|^2\big),\\
    &I_2 := (\|h_{m,*} - \hat{h}_{m-1}\|^2_n - \|\hat{h}_{m} - \hat{h}_{m-1}\|^2_n) - (\|h_{m,*} - \hat{h}_{m-1}\|^2 - \|\hat{h}_{m} - \hat{h}_{m-1}\|^2)
\end{align*}
Note that $|I_1| = \big|2\langle \hat{h}_{m-1}- h_{m-1,*}, \hat{h}_{m}- h_{m,*} \rangle\big| $, we have $$
I_1 \leq 2\|\hat{h}_{m-1}- h_{m-1,*}\|_2 \|\hat{h}_{m}- h_{m,*}\|_2,
$$
For $I_2$, we divide it into two terms $I_3$ and $I_4$, defined by \begin{align*}
    &I_3 := \|h_{m,*} - \hat{h}_{m-1}\|_{n}^2 - \|h_{m,*} - h_{m-1,*}\|_n^2 - (\|h_{m,*} - \hat{h}_{m-1}\|^2 - \|h_{m,*} - h_{m-1,*}\|^2),\\
    &I_4 := \|h_{m,*} - h_{m-1,*}\|_n^2 - \|\hat{h}_{m} - \hat{h}_{m-1}\|_n^2 -(\|h_{m,*} - h_{m-1,*}\|^2 - \|\hat{h}_{m} - \hat{h}_{m-1}\|^2)
\end{align*}
Since each of these is the difference of two centered empirical processes, that are also Lipschitz losses (since
$h_{m,*}, \hat{h}_m, h_{m-1,*}, \hat{h}_{m-1}$ are uniformly bounded) and since $h_{m,*}$ is a population quantity and not dependent on the empirical sample that is used for the $m$-th iterate, we can also upper bound these,
\begin{align*}
    &I_3 = O(\delta_{n,\Hcal}^2 \|\hat{h}_{m-1} - h_{m-1,*}\|+ \delta_{n,\Hcal}^2),\\
    &I_4 = O(\delta_{n,\Hcal}\|\hat{h} - h_{m,*} + h_{m-1,*} - \hat{h}_{m-1}\| +\delta_{n,\Hcal}^2) = O\bigg(\delta_{n,\Hcal}(\|\hat{h} - h_{m,*}\|+ \|h_{m-1,*} - \hat{h}_{m-1}\|+ \delta_{n,\Hcal}^2)\bigg),
\end{align*}
combine everything together,  we can prove that  \begin{align*}
&(\|h_{m,*} - \hat{h}_{m-1}\|^2_n - \|\hat{h}_{m} - \hat{h}_{m-1}\|^2_n) +\big(\|\hat{h}_{m} - h_{m-1,*}\|^2 - \|h_{m,*} - h_{m-1,*}\|^2\big)\\
&\qquad\qquad\leq O(\delta_n^2 + \delta_n(\|\hat{h}_{m} - h_{m,*}\| + \|\hat{h}_{m-1} - h_{m-1,*}\|)) + 2\|\hat{h}_{m-1} - h_{m-1,*}\|\|\hat{h}_{m} - h_{m,*}\|.
\end{align*}
Therefore, we have \begin{align*}
    &\alpha \|\hat{h}_{m} - h_{m,*}\|^2 \\
    &\qquad\leq O\bigg(\delta_{n,\Hcal}^2 + \delta_{n,\Hcal}\|\hat{h}_{m} - h_{m,*}\| + c \cdot \mathbb{E}[|(\Tcal - \hat{\Tcal})(\hat{h}_{m} - h_{m,*})|] + \alpha\delta_{n,\Hcal}(\|\hat{h}_{m} - h_{m,*}\| + \|\hat{h}_{m-1} - h_{m-1,*}\|))\bigg)\\
    &\qquad\qquad + 2\alpha\|\hat{h}_{m-1} - h_{m-1,*}\|\|\hat{h}_{m} - h_{m,*}\|.
\end{align*}
By applying AM-GM inequality and utilizing $\alpha\leq 1$, we have $$
\frac{\alpha}{8}\|\hat{h}_{m} - h_{m,*}\|^2 \leq O\big(\delta_n^2/\alpha + \delta_n^2 
+\alpha\delta_n^2\big) + c \cdot \mathbb{E}[|(\Tcal - \hat{\Tcal})(\hat{h}_{m} - h_{m,*})|] + 2\alpha\|\hat{h}_{m-1} - h_{m-1,*}\|^2,
\label{eqn:strong-iterated}$$
therefore we have $$
\|\hat{h}_{m} - h_{m,*}\|^2 \leq O\big(\delta_n^2/\alpha^2 + \delta_n^2 /\alpha
\big) + O\bigg(\frac{ \mathbb{E}[|(\Tcal - \hat{\Tcal})(\hat{h}_{m} - h_{m,*})|]}{\alpha}\bigg)+ 16\|\hat{h}_{m-1} - h_{m-1,*}\|^2.
$$
\paragraph{Proof for Theorem \ref{thm: mle-nonmisspec}.}
By Corollary \ref{cor: T-mle}, we have $$
\mathbb{E}[|(\Tcal - \hat{\Tcal})(\hat{h}_{m} - h_{m,*})|] = \|(\Tcal - \hat{\Tcal})(\hat{h}_{m} - h_{m,*})\|_1 \leq \delta_{n}\cdot\|\hat{h}_{m} - h_{m,*})\|_2,
$$
therefore by Lemma \ref{lem: iter-quadra}, we have \begin{align*}
    \|\hat{h}_{m} - h_{m,*}\|^2 \leq O(\delta_n^2/\alpha^2 + \delta_n\|\hat{h}_{m} - h_{m,*})\|_2)+ 16 \|\hat{h}_{m-1} - h_{m-1,*}\|^2. 
\end{align*}
By Lemma \ref{lem:aux-1}, we have \begin{align*}
    \|\hat{h}_m - h_{m,*}\|^2 &\leq 4O\big(\delta_n^2/\alpha^2\big) + 16\|\hat{h}_{m-1} - h_{m-1,*}\|^2 \\
&\leq 128^m\cdot\delta_n^2/\alpha^2,
\end{align*}
where the second inequality comes from induction. Therefore, by Lemma \ref{lem:iter-tikh-bias}, we have $$
\|\hat{h}_m - h_{0}\|^2 =O(128^m\cdot\delta_n^2/\alpha^2 + \alpha^{\min\{\beta, 2m\}}).
$$
Set $\alpha = \delta_n^{\frac{2}{2+\min\{\beta, 2m\}}}$, and we conclude the proof.
\paragraph{Proof for Theorem \ref{thm:chi2-mle-result}}
By Assumption \ref{ass:gamma-smooth}, we have $\|\hat{h}_{m} - h_{m,*}\|_\infty \leq\|\hat{h}_{m} - h_{m,*}\|_2^\gamma $, which implies $$
\|\hat{h}_{m} - h_{m,*}\|^2 \leq O\big(\delta_n^2/\alpha^2 + \delta_n^2 /\alpha
\big) + O\bigg({\delta_n}/{\alpha} \cdot\|\hat{h}_{m} - h_{m,*}\|^\gamma\bigg)+ 16\|\hat{h}_{m-1} - h_{m-1,*}\|^2,
$$
by Lemma \ref{lem:aux-1}, we have \begin{align*}
    \|\hat{h}_{m} - h_{m,*}\|^2 &\leq 4\max\big\{O\big(\delta_n^2/\alpha^2 + 16\|\hat{h}_{m-1} - h_{m-1,*}\|^2 
\big), O\big((\delta_n/\alpha)^{2/(2-\gamma)}\big)\big\}\\
&\leq O(128^m \max\big\{\delta_n^2/\alpha^2, (\delta_n/\alpha)^{2/(2-\gamma)}\big\}),
\end{align*}
where the second inequality comes from induction.  Therefore, by Lemma \ref{lem:iter-tikh-bias}, we have $$
\|\hat{h}_m - h_{0}\|^2 =O(128^m\cdot\max\big\{\delta_n^2/\alpha^2, (\delta_n/\alpha)^{2/(2-\gamma)}\big\} + \alpha^{\min\{\beta, 2m\}}).
$$
Set $\alpha = \delta_n^{\frac{2}{2+ (2-\gamma)\min\{\beta, 2m\}}}$, Then $\delta_n / \alpha = O(\delta_n^{\frac{(2-\gamma)\min\{\beta, 2m\}}{2+ (2-\gamma)\min\{\beta, 2m\}}}) \lesssim 1$, and since $\gamma \in (0,1)$, we have $$
\max\big\{\delta_n^2/\alpha^2, (\delta_n/\alpha)^{2/(2-\gamma)}\big\} = (\delta_n/\alpha)^{2/(2-\gamma)},
$$
and $$
\|\hat{h}_m - h_{0}\|^2 = O(128^m \cdot \delta_n^{\frac{2\min\{\beta, 2m\}}{2+ (2-\gamma)\min\{\beta, 2m\}}}),
$$
and we conclude the proof of Theorem \ref{thm:chi2-mle-result}.
\end{proof}

\section{Convergence rate of MLE and $\chi^2$-MLE}
\subsection{Convergence rate of MLE}
In this section, we aim to characterize the convergence rate of conditional MLE \eqref{eq:mle-est} in terms of the critical radius $\delta_{n,\Gcal}$ of function class $\Gcal$ and model misspecification. Specifically, we prove the following Theorem: 
\begin{theorem}[Convergence rate for misspecified MLE]\label{thm:mle-conv-rate}
    Suppose Assumption \ref{ass:lower-bound} and condition in Theorem \ref{thm: mle-nonmisspec-noniter} holds, and there exists $g^{\dagger} \in \Gcal$ such that $\E_{z\sim g_0}[\kl(g_0(\cdot|z), g^{\dag}(\cdot|z))] \leq \epsilon_\Gcal$. Then we have $$
    \E_{z\sim g(z)}\big[H^2(\hat{g}(\cdot|z)| g_0(\cdot|z))\big] \leq \delta_n^2 + \epsilon_{\Gcal}
    $$
    holds with probability at least $1- c_1\exp(c_2 \frac{c_0}{C+c_0}n\delta_n^2)$.
\end{theorem}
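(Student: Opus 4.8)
The plan is to follow the classical ``basic inequality plus square-root trick'' route to Hellinger convergence rates for maximum likelihood (as in \citet{wainwright2019high}, Chapter~14), adapted here to \emph{conditional} densities and to the \emph{misspecified} regime, and then to discharge the resulting centered empirical process with the same localized-complexity machinery already used in the paper (Lemma~\ref{lem:local-concen}). Throughout write $D := \E_{z\sim g_0}\!\big[H^2(\hat g(\cdot|z)\mid g_0(\cdot|z))\big]$ for the target quantity and set $f_g(x,z) := \sqrt{g(x|z)/g_0(x|z)}-1$.

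First I would set up the key chain of inequalities. By optimality of $\hat g$ in \eqref{eq:mle-est}, $\E_n[\log(\hat g/g_0)] \ge \E_n[\log(g^{\dag}/g_0)]$ (arguments $(X\mid Z)$ suppressed). Applying the elementary bound $\tfrac12\log u \le \sqrt u - 1$ to the left-hand side gives $\E_n[f_{\hat g}] \ge \tfrac12\E_n[\log(g^{\dag}/g_0)] = -\tfrac12\E_n[\log(g_0/g^{\dag})]$. On the population side, because $X\mid Z\sim g_0(\cdot|Z)$ one has the exact identity $\E[f_g] = \E_{z\sim g_0}\!\big[\int\sqrt{g(x|z)g_0(x|z)}\,d\mu(x)\big] - 1 = -\tfrac12\E_{z\sim g_0}[H^2(g(\cdot|z)\mid g_0(\cdot|z))]$, and in particular $\E[f_{\hat g}] = -\tfrac12 D$. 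Subtracting these two displays yields
\[
\tfrac12 D \;\le\; \tfrac12\,\E_n[\log(g_0/g^{\dag})] \;+\; (\E_n-\E)[f_{\hat g}],
\]
so it remains to bound the two terms on the right.

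For the misspecification term, its population mean is $\tfrac12\E_{z\sim g_0}[\KL(g_0(\cdot|z)\mid g^{\dag}(\cdot|z))]\le \tfrac12\epsilon_\Gcal$ by hypothesis, and its empirical fluctuation is controlled by a one-sided (empirical-)Bernstein bound using the density bounds $g_0\ge C_0$ (Assumption~\ref{ass:lower-bound}) and $g^{\dag}\le C_\Gcal$; this contributes $O(\epsilon_\Gcal)$ plus a deviation of order smaller than $\delta_n^2$. For the centered process, note that $\{f_g : g\in\Gcal\}$ is uniformly bounded by $\sqrt{C_\Gcal/C_0}$ (again by Assumption~\ref{ass:lower-bound} and $\|g\|_\infty\le C_\Gcal$), and satisfies the \emph{exact} variance identity $\|f_g\|_2^2 = \E_{z\sim g_0}[H^2(g(\cdot|z)\mid g_0(\cdot|z))]$, so in particular $\|f_{\hat g}\|_2^2 = D$. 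After checking that the symmetrized star hull of $\{f_g : g\in\Gcal\}$ has critical radius of order $\delta_{n,\Gcal}$, the localized deviation bound (Lemma~\ref{lem:local-concen}) gives, on an event of probability at least $1 - c_1\exp\!\big(c_2\tfrac{C_0}{C_\Gcal+C_0}n\delta_n^2\big)$ --- the factor $C_0/(C_\Gcal+C_0)$ being the inverse squared range of the transformed class that enters Bernstein --- that $(\E_n-\E)[f_{\hat g}] = O(\delta_n\sqrt D + \delta_n^2)$. Substituting into the display gives $\tfrac12 D \le \tfrac12\epsilon_\Gcal + O(\delta_n\sqrt D + \delta_n^2)$; absorbing $\delta_n\sqrt D$ into $D$ by AM--GM yields $D = O(\delta_n^2 + \epsilon_\Gcal)$, which is the claim.

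The main obstacle will be the localized empirical process step, and in particular two points. First, establishing that the shifted-square-root class $\{\sqrt{g/g_0}-1 : g\in\Gcal\}$ has critical radius comparable to $\delta_{n,\Gcal}$: the cleanest argument rescales by $\sqrt{g_0}\ge\sqrt{C_0}$ and uses Lipschitzness of $t\mapsto\sqrt t$, which additionally requires the densities in $\Gcal$ to be bounded away from zero (an assumption one may add without loss, or one reads $\delta_n$ as the critical radius of the transformed class); the crude alternative $|\sqrt a-\sqrt b|\le\sqrt{|a-b|}$ only relates entropy at scale $\epsilon$ to entropy at scale $\epsilon^2$ and loses the rate. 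Second, one needs the fast, variance-dependent deviation $\delta_n\sqrt D + \delta_n^2$ rather than the crude $\delta_n$-rate --- this is exactly what makes $\delta_n^2$ (not $\delta_n$) appear in the final bound, and it is the star-shape/peeling argument behind the notion of critical radius. The basic inequality, the square-root trick, and the Bernstein handling of the misspecification term are otherwise routine once the boundedness assumptions are in force.
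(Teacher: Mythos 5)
Your proposal is correct in its essential structure, but it takes a genuinely different route from the paper's proof. The paper follows the van de Geer ``mixture'' variant of the MLE argument: it works with the transformed class $\Fcal=\{\sqrt{(g+g_0)/(2g_0)}:g\in\Gcal\}$, whose elements are automatically bounded below by $1/\sqrt{2}$, takes the loss $\Lcal_f=-\log f$ (now Lipschitz in $f$), obtains curvature from a strong-convexity argument ($\|f-f^*\|_2^2\le \PP(\Lcal_f-\Lcal_{f^*})$ via $H^2\le 2\kl$), applies Lemma~\ref{lem:local-concen} to the centered process $(\E_n-\E)[\Lcal_{\hat f}-\Lcal_{f^\dagger}]$, and finally converts the Hellinger distance to the mixture $(\hat g+g_0)/2$ back to the Hellinger distance to $\hat g$ via Lemma~\ref{lem:weighted-hellinger}. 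You instead run the direct square-root trick on $f_g=\sqrt{g/g_0}-1$: the curvature comes for free from the exact identity $\E[f_g]=-\tfrac12\E_{z\sim g_0}[H^2(g(\cdot|z)\mid g_0(\cdot|z))]$, and the one-sided bound $\tfrac12\log u\le\sqrt u-1$ replaces the paper's strong-convexity step; this is arguably cleaner and dispenses with Lemma~\ref{lem:weighted-hellinger} entirely. Two trade-offs deserve note. First, as you yourself flag, the critical-radius transfer to $\{\sqrt{g/g_0}-1\}$ needs the elements of $\Gcal$ bounded away from zero (or a redefinition of $\delta_{n,\Gcal}$ as the radius of the transformed class); the paper's mixture construction buys exactly this lower bound for free, which is its whole purpose. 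Second, your treatment of misspecification leaves an \emph{empirical} average $\tfrac12\E_n[\log(g_0/g^\dagger)]$ to be controlled by Bernstein, which requires $\log(g_0/g^\dagger)$ to be bounded \emph{above}, i.e., $g^\dagger$ bounded below and $g_0$ bounded above; neither follows from Assumption~\ref{ass:lower-bound} together with $\|g\|_\infty\le C_\Gcal$. The paper sidesteps this by keeping the misspecification term at the population level (bounding $\E[\kl(g_0\mid(g^\dagger+g_0)/2)]$ by $\tfrac12\E[\kl(g_0\mid g^\dagger)]$) and pushing the empirical optimality comparison into $\E_n[\Lcal_{\hat f}-\Lcal_{f^\dagger}]\le 0$. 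With those mild additional boundedness conditions in place, your chain --- basic inequality, square-root trick, exact variance identity $\|f_{\hat g}\|_2^2=D$, localized deviation $O(\delta_n\sqrt D+\delta_n^2)$, and the final AM--GM absorption --- closes correctly and yields the claimed $O(\delta_n^2+\epsilon_\Gcal)$ bound.
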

\begin{proof}
 We work with the transformed function class $\Fcal = \bigg\{ \sqrt{\frac{g+ g_0}{2g_0}}\biggl\vert g\in \Gcal \bigg\}$, and define $\Lcal_f = -\log f(x)$ for $f\in \Fcal$. Note that $\Fcal$ is a function class whose element maps $\Xcal\times \Zcal$ to $\RR$. We define the population version of localized Rademacher complexity for function class $\Fcal^* := \operatorname{star}((\Fcal - f^*) \cup \{0\})$. By Assumption \ref{ass:lower-bound} and 1-boundedness of $\Gcal$,  $\Fcal$ and $\Fcal^*$ are bounded by a constant $b:= \frac{C_0+C}{2C_0}$ in $\|\cdot\|_\infty$.
 The critical radius $\delta_{n,\Fcal}$ of function class $\Fcal^*$ is any solution such that \begin{align*}
    \delta^2 \geq c/n \text{ and }\bar{R}_n (\delta; \Fcal^*)\leq \delta^2/b.
\end{align*}
Such critical radius can be easily calculated for a large number of function classes. For example, we can use $$
\frac{64}{\sqrt{n}}\int_{\delta^2/2b}^{\delta}\sqrt{\log N_n(t, \Bcal(\delta,\Fcal^*))}dt \leq \frac{\delta^2}{b}
$$
to calculate $\delta_{n,\Fcal}$, where $\Bcal(\delta,\Fcal^*) := \{f\in\Fcal^*\mid \|f\|_2\leq \delta\}$, $N_n$ is the empirical covering number conditioned on $\{(x_i,z_i)\}_{i\in[n]}$.
For a cost function $\Lcal : \mathbb{R} \rightarrow \mathbb{R}$, we define $\Lcal_f(x,z):= \Lcal(f(x,z))$.
We make the following definition. \begin{definition}\label{def:strong-convexity}
        We say $\Lcal_f$ is $\gamma$-strongly convexity at $f^*$ if $$
\E_{z\sim g_0(z), x\sim g_0(x\mid z)}\big[\Lcal_f(x,z) - \Lcal_{f^*}(x,z) - \partial\Lcal_{f^*}(x,z) (f - f^*)(x,z) \big] \geq \frac{\gamma}{2} \|f - f^*\|^2_2
$$ for all $f\in\Fcal$.
\end{definition} 
Note that for any $f\in\Fcal$ we have 
and $|\log f (x) - \log f'(x)|\leq \sqrt{2}|f(x) - f'(x)|$ since $\|f\|_\infty \geq 1/\sqrt{2}$. By the definition of Hellinger distance, we have  $$
\| f - f^*\|_2^2 = \E_{z\sim g_0(z)}\bigg[H^2\bigg(\frac{g+ g_0}{2}| g_0\bigg)\bigg],
$$
and since $H^2(g_1\mid g_2) \leq 2 \kl(f_1\mid f_2)$, we have $\|f- f^*\|_2^2 \leq \PP(\Lcal_f - \Lcal_{f^*})$, thus $\Lcal$ is $2$-strongly convex at $f^*$.
Utilizing strong convexity and Lemma \ref{lem:local-concen}, we have the following inequality holds with probability $1- \exp(n\delta_{n,\Fcal}^2)$: \begin{align*}
    \|\hat{f} - f_0\|_2^2 &\leq 2\E_{z\sim g_0(z), x\sim g_0(x|z)}[\Lcal_{\hat{f}}(x,z) - \Lcal_{f_0}(x,z)]\\
    &= 2\E_{z\sim g_0(z), x\sim g_0(x|z)}[\Lcal_{\hat{f}}(x,z) - \Lcal_{f^{\dagger}}(x,z)] + 2\E_{z\sim g_0(z), }[\kl(g_0(\cdot|z)\mid(g^{\dagger}+g_0)/2(\cdot|z) )]\\
    &\leq 2(\E_n - \E)[\Lcal_{\hat{f}}(x,z) - \Lcal_{f^\dagger}(x,z)] + 2\E_n[\Lcal_{\hat{f}}(x,z) - \Lcal_{f^\dagger}(x,z)]\\
    &\quad+ \E_{z\sim g_0(z) }[\kl(g_0(\cdot|z)\mid g^{\dagger}(\cdot|z) )] \\
    &\leq O(\delta_{n,\Fcal}\|\hat{f} - f^{\dagger}\|_2 +\delta_{n,\Fcal}^2 ) +   \E_{z\sim g_0(z) }[\kl(g_0(\cdot|z)\mid g^{\dagger}(\cdot|z) )]\\
    &\leq O(\delta_{n,\Fcal}\|\hat{f} - f_0\|_2 +\delta_{n,\Fcal}\|f_0 - f^{\dagger}\|_2 +\delta_{n,\Fcal}^2 ) +   \E_{z\sim g_0(z) }[\kl(g_0(\cdot|z)\mid g^{\dagger}(\cdot|z) )],
\end{align*}
here the first inequality comes from strong convexity, the third inequality comes from $\log(\frac{2x}{x+y})\leq \frac{1}{2}\log(\frac{x}{y})$ and the definition of MLE. The forth inequality comes from Lemma \ref{lem:local-concen}. Solve this inequality, and recall that $\|f - h_0\|_2^2 = \E_{z\sim g_0(z)}[H^2((g+g_0)(\cdot|z)/2\mid g_0(\cdot|z))]$, we have \begin{align*}
\E_{z\sim g_0(z)}[H^2(\hat{g}(\cdot|z)\mid g_0(\cdot|z))]&\leq O(\delta_{n,\Fcal}^2 + \delta_{n,\Fcal}\|f_0 - f^{\dagger}\|_2+ \E_{z\sim g_0(z)}[\kl(g_0(\cdot|z)\mid g^{\dagger}(\cdot|z) )])\\
&\leq O(\delta_{n,\Fcal}^2 + \delta_{n,\Fcal}\E_{z\sim g_0(z)}[\kl(g_0(\cdot|z), g^{\dagger}(\cdot|z) )]^{1/2}\\
&\quad+\E_{z\sim g_0(z)}[\kl(g_0(\cdot|z)\mid g^{\dagger}(\cdot|z) )])\\
&\leq O(\delta_{n,\Fcal}^2+\E_{z\sim g_0(z)}[\kl(g_0(\cdot|z)\mid g^{\dagger}(\cdot|z) )]),
\end{align*}
here the first inequality comes from Lemma \ref{lem:weighted-hellinger}, the second inequality comes from Lemma \ref{lem:h-kl}.
Thus we conclude the proof of Theorem \ref{thm:mle-conv-rate}.
\end{proof}
We provide the following corollary, which would help characterize the $L_1$ and $L_2$ error of $\Tcal h$ introduced by MLE.
\begin{corollary}\label{cor: T-mle}
Under Assumption \ref{ass:lower-bound}, for all $h' \in\Hcal - \Hcal$, we have $\|(\hat \Tcal-\Tcal)h'\|_1 \leq \{1/c_0+1\} \|h'\|_2 \cdot (\delta_{n,\Hcal}^2 + \epsilon_\Gcal)^{1/2}$ and $\|(\hat \Tcal-\Tcal)h'\|_2 \leq (C_{2,4}C)^{1/2}\cdot (C/c_0 +1)\|h'\|_{2}\cdot (\delta_{n,\Gcal}^2 + \epsilon_\Gcal)^{1/4}$
     with probability at least $1 - c_2\exp(c_3 n\delta_{n,\Gcal}^2 )$.
\end{corollary}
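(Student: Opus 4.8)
The plan is to derive both inequalities deterministically from the Hellinger-metric guarantee of Theorem~\ref{thm:mle-conv-rate}, so that the probability bound $1-c_2\exp(c_3 n\delta_{n,\Gcal}^2)$ is inherited verbatim from that theorem (after folding the factor $c_0/(C+c_0)$ into $c_3$). I would first condition on the event of Theorem~\ref{thm:mle-conv-rate}, on which $\E_{z\sim g_0}\!\big[H^2(\hat g(\cdot\mid z)\mid g_0(\cdot\mid z))\big]\le \delta_{n,\Gcal}^2+\epsilon_\Gcal$, with $H^2(p\mid q):=\int(\sqrt p-\sqrt q)^2 d\mu$ the squared Hellinger; on this event everything is deterministic, and since the bounds below scale with $h'$ only through $\|h'\|_2$ (or $\|h'\|_4$) they hold simultaneously over all $h'\in\Hcal-\Hcal$ with no further union bound. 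For a fixed $h'$, I would write $(\hat\Tcal-\Tcal)h'(z)=\int h'(x)\big(\hat g(x\mid z)-g_0(x\mid z)\big)d\mu(x)$ and use the factorization $\hat g-g_0=(\sqrt{\hat g}-\sqrt{g_0})(\sqrt{\hat g}+\sqrt{g_0})$.

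For the $L_1$ bound, Cauchy--Schwarz in $x$ (pointwise in $z$) gives $|(\hat\Tcal-\Tcal)h'(z)|\le\big(\int h'(x)^2(\sqrt{\hat g}+\sqrt{g_0})^2 d\mu\big)^{1/2}H(\hat g(\cdot\mid z)\mid g_0(\cdot\mid z))$. Assumption~\ref{ass:lower-bound} ($g_0\ge C_0$) and $\|g\|_\infty\le C_\Gcal$ for all $g\in\Gcal$ yield $\hat g\le (C_\Gcal/C_0)g_0$ pointwise, hence $(\sqrt{\hat g}+\sqrt{g_0})^2\le\{C_\Gcal/C_0+1\}^2 g_0$, so the first factor is $\le\{C_\Gcal/C_0+1\}\,\E[h'(X)^2\mid Z=z]^{1/2}$. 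Taking $\E_{z\sim g_0}[\cdot]$, applying Cauchy--Schwarz in $z$, and using $\E_z\E[h'(X)^2\mid Z]=\|h'\|_2^2$ together with Theorem~\ref{thm:mle-conv-rate} gives $\|(\hat\Tcal-\Tcal)h'\|_1\le\{C_\Gcal/C_0+1\}\|h'\|_2(\delta_{n,\Gcal}^2+\epsilon_\Gcal)^{1/2}$.

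For the $L_2$ bound, I would square the pointwise inequality to get $|(\hat\Tcal-\Tcal)h'(z)|^2\le A(z)\,H^2(\hat g(\cdot\mid z)\mid g_0(\cdot\mid z))$ with $A(z):=\int h'(x)^2(\sqrt{\hat g}+\sqrt{g_0})^2 d\mu\le\{C_\Gcal/C_0+1\}^2\E[h'(X)^2\mid Z=z]$, and then apply Cauchy--Schwarz in $z$: $\|(\hat\Tcal-\Tcal)h'\|_2^2\le(\E_z A(z)^2)^{1/2}(\E_z H^4)^{1/2}$. Since $H^2\le 2$ always, $H^4\le 2H^2$, so $\E_z H^4\le 2(\delta_{n,\Gcal}^2+\epsilon_\Gcal)$ by Theorem~\ref{thm:mle-conv-rate}; and conditional Jensen gives $\E_z(\E[h'^2\mid Z])^2\le\E[h'(X)^4]$, which the $L_2$--$L_4$ norm equivalence of $\Hcal-\Hcal$ (with constant $\cmc$) bounds by $\cmc\|h'\|_2^4$, so $\E_z A(z)^2\le\{C_\Gcal/C_0+1\}^4\cmc\|h'\|_2^4$. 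Combining and taking a square root yields $\|(\hat\Tcal-\Tcal)h'\|_2\le O\big((\cmc)^{1/2}(C_\Gcal/C_0+1)\big)\|h'\|_2(\delta_{n,\Gcal}^2+\epsilon_\Gcal)^{1/4}$, which matches (up to the crude bounding of the constant) the stated form.

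The step I expect to be the crux is converting the (squared) Hellinger guarantee into these \emph{weighted} $L_1$/$L_2$ bounds on $(\hat\Tcal-\Tcal)h'$: the factorization $\hat g-g_0=(\sqrt{\hat g}-\sqrt{g_0})(\sqrt{\hat g}+\sqrt{g_0})$ exposes a Hellinger term, while Assumption~\ref{ass:lower-bound} together with the uniform boundedness of $\Gcal$ is precisely what is needed to dominate the ``$+$'' factor by a multiple of $g_0$ so that $\|h'\|_2$ (an $L_2(X)$-norm weighted by $g_0$) appears on the right. The additional wrinkle in the $L_2$ case is that a naive estimate would leave $\|h'\|_4$ on the right, which is why the $2\!\to\!4$ hypercontractivity constant $\cmc$ of $\Hcal-\Hcal$ enters, and the trivial bound $H^2\le 2$ is what lets us trade $\E_z H^4$ for $\E_z H^2$ so that Theorem~\ref{thm:mle-conv-rate} applies. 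Everything else is Cauchy--Schwarz and constant bookkeeping.
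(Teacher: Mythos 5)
Your $L_1$ argument is the same as the paper's: the paper also factors the density difference to expose a Hellinger factor (it writes $\tfrac{\hat g}{g_0}-1=\sqrt{\tfrac{\hat g}{g_0}}\bigl(\sqrt{\tfrac{\hat g}{g_0}}-1\bigr)+\bigl(\sqrt{\tfrac{\hat g}{g_0}}-1\bigr)$, which is your $\hat g-g_0=(\sqrt{\hat g}-\sqrt{g_0})(\sqrt{\hat g}+\sqrt{g_0})$ after dividing by $g_0$), applies Cauchy--Schwarz, uses the density lower bound to turn the weighted $L_2$ factor into $\|h'\|_2$, and invokes the MLE Hellinger rate; whether Cauchy--Schwarz is done jointly over $(x,z)$ or iterated in $x$ then $z$ is immaterial, and the uniformity over $h'$ is handled identically (only the MLE event is probabilistic).

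Your $L_2$ argument is genuinely different from the paper's. The paper simply interpolates, $\|(\hat\Tcal-\Tcal)h'\|_2^2\le \|(\hat\Tcal-\Tcal)h'\|_\infty\,\|(\hat\Tcal-\Tcal)h'\|_1$, and plugs in the $L_1$ bound, which yields a bound proportional to $\|h'\|_2^{1/2}(\delta_{n,\Gcal}^2+\epsilon_\Gcal)^{1/4}$ with a crude sup-norm constant --- note this does \emph{not} actually match the stated corollary, which has $\|h'\|_2$ to the first power and the constant $C_{2,4}$ that the paper's proof never uses. Your route (square the pointwise Cauchy--Schwarz bound, use $H^2\le 2$ to trade $\E_z H^4$ for $\E_z H^2$, and invoke a $2\!\to\!4$ moment equivalence on $\Hcal-\Hcal$) reproduces exactly the stated form and is almost certainly what the authors intended given the $C_{2,4}$ in the statement. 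The trade-off: the paper's interpolation needs no extra hypothesis but gives the weaker $\|h'\|_2^{1/2}$ dependence, whereas your argument requires the $L_2$--$L_4$ norm equivalence of $\Hcal-\Hcal$, which is nowhere stated as an assumption in this paper; if you use it, you should state it explicitly as a hypothesis of the corollary. For the purposes of the downstream theorems this is harmless, since only the $L_1$ part of this corollary is actually invoked.
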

\begin{proof}
We first prove the bound for $L_1$ error $\|(\hat \Tcal-\Tcal)h'\|_1$. We have the following inequality:
\begin{align*}
    \|(\hat \Tcal-\Tcal)h'\|_1 &= \EE_{z \sim g_0(z)}\left[|\EE_{x \sim g_0(x|z)}\left [\frac{\hat g(x|z)}{g_0(x|z)}h'(x) - h'(x) \right] | \right]\\
    &\leq \EE_{z \sim g_0(z),x \sim g_0(x|z)}\left[ |\frac{\hat g(x|z)}{g_0(x|z)}h'(x) - h'(x)  | \right]\\
    &\leq \EE_{z \sim g_0(z),x \sim g_0(x|z)}\left[\sqrt{\frac{\hat g(x|z)}{g_0(x|z)}}|h'(x)| |\sqrt{\frac{\hat g(x|z)}{g_0(x|z)}} - 1 |\right]\\
    &\quad + \EE_{z \sim g_0(z),x \sim g_0(x|z)}\left[|h'(x)||\sqrt{\frac{\hat g(x|z)}{g_0(x|z)}} - 1  |\right] \\
    &\leq \EE[\frac{\hat g(x|z)}{g_0(x|z)}h'^2(x) ]^{1/2} \times \EE[2H^2(\hat g(\cdot|z)\mid g_0(\cdot |z))  ] \\
    &\quad +\EE[\frac{\hat g(x|z)}{g(^{\star}(x|z)}h'^2(x) ]^{1/2} \cdot \EE[2H^2(\hat g(\cdot|z)\mid g_0(\cdot |z))  ]^{1/2} \tag{CS inequality} \\
    &\leq  2\{1/c_0+1\} \EE[h^2(x) ]^{1/2} \cdot \EE[2H^2(\hat g(\cdot|z)\mid g_0(\cdot |z))  ]^{1/2}\\
    & = \{1/c_0+1\} \|h'\|_2 \cdot (\delta_{n,\Gcal}^2 + \epsilon_\Gcal)^{1/2}.
\end{align*}
where the second inequality comes from Assumption \ref{ass:lower-bound}.
Next, we prove the upper bound for $L_2$ error $\|(\hat \Tcal-\Tcal)h'\|_2$. We have 
\begin{align*}
    \|(\hat \Tcal-\Tcal)h'\|_2 &= \big\{\E[|(\Tcal - \hat{\Tcal})h'|^2]\big\}^{1/2}\\
    &\leq 2C_Y \|(\Tcal - \hat{\Tcal})h'\|_1^{1/2}\\
    &\leq 2C_Y \delta_{n,\Hcal}^{1/2} \|h'\|^{1/2}.
\end{align*}
and we conclude the proof.
\end{proof}
\subsection{Convergence rate of $\chi^2$-MLE}
For the convergence rate of $\chi^2$-MLE, we present the following theorem: \begin{theorem}[Convergence rate for $\chi^2$-MLE, Corollary 14.24 of \cite{wainwright2019high} ]
For $\hat{g}$ generated by \ref{eq:chi2-mle-est}, we have 
    $$\EE_{z \sim g_0(z)}\left[\{\int |\hat g(x|z)-g_0(x|z)| \mathrm{d}\mu(x)\}^2\right]=O\bigg(\delta_{n,\Gcal}^2+\inf_{g\in \Gcal}\EE_{z \sim g_0(z)}\left[\{\int | g(x|z)-g_0(x|z)| \mathrm{d}\mu(x)\}^2\right]\bigg)$$
with probability at least $1 - c_1 \exp(c_2n\delta_{n,\Gcal}^2)$.
\end{theorem}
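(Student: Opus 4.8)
The plan is to recognize the $\chi^2$-MLE \eqref{eq:chi2-mle-est} as a nonparametric least-squares (projection) estimator and then read off the rate from the standard localized-complexity oracle inequality for square-loss regression; the statement is in essence a specialization of Corollary~14.24 of \cite{wainwright2019high}, so the bulk of the argument is the reduction plus a translation of the resulting $L_2$-bound into the $L_1$-type quantity in the claim. Concretely, introduce the semi-norm $\|f\|_\mu^2 := \E_{z\sim g_0}\big[\int_\Xcal f^2(x|z)\,d\mu(x)\big]$ and inner product $\langle f_1,f_2\rangle_\mu := \E_{z\sim g_0}\big[\int_\Xcal f_1(x|z)f_2(x|z)\,d\mu(x)\big]$, and note the identity $\E[g(X|Z)]=\langle g,g_0\rangle_\mu$. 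Then the population objective of \eqref{eq:chi2-mle-est} is $\tfrac12\|g\|_\mu^2-\langle g,g_0\rangle_\mu=\tfrac12\|g-g_0\|_\mu^2-\tfrac12\|g_0\|_\mu^2$, so $g_0$ is its unconstrained minimizer and the population excess risk of any $g$ equals $\tfrac12\|g-g_0\|_\mu^2$; equivalently the loss $\psi_g(X,Z):=\tfrac12\int_\Xcal g^2(x|Z)\,d\mu(x)-g(X|Z)$ is $1$-strongly convex in $\|\cdot\|_\mu$.

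Given this, I would run the usual basic-inequality-plus-localization argument. Let $g^\dagger\in\argmin_{g\in\Gcal}\|g-g_0\|_\mu$; minimality of $\hat g$ gives $\E_n[\psi_{\hat g}]\le\E_n[\psi_{g^\dagger}]$, so the excess risk is bounded by the centered empirical process $(\E_n-\E)[\psi_{g^\dagger}-\psi_{\hat g}]$ plus the approximation term $\tfrac12\|g^\dagger-g_0\|_\mu^2$. The empirical process splits into a pointwise-evaluation piece $(\E_n-\E)[(g-g_0)(X|Z)]$ with conditional mean zero, and a $Z$-only piece $\tfrac12(\E_n-\E)\big[\int_\Xcal(g^2-g_0^2)(x|Z)\,d\mu(x)\big]$. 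Under the boundedness conditions inherited from Theorem~\ref{thm: mle-nonmisspec-noniter} ($\|g\|_\infty\le C_\Gcal$ on $\Gcal$, and $\Xcal$ compact so $\mu(\Xcal)<\infty$) both pieces are uniformly bounded, and a contraction argument for the $O(C_\Gcal)$-Lipschitz map $g\mapsto g^2$ shows that the localized Rademacher complexity of $\{\psi_g-\psi_{g^\dagger}:g\in\Gcal\}$ is, up to absolute constants, controlled by the critical radius $\delta_{n,\Gcal}$ of $\operatorname{star}(\Gcal-\Gcal)$. Feeding this into a one-step localization bound of the type of Lemma~\ref{lem:local-concen} yields, with probability at least $1-c_1\exp(c_2 n\delta_{n,\Gcal}^2)$,
\[
\|\hat g-g_0\|_\mu^2 \;\le\; c\Big(\delta_{n,\Gcal}^2+\inf_{g\in\Gcal}\|g-g_0\|_\mu^2\Big).
\]
Finally, since $\mu(\Xcal)<\infty$, Cauchy--Schwarz gives $\big(\int_\Xcal|f(x|z)|\,d\mu(x)\big)^2\le\mu(\Xcal)\int_\Xcal f^2(x|z)\,d\mu(x)$ for each $z$; averaging over $z\sim g_0$ turns the display into the stated bound on $\E_{z\sim g_0}\big[(\int_\Xcal|\hat g-g_0|\,d\mu)^2\big]$, with the infimum term handled identically.

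The main obstacle I anticipate is metric reconciliation: the empirical process $(\E_n-\E)[(g-g_0)(X|Z)]$ is naturally indexed by the pointwise $L_2$ metric of the joint law of $(X,Z)$, whereas strong convexity and the target bound are stated in the $\|\cdot\|_\mu$ semi-norm (an $L^2(\mu)$-in-$x$ norm). Making the localization go through therefore requires relating these two semi-norms — e.g.\ controlling the conditional variance $\operatorname{Var}((g-g_0)(X|Z)\mid Z)$ by $\|g-g_0\|_\mu$ up to boundedness constants (using $\|g\|_\infty\le C_\Gcal$ and that elements of $\Gcal$ integrate to one) — and carrying the misspecification term $\|g^\dagger-g_0\|_\mu$ through without picking up extra factors. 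The extra $Z$-only empirical process from the $\int g^2\,d\mu$ term is comparatively routine bookkeeping absorbed by the same contraction/critical-radius estimate.
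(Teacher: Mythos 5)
Your route is genuinely different from the paper's, and in several respects more careful. The paper's proof is a two-line citation: it invokes Theorem 13.13 of \citet{wainwright2019high} to assert the oracle inequality directly in the empirical measure over $z$, and then Theorem~\ref{thm:localization} (Theorem 14.1 of \citet{wainwright2019high}) to transfer the empirical quantity to the population expectation $\E_{z\sim g_0}[\cdot]$. It never identifies the norm in which the $\chi^2$ objective is strongly convex, nor does it explain why a regression oracle inequality applies to the functional $\int|\hat g-g_0|\,d\mu$. You instead reprove the underlying result (essentially Corollary 14.24 of \citet{wainwright2019high}) from scratch: the identity $\E[g(X|Z)]=\langle g,g_0\rangle_\mu$, the resulting exact quadratic excess risk $\tfrac12\|g-g_0\|_\mu^2$, the basic inequality at $g^\dagger$, and localization of the two empirical-process pieces. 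Your flagged ``metric reconciliation'' is exactly the right thing to check and is resolved by $\E[(g-g^\dagger)^2(X|Z)]\le \|g_0\|_\infty\,\|g-g^\dagger\|_\mu^2$, using the uniform bound $C_\Gcal$ on densities in $\Gcal$ (and realizability or a direct bound on $\|g_0\|_\infty$). What your approach buys is a self-contained argument that makes the strong-convexity structure explicit; what the paper's buys is brevity, at the cost of hiding the norm in which the cited theorems actually operate.

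One caveat: your final Cauchy--Schwarz step recovers the left-hand side of the stated bound but not its right-hand side as literally written. You end with the approximation error $\inf_{g\in\Gcal}\|g-g_0\|_\mu^2$ (an $L_2(\mu)$-in-$x$ quantity), whereas the theorem states the infimum of $\E_{z\sim g_0}\big[\big(\int|g-g_0|\,d\mu\big)^2\big]$, which by the same Cauchy--Schwarz is \emph{smaller} (by a factor up to $\mu(\Xcal)$) than the $L_2(\mu)$ error; the inequality therefore points the wrong way for ``handling the infimum term identically.'' This is not a defect of your argument so much as of the theorem's phrasing: the strong convexity of the $\chi^2$ objective lives in $\|\cdot\|_\mu$, Wainwright's Corollary 14.24 is stated with the $L_2(\mu)$ approximation error, and everywhere the paper uses this result (Theorem~\ref{thm: chi2mle-misspec}, Corollary~\ref{cor: T-chi2_mle}) the misspecification $\epsilon_\Gcal$ is defined as $\E\big[\int(g^\dagger-g_0)^2\,d\mu\big]$, i.e.\ exactly the quantity you produce. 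So your proof establishes the version of the statement that is actually needed and actually true; just state the oracle term in $\|\cdot\|_\mu$ rather than claiming the $L_1$-squared form.
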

\begin{proof}
    By Theorem 13.13 of \citet{wainwright2019high}, we have $$
    \E_n\left[\{\int |\hat g(x|z)-g_0(x|z)| \mathrm{d}\mu(x)\}^2\right] = O\bigg(\delta_{n,\Gcal}^2+\inf \E_n\left[\{\int | g(x|z)-g_0(x|z)| \mathrm{d}\mu(x)\}^2\right]\bigg)
    $$ holds with probability at least $1- \exp(c_1 n\delta^2_{n,\Gcal})$. By Theorem \ref{thm:localization}, we have $$
    (\E_n - \E)\left[\{\int | g(x|z)-g_0(x|z)| \mathrm{d}\mu(x)\}^2\right] \leq O(\delta_{n,\Fcal}^2)
    $$
    holds for all $g\in\Gcal$ with probability at least $1 - c_2\exp(c_3 n\delta_{n,\Gcal}^2 )$, and the proof is done.
    and the proof is done.
\end{proof}
We provide the following corollary, which would help characterize the error introduced by $\chi^2$-MLE.
\begin{corollary}\label{cor: T-chi2_mle}
    With $\chi^2$-MLE, we have the following inequality holds for all $h\in\Hcal$ with probability at least $1 - c_2\exp(c_3 n\delta_{n,\Gcal}^2 )$: \begin{align*}
    \|(\Tcal - \hat{\Tcal}) h\|_2^2 \leq (\delta_{n,\Gcal}^2+\epsilon_\Gcal) \|h\|_\infty^2 .
\end{align*}
\end{corollary}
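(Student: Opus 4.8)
The plan is to reduce the claim to the $\chi^2$-MLE convergence theorem stated just above by a simple pointwise argument, with the only extra ingredient a Cauchy--Schwarz step that converts an $L_1$-type density discrepancy into the $L_2$-type one appearing in the misspecification hypothesis. First I would unfold the definitions of $\Tcal$ and $\hat{\Tcal}$: for any fixed $z$,
\begin{align*}
(\Tcal - \hat{\Tcal})h(z) = \int_{\Xcal} h(x)\big(g_0(x\mid z) - \hat{g}(x\mid z)\big)\, d\mu(x),
\end{align*}
and then bound $|h(x)|$ by $\|h\|_\infty$ inside the integral to get the pointwise inequality $|(\Tcal - \hat{\Tcal})h(z)| \le \|h\|_\infty \int_{\Xcal}|g_0(x\mid z) - \hat{g}(x\mid z)|\, d\mu(x)$. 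Squaring and taking expectation over $Z\sim g_0(z)$ gives
\begin{align*}
\|(\Tcal - \hat{\Tcal})h\|_2^2 \le \|h\|_\infty^2\, \E_{z\sim g_0(z)}\Big[\big(\textstyle\int_{\Xcal}|\hat{g}(x\mid z) - g_0(x\mid z)|\, d\mu(x)\big)^2\Big].
\end{align*}
The same computation in fact holds for any bounded function, hence also for differences $h-h'$ with $h,h'\in\Hcal$, which is how the corollary is used later.

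Next I would apply the $\chi^2$-MLE convergence theorem directly, since its left-hand side is exactly the expectation appearing above: on an event of probability at least $1 - c_1\exp(c_2 n\delta_{n,\Gcal}^2)$,
\begin{align*}
\E_{z\sim g_0(z)}\Big[\big(\textstyle\int_{\Xcal}|\hat{g}(x\mid z) - g_0(x\mid z)|\, d\mu(x)\big)^2\Big] = O\Big(\delta_{n,\Gcal}^2 + \inf_{g\in\Gcal}\E_{z\sim g_0(z)}\big[\big(\textstyle\int_{\Xcal}|g(x\mid z) - g_0(x\mid z)|\, d\mu(x)\big)^2\big]\Big).
\end{align*}
To finish I only need to bound the approximation term by $\epsilon_\Gcal$. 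Choosing $g = g^{\dagger}$ and using Cauchy--Schwarz in the $x$-integral, $(\int_{\Xcal}|g^{\dagger} - g_0|\,d\mu)^2 \le \mu(\Xcal)\int_{\Xcal}(g^{\dagger} - g_0)^2\,d\mu$, so the infimum is at most $O\big(\E[\int_{\Xcal}(g^{\dagger}(x\mid Z) - g_0(x\mid Z))^2\,d\mu(x)]\big) = O(\epsilon_\Gcal)$ by the misspecification hypothesis, with the fixed constant $\mu(\Xcal)$ (finite since $\Xcal$ is compact) absorbed into the $O(\cdot)$. Combining the three estimates yields $\|(\Tcal - \hat{\Tcal})h\|_2^2 = O\big((\delta_{n,\Gcal}^2 + \epsilon_\Gcal)\|h\|_\infty^2\big)$ on the same event, which is the assertion.

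There is no real obstacle here: the argument is a two-line pointwise bound followed by an invocation of the already-proved $\chi^2$-MLE rate. The only mildly delicate point is matching the $L_1$-type density discrepancy that arises naturally from the pointwise bound with the $L_2$-type discrepancy used to quantify misspecification, which costs exactly one Cauchy--Schwarz step and the constant $\mu(\Xcal)$. In the realizable regime ($\epsilon_\Gcal = 0$, as invoked in Theorem~\ref{thm:chi2-mle-result}) the approximation term vanishes outright, so in that case the estimate reduces immediately to the $\chi^2$-MLE rate.
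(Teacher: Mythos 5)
Your proof is correct and matches the paper's (extremely terse) argument: the paper's proof is exactly the pointwise bound $|(\Tcal-\hat{\Tcal})h(z)|\le\|h\|_\infty\int_\Xcal|\hat g(x|z)-g_0(x|z)|\,d\mu(x)$ followed by the $\chi^2$-MLE rate for $\E_{z}\big[\big(\int_\Xcal|\hat g-g_0|\,d\mu\big)^2\big]$. You in fact fill in a step the paper elides, namely the Cauchy--Schwarz conversion (with constant $\mu(\Xcal)$) from the $L_1$-type approximation term in the $\chi^2$-MLE theorem to the $L_2$-type misspecification quantity $\epsilon_\Gcal$.
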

\begin{proof}
By
    \begin{align*}
    \|(\Tcal - \hat{\Tcal}) h\|_2^2 &= \E_{z\sim g_0(z)}\bigg[\bigg(\int_{\Xcal} \{\hat{g}(x|z) - g_0(x| z)\}h(x)d\mu(x)\bigg)^2\bigg]\\
    &\leq (\delta_{n,\Gcal}^2+\epsilon_\Gcal) \|h\|_\infty^2 .
\end{align*}
We conclude the proof.
\end{proof}
\section{Auxiliary Lemma}
We introduce the following lemma, which gives a uniform convergence rate of loss error.
\begin{lemma}[Localized Concentration, \cite{foster2019orthogonal}]\label{lem:local-concen}
For any $f \in \mathcal{F}:=\times_{i=1}^d \mathcal{F}_i$ be a multivalued outcome function, that is almost surely absolutely bounded by a constant. Let $\ell(Z ; f(X)) \in \mathbb{R}$ be a loss function that is $O(1)$-Lipschitz in $f(X)$, with respect to the $\ell_2$ norm. Let $\delta_n=\Omega\left(\sqrt{\frac{d \log \log (n)+\log (1 / \zeta)}{n}}\right)$ be an upper bound on the critical radius of $\operatorname{star}\left(\mathcal{F}_i\right)$ for $i \in[d]$. Then for any fixed $h_0 \in \mathcal{F}$, w.p. $1-\zeta$ :
$$
\forall f \in \mathcal{F}:\left|\left(\mathbb{E}_n-\mathbb{E}\right)\left[\ell(Z ; f(X))-\ell\left(Z ; h_0(X)\right)\right]\right|=O\left(d \delta_n \sum_{i=1}^d\left\|f_i-f_{i, 0}\right\|_{2}+d \delta_n^2\right)
$$
If the loss is linear in $f(X)$, i.e. $\ell\left(Z ; f(X)+f^{\prime}(X)\right)=\ell(Z ; f(X))+\ell\left(Z ; f^{\prime}(X)\right)$ and $\ell(Z ; \alpha f(X))=$ $\alpha \ell(Z ; f(X))$ for any scalar $\alpha$, then it suffices that we take $\delta_n=\Omega\left(\sqrt{\frac{\log (1 / \zeta)}{n}}\right)$ that upper bounds the critical radius of $\operatorname{star}\left(\mathcal{F}_i\right)$ for $i \in[d]$.
\end{lemma}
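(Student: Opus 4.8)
The plan is to obtain Lemma~\ref{lem:local-concen} from the standard one-sided localization inequality for a star-shaped function class, after reducing the $d$-valued, Lipschitz-loss setting to $d$ scalar problems; I will also indicate why that localization inequality holds, so the argument can be made self-contained. First I would exploit that $\ell(Z;\cdot)$ is $O(1)$-Lipschitz in the $\ell_2$ norm on $\mathbb{R}^d$: after symmetrizing the centered empirical process of $f\mapsto\ell(Z;f(X))-\ell(Z;h_0(X))$, a contraction argument (scalar contraction applied after a telescoping decomposition over the $d$ coordinates, or a vector-valued contraction directly) strips off the loss and reduces everything to a sum over $i\in[d]$ of Rademacher-type quantities attached to the scalar classes $\operatorname{star}(\mathcal{F}_i-f_{i,0})$; this is precisely where the factors of $d$ in the target bound originate. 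It then suffices to establish, for each fixed $i$ and each scalar $O(1)$-Lipschitz $\phi$ with $\phi(0)=0$, the uniform estimate $|(\mathbb{E}_n-\mathbb{E})[\phi\circ(f_i-f_{i,0})]|=O(\delta_n\|f_i-f_{i,0}\|_2+\delta_n^2)$ over $f_i\in\operatorname{star}(\mathcal{F}_i)$, and then sum over $i$ and union-bound over the $d$ coordinates.

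For a single coordinate I would run the classical peeling argument. At a fixed radius $r\ge\delta_n$, symmetrization plus contraction bounds $\mathbb{E}\,\sup_{\|f_i-f_{i,0}\|_2\le r}|(\mathbb{E}_n-\mathbb{E})[\phi\circ(f_i-f_{i,0})]|$ by $O(\bar R_n(r;\operatorname{star}(\mathcal{F}_i-f_{i,0})))$, and the star-shaped property---which makes $r\mapsto\bar R_n(r;\cdot)/r$ nonincreasing---yields $\bar R_n(r;\cdot)\le r\,\delta_n$ for all $r\ge\delta_n$ because $\delta_n$ upper-bounds the critical radius. Talagrand's concentration inequality upgrades this expectation bound into a high-probability bound of order $r\delta_n+\delta_n^2+r\sqrt{\log(1/\zeta')/n}$ valid on the ball of radius $r$. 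Peeling over the geometric shells $\{2^{k-1}\delta_n<\|f_i-f_{i,0}\|_2\le 2^k\delta_n\}$ for $k=1,\dots,K=O(\log n)$---a finite range, since the functions are uniformly bounded so one never needs to resolve scales below $\delta_n$---converts the per-ball bounds into the claimed uniform coordinate estimate; the union over the $K=O(\log n)$ shells and the $d$ coordinates requires roughly $n\delta_n^2\gtrsim\log(dK/\zeta)$, which is comfortably ensured by the stated hypothesis $n\delta_n^2\gtrsim d\log\log n+\log(1/\zeta)$, the generous $d$-dependence absorbing what is lost to contraction.

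In the linear case contraction is unnecessary, since the loss difference equals $\ell(Z;(f-h_0)(X))$ and therefore already inherits the star-shape and critical radius of $\mathcal{F}_i-f_{i,0}$ directly; one can then apply the single-class localization bound once per coordinate without the peeling step that the Lipschitz reduction forces, which removes the $\log\log n$ term and leaves only the requirement $n\delta_n^2\gtrsim\log(1/\zeta)$. I expect the delicate part of a full write-up to be bookkeeping rather than a single new idea: carrying the $d$-dimensional Lipschitz structure through symmetrization, contraction, and peeling while preserving the stated dependence on $d$ and on $\log\log n$, and combining the per-shell Talagrand bounds with the two union bounds (over shells and over coordinates) so that the final estimate reads $O(d\delta_n\sum_{i=1}^d\|f_i-f_{i,0}\|_2+d\delta_n^2)$ rather than something weaker.
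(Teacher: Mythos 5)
The paper gives no proof of this lemma: it is imported verbatim from \cite{foster2019orthogonal} and the stated ``proof'' is only a pointer to that reference. Your sketch is a correct reconstruction of the standard argument used there --- symmetrization plus Lipschitz (vector-valued) contraction to reduce to the coordinate classes $\operatorname{star}(\Fcal_i - f_{i,0})$, then the peeling/Talagrand localization bound for star-shaped classes (essentially Theorem 14.20 of \citet{wainwright2019high}, which the paper separately quotes as Lemma~\ref{lem:2cond}), with the linear case avoiding contraction and hence the $\log\log n$ term --- so it takes essentially the same route as the cited source.
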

\begin{proof}
    For a detailed proof, please refer to \cite{foster2019orthogonal}.
\end{proof}
The following lemma is useful when proving the convergence rate of Hellinger distance.
\begin{lemma}[Lemma 4.1 in \cite{van1993hellinger}]\label{lem:weighted-hellinger}
    For two density functions $g_1$ and $g_2$, define $g_u = u g_1 + (1-u) g_2$, then we have $$
    \frac{1}{4(1-u)}H^2(g_1\mid g_u) \leq H^2(g_1\mid g_2) \leq \frac{1}{(1-u)^2} H^2(g_1\mid g_u)
    $$
    holds for all $u\in(0,1)$
\end{lemma}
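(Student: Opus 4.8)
The proof splits into the two inequalities, which are of very different difficulty. \textbf{The left-hand bound} is in fact a consequence of the sharper estimate $H^2(g_1\mid g_u)\le (1-u)\,H^2(g_1\mid g_2)$, which I would prove pointwise. Writing $a=\sqrt{g_1}$, $b=\sqrt{g_2}$, $c=\sqrt{g_u}=\sqrt{u a^2+(1-u)b^2}$, observe that $a=\bigl\|(\sqrt{u}\,a,\sqrt{1-u}\,a)\bigr\|_2$ and $c=\bigl\|(\sqrt{u}\,a,\sqrt{1-u}\,b)\bigr\|_2$, so the reverse triangle inequality in $\RR^2$ gives $|a-c|\le\bigl\|(0,\sqrt{1-u}(a-b))\bigr\|_2=\sqrt{1-u}\,|a-b|$; squaring and integrating over $\mu$ yields $H^2(g_1\mid g_u)\le(1-u)H^2(g_1\mid g_2)$, which trivially implies $\tfrac1{4(1-u)}H^2(g_1\mid g_u)\le H^2(g_1\mid g_2)$. (Equivalently, one can use the concavity of the Hellinger affinity $q\mapsto\int\sqrt{g_1 q}\,d\mu$, which gives $\int\sqrt{g_1 g_u}\ge u+(1-u)\int\sqrt{g_1 g_2}$.)

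\textbf{The right-hand bound} is the substantive part. From $c^2-a^2=g_u-g_1=(1-u)(g_2-g_1)=(1-u)(b^2-a^2)$ and factoring $c^2-a^2=(c-a)(c+a)$, we get the pointwise identity
\[
c-a=(1-u)(b-a)\,\frac{a+b}{a+c},
\qquad\text{hence}\qquad
(a-c)^2=(1-u)^2(a-b)^2\Bigl(\frac{a+b}{a+c}\Bigr)^2 .
\]
The goal becomes $\int(a-c)^2\,d\mu\ge(1-u)^2\int(a-b)^2\,d\mu$, i.e. $\int(a-b)^2\bigl[(\tfrac{a+b}{a+c})^2-1\bigr]\,d\mu\ge0$. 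Since $c^2-b^2=u(a^2-b^2)$, one has $c\le b\iff a\le b$; thus on $\{g_2\ge g_1\}$ the bracket is $\ge0$, while on $\{g_2<g_1\}$ one has $b<c<a$, so $\tfrac{a+b}{a+c}\ge\tfrac{a+b}{2a}\ge\tfrac12$ and the bracket is $\ge-\tfrac34$. The plan is then to show the nonnegative contribution from $\{g_2\ge g_1\}$ dominates the negative contribution from $\{g_2<g_1\}$: writing $(\tfrac{a+b}{a+c})^2-1=\tfrac{(b-c)(2a+b+c)}{(a+c)^2}$ and substituting $c-b=\tfrac{u(a-b)(a+b)}{b+c}$, each contribution becomes $\pm\int\frac{u|a-b|^3(a+b)(2a+b+c)}{(b+c)(a+c)^2}\,d\mu$ over its region, and one compares the two via the mass–balance identity $\int(g_1-g_2)\,d\mu=0$, i.e. $\int_{\{a>b\}}(a^2-b^2)\,d\mu=\int_{\{b\ge a\}}(b^2-a^2)\,d\mu$, which is what couples the two regions.

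\textbf{Main obstacle.} If one discards the global structure and uses only the crude pointwise bound $(\tfrac{a+b}{a+c})^2\ge\tfrac14$, one obtains merely $H^2(g_1\mid g_2)\le\tfrac{4}{(1-u)^2}H^2(g_1\mid g_u)$; eliminating the spurious factor $4$ to reach the stated constant $\tfrac1{(1-u)^2}$ is precisely where the over-estimate on $\{g_2\ge g_1\}$ must be balanced against the under-estimate on $\{g_2<g_1\}$, using the normalization $\int g_1=\int g_2=1$. I expect this region-by-region comparison of the two weighted integrals — in particular controlling the weight $\tfrac{(a+b)(2a+b+c)}{(b+c)(a+c)^2}$ on each side of $\{a=b\}$ — to be the delicate step, and the configuration $g_1\perp g_2$ to be the tight case against which the constant should be calibrated.
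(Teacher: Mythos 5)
The paper does not actually prove this lemma --- it defers entirely to Lemma 4.1 of \cite{van1993hellinger} --- so your attempt has to be judged on its own. Your treatment of the left-hand inequality is correct and in fact sharper than needed: the identification of $a$ and $c=\sqrt{ua^2+(1-u)b^2}$ as Euclidean norms of two vectors differing only in one coordinate does give $|a-c|\le\sqrt{1-u}\,|a-b|$ pointwise, hence $H^2(g_1\mid g_u)\le(1-u)H^2(g_1\mid g_2)$, and the affinity-concavity alternative is equally valid. Likewise, your pointwise identity $(a-c)^2=(1-u)^2(a-b)^2\bigl(\tfrac{a+b}{a+c}\bigr)^2$ together with the crude bound $\tfrac{a+b}{a+c}\ge\tfrac12$ on $\{g_2<g_1\}$ already yields, after integration, the valid estimate $H^2(g_1\mid g_2)\le\tfrac{4}{(1-u)^2}H^2(g_1\mid g_u)$ for every $u$.

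The genuine gap is the right-hand inequality with the stated constant $\tfrac{1}{(1-u)^2}$: what you give is a plan, not a proof (you say so yourself), and the ``mass--balance'' step you defer cannot be carried out for all $u\in(0,1)$ because the claimed inequality is false for small $u$. Take $g_1\perp g_2$ (disjoint supports). Then $H^2(g_1\mid g_2)=2$ while $H^2(g_1\mid g_u)=(1-\sqrt u)^2+(1-u)=2(1-\sqrt u)$, so the ratio is $\tfrac{1}{1-\sqrt u}$, which exceeds $\tfrac{1}{(1-u)^2}$ exactly when $u<(3-\sqrt5)/2\approx0.38$; concretely at $u=1/9$ the claim would require $2\le\tfrac{81}{64}\cdot\tfrac43=\tfrac{27}{16}$, which is false. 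This also overturns your calibration heuristic: the singular configuration is not the tight case but a counterexample for small $u$, whereas the constant $\tfrac{1}{(1-u)^2}$ is the local limit ($g_2\to g_1$, where both sides behave like multiples of $\int(g_1-g_2)^2/g_1$) and is only attained asymptotically there; at $u=1/2$ the singular ratio is $2+\sqrt2<4$. In other words, the lemma as printed is an over-statement of van de Geer's result, and no completion of your region-by-region comparison can rescue it in this generality. The only place the paper invokes it is with $u=1/2$ in the proof of Theorem \ref{thm:mle-conv-rate}, where constants are absorbed into $O(\cdot)$; there the bound you \emph{can} prove, $H^2(g_1\mid g_2)\le\tfrac{4}{(1-u)^2}H^2(g_1\mid g_u)$ (constant $16$ at $u=1/2$), already suffices, so the honest fix is to state that weaker constant (or restrict to $u=1/2$ with a genuine proof) rather than pursue the delicate balancing step.
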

\begin{proof}
    For a detailed proof, see Lemma 4.1 in \cite{van1993hellinger}.
\end{proof}
\begin{lemma}[Lemma 5 in \citet{bennett2023source}]\label{lem:iter-tikh-bias}
    If $h_0$ is the minimum $L_2$-norm solution to the linear inverse problem and satisfies the $\beta$-source condition, then the solution to the $t$-th iterate of Tikhonov
regularization $h_{m,*}$, defined in Equation \eqref{eq:popu-iter}, with $h_{0,*}= 0$ , satisfies that
$$
\|h_{m,*} - h_0\|^2 \leq \|w_0\|^2 \alpha^{\min\{\beta, 2t\}}, \qquad\|\Tcal h_{m,*} - \Tcal h_0\|^2 \leq \|w_0\|^2 \alpha^{\min\{\beta+1, 2t\}}.
$$
\end{lemma}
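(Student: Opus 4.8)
The plan is the classical spectral-calculus (filter-function) argument for iterated Tikhonov regularization. First I would obtain a closed form for the iterate $h_{m,*}$ in \eqref{eq:popu-iter}. Since $r_0=\E[Y\mid Z]$ is the conditional mean, $\E[(Y-\Tcal h(Z))^2]=\|r_0-\Tcal h\|_2^2+\mathrm{const}$, so $h_{m,*}$ minimizes $\|r_0-\Tcal h\|_2^2+\alpha\|h-h_{m-1,*}\|_2^2$ and its first-order (normal) equation is $(\Tcal^*\Tcal+\alpha I)h_{m,*}=\Tcal^* r_0+\alpha h_{m-1,*}$. Using $\Tcal h_0=r_0$, hence $\Tcal^* r_0=\Tcal^*\Tcal h_0$, and subtracting $(\Tcal^*\Tcal+\alpha I)h_0$ from both sides yields the error recursion $e_m:=h_{m,*}-h_0=\alpha(\Tcal^*\Tcal+\alpha I)^{-1}e_{m-1}$. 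With the initialization $h_{0,*}=0$, i.e. $e_0=-h_0$, unrolling gives $e_m=-\alpha^m(\Tcal^*\Tcal+\alpha I)^{-m}h_0$; the algorithm's convention $\hat h_{-1}=0$ only prepends one more factor $\alpha(\Tcal^*\Tcal+\alpha I)^{-1}$, which strengthens the bound since $\alpha\le 1$.

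Next I would plug in the $\beta$-source condition $h_0=(\Tcal^*\Tcal)^{\beta/2}w_0$ and write $A:=\Tcal^*\Tcal$, a bounded positive self-adjoint operator with $\|A\|\le 1$ by Jensen's inequality on $\Tcal$. Then $e_m=-\alpha^m(A+\alpha I)^{-m}A^{\beta/2}w_0$, so by the spectral theorem $\|e_m\|_2\le\|w_0\|_2\cdot\sup_{\lambda\in[0,1]}\phi_m(\lambda)$ with $\phi_m(\lambda)=\alpha^m\lambda^{\beta/2}/(\lambda+\alpha)^m$. I would bound this scalar supremum by splitting into two cases: if $\beta\le 2m$, write $\phi_m(\lambda)=\alpha^{\beta/2}\cdot\big(\lambda/(\lambda+\alpha)\big)^{\beta/2}\cdot\big(\alpha/(\lambda+\alpha)\big)^{m-\beta/2}\le\alpha^{\beta/2}$, each factor lying in $[0,1]$; if $\beta>2m$, use $(\lambda+\alpha)^m\ge\lambda^m$ and $\lambda\le 1$ to get $\phi_m(\lambda)\le\alpha^m\lambda^{\beta/2-m}\le\alpha^m$. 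Hence $\sup_\lambda\phi_m(\lambda)\le\alpha^{\min\{\beta/2,m\}}$ and $\|h_{m,*}-h_0\|_2^2\le\|w_0\|_2^2\,\alpha^{\min\{\beta,2m\}}$. For the weak-norm bound, $\|\Tcal e_m\|_2^2=\langle e_m,Ae_m\rangle=\|A^{1/2}e_m\|_2^2$ and $A^{1/2}e_m=-\alpha^m(A+\alpha I)^{-m}A^{(\beta+1)/2}w_0$, so the same two-case estimate with $(\beta+1)/2$ in place of $\beta/2$ gives $\|\Tcal h_{m,*}-\Tcal h_0\|_2^2\le\|w_0\|_2^2\,\alpha^{\min\{\beta+1,2m\}}$.

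The one point needing care is that $h_{m,*}$ is defined as a minimizer over the hypothesis class $\Hcal$, not over all of $L_2(X)$, so the normal equation above literally describes the \emph{unconstrained} Tikhonov iterate; I would reconcile this using Assumption \ref{ass: realizability} ($h_0\in\Hcal$), noting the argument really only needs the iteration to stay inside $\overline{\operatorname{Span}(\Hcal)}$. A related subtlety is the null-space component: since $h_0$ is the minimum-norm solution, $h_0\in\overline{\Rcal(\Tcal^*)}=\ker(\Tcal)^{\perp}$, one may take $w_0\in\ker(\Tcal)^{\perp}$, and the recursion $h_{m,*}=(A+\alpha I)^{-1}(Ah_0+\alpha h_{m-1,*})$ preserves $\ker(\Tcal)^{\perp}$ because $(A+\alpha I)^{-1}$ commutes with $A$; starting from $h_{0,*}=0$, every iterate stays in $\ker(\Tcal)^{\perp}$, so no spurious component contaminates $e_m$. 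Beyond this bookkeeping the only real work is making the two-case scalar bound tight enough to land the exponents $\min\{\beta,2m\}$ and $\min\{\beta+1,2m\}$ exactly rather than something weaker; that is precisely what the split at $\beta=2m$ accomplishes, and it is the main (though mild) obstacle.
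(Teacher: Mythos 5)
Your proof is correct and is the standard spectral filter-function argument for iterated Tikhonov regularization; the paper itself does not prove this lemma but simply defers to Lemma 5 of \citet{bennett2023source}, whose argument proceeds by exactly the route you describe (the error recursion $e_m=\alpha(\Tcal^*\Tcal+\alpha I)^{-1}e_{m-1}$, the source condition, and the two-case bound on $\sup_{\lambda\in[0,1]}\alpha^m\lambda^{\beta/2}/(\lambda+\alpha)^m$ split at $\beta=2m$). The one caveat you flag yourself---that the normal equation characterizes the \emph{unconstrained} minimizer while $h_{m,*}$ is defined as a minimizer over $\Hcal$---is equally present in the paper's (cited) treatment and is resolved the same way, via realizability, so it is not an additional gap in your argument.
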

\begin{proof}
    For a detailed proof, see Lemma 5 in \cite{bennett2023source}.
\end{proof}
The following lemma upper-bounds the bias introduced by Tikhonov regularization.
\begin{lemma}\label{lem:aux-1}
    For $$
    x^2 \leq c_1 + c_2 x^{\gamma_1} + c_3 x^{\gamma_2},
    $$
    where $c_1, c_2>0$, $0\leq\gamma\leq 1$, we have $x \leq 3 \max\big\{\sqrt{c_1}, c_2^{1/(2-\gamma_1)},c_3^{1/(2-\gamma_2)}\big\}$. 
\end{lemma}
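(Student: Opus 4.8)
The plan is to argue by contradiction after normalizing by the dominant scale $M := \max\{\sqrt{c_1},\, c_2^{1/(2-\gamma_1)},\, c_3^{1/(2-\gamma_2)}\}$; here I read the hypothesis $0\le\gamma\le1$ as the requirement $\gamma_1,\gamma_2\in[0,1]$, so that each exponent $2-\gamma_i$ lies in $[1,2]$ and $M$ is well-defined. The first step is to record the three elementary consequences of the definition of $M$, namely $c_1\le M^2$, $c_2\le M^{2-\gamma_1}$, $c_3\le M^{2-\gamma_2}$ (monotonicity of $t\mapsto t^{1/(2-\gamma_i)}$ for $2-\gamma_i>0$), and substitute them into the hypothesis to obtain $x^2 \le M^2 + M^{2-\gamma_1}x^{\gamma_1} + M^{2-\gamma_2}x^{\gamma_2}$.

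Next I would assume for contradiction that $x>3M$; since $M\ge0$ this forces $x>0$ and $M<x/3$. Using $2-\gamma_i>0$ and monotonicity of $t\mapsto t^{2-\gamma_i}$ on $[0,\infty)$, we get $M^{2-\gamma_i}\le (x/3)^{2-\gamma_i} = x^{2-\gamma_i}/3^{2-\gamma_i}$, and multiplying by $x^{\gamma_i}\ge0$ gives $M^{2-\gamma_i}x^{\gamma_i}\le x^2/3^{2-\gamma_i}\le x^2/3$, where the last step uses $3^{2-\gamma_i}\ge3^1=3$, valid precisely because $\gamma_i\le1$. Combined with $M^2\le x^2/9$, summing the three bounds yields $x^2 \le (1/9+1/3+1/3)\,x^2 = (7/9)\,x^2 < x^2$, a contradiction. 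Hence $x\le 3M$, which is the claimed inequality.

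Finally I would dispose of the degenerate cases: if $M=0$ then $c_1=c_2=c_3=0$, so $x^2\le0$ forces $x=0\le3M$; and when only two of the three terms appear (which is the form actually invoked, e.g.\ in the proof of Theorem \ref{thm: mle-nonmisspec-noniter} right after \eqref{eq: useful}), one simply drops the missing summand and repeats the computation, the relevant constant now being $1/9+1/3<1$ or $1/3+1/3<1$.

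I do not expect any real obstacle here; the only point needing care is checking that the leading constant $3$ is large enough to absorb all three contributions uniformly in $\gamma_1,\gamma_2\in[0,1]$, which rests on the elementary fact $3^{2-\gamma_i}\ge3$. A constant smaller than $3$ would fail as $\gamma_i\to1$, so the value $3$ appearing in the statement is essentially the best one can get with this bookkeeping.
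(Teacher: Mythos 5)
Your proof is correct and rests on the same core estimate as the paper's: each of $c_1$, $c_2 x^{\gamma_1}$, $c_3 x^{\gamma_2}$ is bounded by a fraction ($1/9$ or $1/3$) of $x^2$ once $x > 3M$, using $c_i \le M^{2-\gamma_i}$ (resp.\ $c_1\le M^2$) together with $3^{2-\gamma_i}\ge 3$ for $\gamma_i\le 1$. The only difference is organizational: the paper first invokes convexity of $x^2 - c_2x^{\gamma_1}-c_3x^{\gamma_2}-c_1$ (which has negative intercept) to reduce to checking nonnegativity at the single point $x_0 = 3M$ and then argues by cases on which quantity attains the max, whereas you run a direct contradiction valid for every $x>3M$ that treats all three cases uniformly through $M$ — the same computation, written out slightly more completely.
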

\begin{proof}
   Since $x^2 - c_2 x^{\gamma_1} -c_3 x^{\gamma_2} -c_1$ is a convex function with negative intercept, we only need to prove that for $x_0 = 3 \max\big\{\sqrt{c_1}, c_2^{1/(2-\gamma_1)},c_3^{1/(2-\gamma_2)}\big\}$, we have $x_0^2 - c_2 x_0^{\gamma_1} -c_3 x_0^{\gamma_2} -c_1 \geq 0$. For simplicity, we consider $\sqrt{c_1} \geq \max\{ c_2^{1/(2-\gamma_1)},c_3^{1/(2-\gamma_2)}\}$, and  we have $$
   x_0^2 = 9c_1\geq c_1 + c_2\cdot3^{\gamma_1}  c_1^{\gamma_1/2}+c_3\cdot3^{\gamma_2}  c_1^{\gamma_2/2} = c_1+ c_2 x_0^{\gamma_1} + c_3 x_0^{\gamma_2},
   $$
   similarly we have the same result when $ c_2^{1/(2-\gamma_1)}\geq \max\{ \sqrt{c_1},c_3^{1/(2-\gamma_2)}\}$ or $c_3^{1/(2-\gamma_2)}\geq \max\{ \sqrt{c_1},c_2^{1/(2-\gamma_1)}\}$,
   and we conclude the proof.
\end{proof}
Next, we introduce the following lemma that gives a uniform convergence rate for function class $\Fcal$, which is adapted from \citet{wainwright2019high}.
\begin{lemma}[Theorem 14.20 in \cite{wainwright2019high}.]\label{lem:2cond}
    Suppose we have a $1$-uniformly bounded function class $\Fcal$ that is star-shaped around a population minimizer $f^*$. Let $\delta_n\geq \frac{c}{n}$ be the solution to the inequality $$
    \bar{R}_n(\delta;\Fcal^*)\leq\delta^2.
    $$
    Suppose the loss function $\Lcal_f$ is $L$-Lipschitz, then with probability at least $1 - c_1 \exp(-c_2 n\delta_{n,\Fcal}^2/b)$, either of the following events holds for all $f\in\Fcal$: \begin{itemize}
        \item[(1)] $\|f - f^*\|_2\leq \delta_n$;
        \item[(2)] $|\mathbb{P}_n (\Lcal_f - \Lcal_{f^*} ) -\mathbb{P} (\Lcal_f - \Lcal_{f^*} ) | \leq 10L\delta_n\| f - f^*\|_2$.
    \end{itemize}
\end{lemma}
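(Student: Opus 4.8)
Since Lemma~\ref{lem:2cond} is the classical localized-complexity bound of \citet{wainwright2019high} (their Theorem~14.20), the plan is to reproduce the standard \emph{peeling} argument behind it. First I would introduce the centered loss increments $g_f := \Lcal_f - \Lcal_{f^*}$; the $L$-Lipschitz hypothesis gives the pointwise bound $|g_f| \le L\,|f - f^*|$, so $\{g_f : f \in \Fcal\}$ is an $L$-Lipschitz image of $\Fcal - f^* \subseteq \Fcal^*$. Using symmetrization and then the Ledoux--Talagrand contraction inequality, I would bound the expected supremum of $|(\PP_n - \PP)g_f|$ over any $L_2$-ball $\{\|f-f^*\|_2 \le t\}$ by a constant multiple of $L\,\bar R_n(t;\Fcal^*)$, with the matching variance estimate $\E[g_f^2] \le L^2\|f-f^*\|_2^2$.

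The key structural input I would exploit next is that $\Fcal^*$ is star-shaped around $0$: by the usual rescaling argument ($\lambda f \in \Fcal^*$ for $\lambda\in[0,1]$ whenever $f \in \Fcal^*$, which scales the Rademacher sum by $\lambda$), the map $t \mapsto \bar R_n(t;\Fcal^*)/t$ is non-increasing. Combined with the defining property $\bar R_n(\delta_n;\Fcal^*) \le \delta_n^2$ of the critical radius, this yields the self-bounding estimate $\bar R_n(t;\Fcal^*) \le t\,\delta_n$ for every $t \ge \delta_n$.

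Then I would peel the region $\{f : \|f-f^*\|_2 > \delta_n\}$ into the shells $S_k := \{f : 2^{k-1}\delta_n < \|f-f^*\|_2 \le 2^k\delta_n\}$, $k=1,\dots,K$, with only $K = O(\log(1/\delta_n))$ shells since $\Fcal$ is uniformly bounded. On each shell I would apply Talagrand's concentration inequality to $Z_k := \sup_{f\in S_k}|(\PP_n-\PP)g_f|$: its mean is $\lesssim L\,\bar R_n(2^k\delta_n;\Fcal^*) \le L\,2^k\delta_n^2$, its variance proxy is $\lesssim L^2(2^k\delta_n)^2$, and it is uniformly bounded by $\lesssim Lb$, so $Z_k \lesssim L\,2^k\delta_n^2 + L\,2^k\delta_n\sqrt{u/n} + Lb\,u/n$ with probability $1-e^{-u}$. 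Choosing $u \asymp n\delta_n^2/b$ makes every term $\lesssim L\,\delta_n\cdot(2^k\delta_n)$, hence $Z_k \le 10L\,\delta_n\|f-f^*\|_2$ on $S_k$ after adjusting constants; a union bound over the $K$ shells, absorbing $K$ into the exponent via $\delta_n^2 \ge c/n$, leaves a failure probability of $c_1\exp(-c_2 n\delta_n^2/b)$. On the complementary event every $f$ with $\|f-f^*\|_2 > \delta_n$ satisfies alternative (2), while $\|f-f^*\|_2 \le \delta_n$ is precisely alternative (1).

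I expect the main obstacle to be the self-bounding bookkeeping: turning ``$\delta_n$ solves $\bar R_n(\delta)\le\delta^2$'' into the uniform-over-scales bound $\bar R_n(t)\le t\delta_n$ via star-shapedness, and then calibrating the Talagrand deviation terms shell-by-shell to exactly this bound so that the per-shell failure probabilities sum to the stated expression. Tracking the uniform bound $b$ through Talagrand's inequality---it enters the final exponent as $n\delta_n^2/b$---is the only other point that needs care.
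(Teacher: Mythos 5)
Your proposal is correct and follows exactly the argument that the paper itself defers to: Lemma~\ref{lem:2cond} is imported verbatim from Theorem~14.20 of \citet{wainwright2019high} with no proof given in the paper, and your reconstruction (Lipschitz contraction to reduce to $\bar R_n(\cdot;\Fcal^*)$, star-shapedness to get $\bar R_n(t)\le t\delta_n$ for $t\ge\delta_n$, dyadic peeling with Talagrand concentration on each shell, union bound) is the standard proof of that theorem. The only point worth a second look is the final union bound: summing $O(\log(1/\delta_n))$ per-shell failure probabilities of size $\exp(-c\,n\delta_n^2/b)$ is not absorbed by $\delta_n^2\ge c/n$ alone; one either lets the deviation parameter grow with the shell index so the probabilities sum geometrically, or imposes the mild extra condition $n\delta_n^2\gtrsim\log\log(1/\delta_n)$ as in Theorem~14.1 of the same reference.
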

The following lemma is a classical result for localization and uniform laws. 
\begin{theorem}[Theorem 14.1 of \cite{wainwright2019high}.]\label{thm:localization}
Given a star-shaped and $b$-uniformly bounded function class $\mathcal{F}$, let $\delta_n$ be any positive solution of the inequality
$$
\bar{\mathcal{R}}_n(\delta ; \mathcal{F}) \leq \frac{\delta^2}{b} .
$$

Then for any $t \geq \delta_n$, we have
$$
\left|\|f\|_n^2-\|f\|_2^2\right| \leq \frac{1}{2}\|f\|_2^2+\frac{t^2}{2} \quad \text { for all } f \in \mathcal{F}
$$
with probability at least $1-c_1 e^{-c_2 \frac{n \delta_n^2}{b^2}}$. If in addition $n \delta_n^2 \geq \frac{2}{c_2} \log \left(4 \log \left(1 / \delta_n\right)\right)$, then
$$
\left|\|f\|_n-\|f\|_2\right| \leq c_0 \delta_n \quad \text { for all } f \in \mathcal{F}
$$
with probability at least $1-c_1^{\prime} e^{-c_2^{\prime} \frac{n_0^2}{b^2}}$.
\end{theorem}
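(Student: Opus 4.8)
The plan is to prove the first display by a \emph{peeling} argument over dyadic shells and then derive the second from it. Partition $\mathcal{F}$ into the core ball $S_0 = \{f \in \mathcal{F}: \|f\|_2 \le t\}$ and the shells $S_k = \{f \in \mathcal{F}: 2^{k-1}t < \|f\|_2 \le 2^k t\}$, $k \ge 1$, and set $r_k := 2^k t$ (with $r_0 := t$). It suffices to show that, for suitable universal constants, with probability at least $1 - c_1 e^{-c_2 n r_k^2/b^2}$ one has $\mathcal{Z}_k := \sup_{f \in S_k}\big|\|f\|_n^2 - \|f\|_2^2\big| \le \tfrac{1}{16}r_k^2$: on the intersection of these events (union bound over $k \ge 0$), any $f \in S_k$ obeys $\big|\|f\|_n^2-\|f\|_2^2\big| \le \tfrac{1}{16}r_k^2 \le \tfrac{1}{16}(t^2 + 4\|f\|_2^2) \le \tfrac12\|f\|_2^2 + \tfrac12 t^2$, using $\|f\|_2^2 > \tfrac14 r_k^2$ on $S_k$ for $k\ge1$ and $r_0^2 = t^2$. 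Since the tail probabilities decay geometrically ($r_k^2 = 4^k t^2$), their sum is at most a constant times $e^{-c_2 n t^2/b^2}$, which yields the stated probability.

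To bound $\mathbb{E}\,\mathcal{Z}_k$, symmetrize: $\mathbb{E}\,\mathcal{Z}_k \le 2\,\mathbb{E}\big[\sup_{f \in S_k}\big|\tfrac1n\sum_i \varepsilon_i f(x_i)^2\big|\big]$ with $\varepsilon_i$ i.i.d.\ Rademacher. Because $\mathcal{F}$ is $b$-uniformly bounded, $u \mapsto u^2$ is $2b$-Lipschitz on $[-b,b]$ and fixes $0$, so the Ledoux--Talagrand contraction inequality reduces this to $4b\,\bar{\mathcal{R}}_n(r_k;\mathcal{F})$. Here star-shapedness of $\mathcal{F}$ is essential: $\delta \mapsto \bar{\mathcal{R}}_n(\delta;\mathcal{F})/\delta$ is non-increasing, so for $r_k \ge t \ge \delta_n$ the critical inequality $\bar{\mathcal{R}}_n(\delta_n;\mathcal{F}) \le \delta_n^2/b$ gives $\bar{\mathcal{R}}_n(r_k;\mathcal{F}) \le (r_k/\delta_n)\,\bar{\mathcal{R}}_n(\delta_n;\mathcal{F}) \le r_k\delta_n/b$, whence $\mathbb{E}\,\mathcal{Z}_k \le 4 r_k \delta_n \le 4 r_k^2$; rescaling $\delta_n$ by a large enough universal constant (absorbed into $c_1,c_2$ and the $\tfrac12$'s of the statement) makes $\mathbb{E}\,\mathcal{Z}_k \le \tfrac{1}{64}r_k^2$.

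For concentration I would apply Talagrand's inequality for suprema of bounded empirical processes: the summands $f(x_i)^2 \in [0,b^2]$ have variance at most $b^2\|f\|_2^2 \le b^2 r_k^2$, so with probability at least $1 - e^{-u}$, $\mathcal{Z}_k \le 2\,\mathbb{E}\,\mathcal{Z}_k + c\,b r_k\sqrt{u/n} + c\,b^2 u/n$. Taking $u = c' n r_k^2/b^2$ — which exceeds a constant since $r_k \ge \delta_n \ge \sqrt{c/n}$ — forces each term below $\tfrac{1}{16}r_k^2$, giving $\mathbb{P}[\mathcal{Z}_k > \tfrac1{16}r_k^2] \le e^{-c' n r_k^2/b^2}$, as required; combining with the peeling step completes the first display. (Replacing Talagrand's bound by the cruder bounded-differences inequality would produce an exponent $\propto n t^4/b^4$ rather than $n t^2/b^2$, so the variance-sensitive bound is what matches the claimed rate.)

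For the second display, apply the first with base scale $t = \delta_n$; the extra hypothesis $n\delta_n^2 \ge \tfrac{2}{c_2}\log(4\log(1/\delta_n))$ is exactly what is needed so that the dyadic union bound — now over the $O(\log(b/\delta_n))$ shells between $\delta_n$ and $b$ — still leaves probability $\ge 1 - c_1' e^{-c_2' n\delta_n^2/b^2}$. On that event, from $\big|\|f\|_n^2-\|f\|_2^2\big| \le \tfrac12\|f\|_2^2 + \tfrac12\delta_n^2$ and the identity $\big|\|f\|_n - \|f\|_2\big| = \big|\|f\|_n^2-\|f\|_2^2\big|/(\|f\|_n+\|f\|_2)$, elementary algebra gives $\|f\|_n \le \sqrt2\,\delta_n$ whenever $\|f\|_2 \le \delta_n$ and the multiplicative two-sided bound $\tfrac12\|f\|_2 \le \|f\|_n \le \tfrac32\|f\|_2$ whenever $\|f\|_2 \ge \sqrt2\,\delta_n$, i.e.\ $\|\cdot\|_n$ and $\|\cdot\|_2$ agree up to additive $O(\delta_n)$ and constant multiplicative factors, which is the content of the second display. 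I expect the main obstacle to be the bookkeeping that prevents the peeling argument from degrading the exponent — combining the monotonicity of $\bar{\mathcal{R}}_n(\delta)/\delta$, the floor $\delta_n \gtrsim n^{-1/2}$, and the variance-dependent concentration so that the geometric sum of per-shell tails collapses to a single $e^{-c_2 n t^2/b^2}$, and in the second part absorbing the $O(\log(b/\delta_n))$ shells down to scale $\delta_n$ via the $\log\log$ condition.
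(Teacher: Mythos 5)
You should note at the outset that the paper does not prove this statement at all: it is imported verbatim as Theorem 14.1 of Wainwright (2019), so the only benchmark is the textbook argument, which reduces everything to the single scale $t$ via the star-shaped rescaling $f \mapsto (t/\|f\|_2)f$ rather than your dyadic peeling. Your proof of the first display follows the standard route (symmetrization, Ledoux--Talagrand contraction with Lipschitz constant $2b$, monotonicity of $\delta \mapsto \bar{\mathcal{R}}_n(\delta;\mathcal{F})/\delta$, and a variance-sensitive concentration inequality), and it is sound except for one piece of bookkeeping that you cannot perform as stated: contraction gives $\mathbb{E}\,\mathcal{Z}_k \le c\,b\,\bar{\mathcal{R}}_n(r_k;\mathcal{F}) \le c\,r_k\delta_n$ with $c$ at least $4$, and when $t=\delta_n$ and $k=0$ this is of order $r_k^2$, not $r_k^2/64$. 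You propose to fix this by ``rescaling $\delta_n$ by a large universal constant,'' but the theorem quantifies over \emph{any} solution $\delta_n$ of the critical inequality and \emph{any} $t\ge\delta_n$, and the constants $\tfrac12$ sit in the conclusion, so there is nothing to absorb them into; the honest fix is either to state the conclusion with unspecified absolute constants (which is all this paper ever uses, since every downstream application hides constants in $O(\cdot)$) or to require $t \ge C\delta_n$. This is a constant-level blemish rather than a conceptual one.

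The genuine gap is in the second display. You derive it from the first display with $t=\delta_n$, but the first display only yields $\bigl|\|f\|_n^2-\|f\|_2^2\bigr| \le \tfrac12\|f\|_2^2+\tfrac12\delta_n^2$, i.e.\ a multiplicative $(1\pm\tfrac12)$ comparison of squared norms above scale $\delta_n$; dividing by $\|f\|_n+\|f\|_2$ gives only $\bigl|\|f\|_n-\|f\|_2\bigr| \lesssim \|f\|_2+\delta_n$, which for $\|f\|_2$ of constant order is nowhere near the claimed additive bound $c_0\delta_n$, so your closing sentence that the multiplicative comparison ``is the content of the second display'' is false. Your own intermediate estimates contain what is actually needed: before weakening $\mathbb{E}\,\mathcal{Z}_k \lesssim r_k\delta_n$ to $r_k^2$, the per-shell bound reads $\bigl|\|f\|_n^2-\|f\|_2^2\bigr| \lesssim \delta_n\|f\|_2+\delta_n^2$, and dividing by $\|f\|_n+\|f\|_2 \ge \|f\|_2$ gives the additive $O(\delta_n)$ bound. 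But then the deviation you must beat in shell $k$ is of order $r_k\delta_n$, whose Talagrand exponent is of order $n\delta_n^2/b^2$ for \emph{every} shell---the tails no longer decay geometrically in $k$---so the union bound over the roughly $\log(1/\delta_n)$ relevant scales costs a logarithmic factor, and this is precisely where the hypothesis $n\delta_n^2 \ge \tfrac{2}{c_2}\log\bigl(4\log(1/\delta_n)\bigr)$ must be invoked. As written, you attach that hypothesis to a union bound whose tails already decayed geometrically, so the second statement is not established by your argument.
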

The next lemma enables us to upper-bound KL divergence by Hellinger distance. 
\begin{lemma}[Example 14.10 in \citet{wainwright2019high}. ]\label{lem:h-kl}
    For any two density function $g_1$ and $g_2$, we have $$
    H^2(g_1\mid g_2) \leq 2 \kl(g_1\mid g_2).
    $$
\end{lemma}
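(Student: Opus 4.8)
The plan is to establish $H^2(g_1\mid g_2)\le 2\,\kl(g_1\mid g_2)$ by a short direct computation together with one application of Jensen's inequality, exactly following Example 14.10 of \citet{wainwright2019high}. If $\kl(g_1\mid g_2)=\infty$ the bound is trivial, so we may assume it is finite; in that case $g_1$ is absolutely continuous with respect to $g_2$, and all integrals below are taken over $\{x:g_1(x)>0\}$, on which $g_2(x)>0$ as well. Here $H^2(g_1\mid g_2)$ denotes $\int_\Xcal(\sqrt{g_1}-\sqrt{g_2})^2\,d\mu$, consistent with its use in the proofs of \pref{thm:mle-conv-rate} and \pref{cor: T-mle}.

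First I would expand the squared Hellinger distance, using that $g_1,g_2$ are densities with $\int g_1\,d\mu=\int g_2\,d\mu=1$:
\[
H^2(g_1\mid g_2)=\int_{\Xcal}\bigl(\sqrt{g_1(x)}-\sqrt{g_2(x)}\bigr)^2 d\mu(x)=2-2\!\int_{\Xcal}\!\sqrt{g_1(x)g_2(x)}\,d\mu(x),
\]
and then rewrite the affinity term as an expectation under $g_1$, namely $\int_{\Xcal}\sqrt{g_1 g_2}\,d\mu=\E_{X\sim g_1}\bigl[\sqrt{g_2(X)/g_1(X)}\bigr]$.

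The only nontrivial step is to lower bound this expectation. Writing $\sqrt{g_2/g_1}=\exp\!\bigl(\tfrac12\log(g_2/g_1)\bigr)$ and applying Jensen's inequality with the convex map $t\mapsto e^t$ gives
\[
\E_{X\sim g_1}\!\bigl[\sqrt{g_2(X)/g_1(X)}\bigr]\ \ge\ \exp\!\Bigl(\tfrac12\,\E_{X\sim g_1}\bigl[\log(g_2(X)/g_1(X))\bigr]\Bigr)\ =\ \exp\!\bigl(-\tfrac12\,\kl(g_1\mid g_2)\bigr)\ \ge\ 1-\tfrac12\,\kl(g_1\mid g_2),
\]
where the last inequality uses $e^{-t}\ge 1-t$. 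Substituting back yields $H^2(g_1\mid g_2)=2-2\int_{\Xcal}\sqrt{g_1 g_2}\,d\mu\le \kl(g_1\mid g_2)\le 2\,\kl(g_1\mid g_2)$, which is the claim. (Equivalently, one may invoke $\log u\le u-1$ at $u=\sqrt{g_2/g_1}$ to get $\E_{X\sim g_1}[\sqrt{g_2/g_1}]\ge 1+\E_{X\sim g_1}[\log\sqrt{g_2/g_1}]=1-\tfrac12\kl(g_1\mid g_2)$ directly, avoiding Jensen.) There is essentially no obstacle here beyond the routine measure-theoretic bookkeeping handled by the reduction to finite $\kl$ at the outset; for completeness I would cite \citet{wainwright2019high} and optionally reproduce the three displays above.
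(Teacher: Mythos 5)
Your proof is correct: the expansion $H^2(g_1\mid g_2)=2-2\int\sqrt{g_1g_2}\,d\mu$ followed by Jensen (or $\log u\le u-1$) is exactly the standard argument of Example 14.10 in \citet{wainwright2019high}, which is all the paper itself invokes (it gives no separate proof beyond the citation). In fact you establish the slightly stronger bound $H^2(g_1\mid g_2)\le \kl(g_1\mid g_2)$, which immediately implies the stated inequality with the factor $2$.
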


\section{Additional Experiment Details}\label{sec:exp-details}
We follow the data-generating process in \cite{kallus2021causal} and \cite{cui2020semiparametric} to generate multi-dimensional variables $U,S,W,Q,A$ with  $A\in\{0,1\}$ as follows: \begin{itemize}
    \item[1.]  $S'\sim \Ncal(0, 0.5 I_{d_S})$, where $I_d$ is a $d$-dimension identity matrix.
    \item[2.] $A|S'\sim \operatorname{Ber}(p(S'))$ where $$
    p(S')  = \frac{1}{1+ \exp(0.125- 0.125\textbf{1}_d^\top S')},
    $$
    where $\textbf{1}_d$ is all-one vector.
    \item[3.] Draw $W^{\prime}, Q^{\prime}, U$ from
$$
W^{\prime}, Q^{\prime}, U \mid A, S^{\prime} \sim \mathcal{N}\left(\left[\begin{array}{c}
\mu_0+\mu_a A+\mu_s S^{\prime} \\
\alpha_0+\alpha_a A+\alpha_s S^{\prime} \\
\kappa_0+\kappa_a A+\kappa_s S^{\prime}
\end{array}\right],\left[\begin{array}{c}
\sigma_w^2, \sigma_{w q}^2, \sigma_{w u}^2 \\
\sigma_{w q}^2, \sigma_q^2, \sigma_{q u}^2 \\
\sigma_{w u}^2, \sigma_{q u}^2, \sigma_u^2
\end{array}\right]\right) .
$$
Here we set the parameters above as $\mu_0=\alpha_0=\kappa_0=0.2 \mathbf{1}_d, \alpha_a=\kappa_a=\mu_s=\alpha_s=\kappa_s=\mathbb{I}_d$, $\sigma_q^2=\sigma_u^2=\sigma_w^2=0.1\left(\mathbb{I}_d+\mathbf{1}_d \mathbf{1}_d^{\top}\right), \sigma_{w u}^2=\sigma_{z u}^2=0.1 \mathbf{1}_d \mathbf{1}_d^{\top}$. Finally, we choose $\sigma_{w q}^2$ and $\mu_a$ to ensure that $W^{\prime} \perp\left(A^{\prime}, Q^{\prime}\right) \mid U, S^{\prime}$, which is a prerequisite of proximal causal inference \citep[Condition 4 in Assumption 1]{kallus2021causal}. To achieve this, note that
\begin{align}\label{eq:exp-1}
\mathbb{E}\left[W^{\prime} \mid U, S^{\prime}, A, Q^{\prime}\right]=\mu_0+\mu_a A+\mu_s S^{\prime}+\Sigma_{w(q, u)} \Sigma_{q, u}^{-1}\left[\begin{array}{c}
Q^{\prime}-\alpha_0-\alpha_a A-\alpha_s S^{\prime} \\
U-\kappa_0-\kappa_a A-\kappa_s S^{\prime}
\end{array}\right]
\end{align}
where
$$
\Sigma_{w(q, u)}=\left(\sigma_{w q}^2, \sigma_{w u}^2\right), \quad \Sigma_{q, u}=\left[\begin{array}{l}
\sigma_q^2, \sigma_{q u}^2 \\
\sigma_{q u}^2, \sigma_u^2
\end{array}\right] .
$$

We simply select $\sigma_{w q}^2$ and $\mu_a$ so that Equation \eqref{eq:exp-1} does not depend on $A$ and $Q^{\prime}$.
\item[4.] Draw $Y$ from
$$
Y \mid X^{\prime}, U, W^{\prime} \sim \mathcal{N}\left(A+\mathbf{1}_d^{\top} S^{\prime}+\mathbf{1}_d^{\top} U+\mathbf{1}_d^{\top} W^{\prime}, 1\right) .
$$
\item[5.] Set $W' = W'_{[0:d_W]}$. Observe $S = g(S')$, $Q = g(Q')$, $W = g(W')$, where $g(\cdot)$ is a reversible function that operates component-wise on each variable.  
\end{itemize}

\end{document}